\pdfoutput=1
\documentclass{article}

\PassOptionsToPackage{numbers,compress}{natbib}
\usepackage[preprint]{neurips_2026}

\usepackage[T1]{fontenc}
\usepackage[utf8]{inputenc}
\usepackage{microtype}
\usepackage{hyperref}
\usepackage{url}
\usepackage{booktabs}
\usepackage{amsmath,amssymb,amsfonts,amsthm,mathtools,bm}
\usepackage{graphicx}
\usepackage{subcaption}
\usepackage{multirow}
\usepackage{enumitem}
\usepackage{xcolor}
\usepackage{algorithm}
\usepackage{algpseudocode}
\usepackage{threeparttable}
\usepackage{float}
\usepackage{tikz}
\usepackage{placeins}
\usetikzlibrary{arrows.meta,positioning,fit}

\hypersetup{
  colorlinks=true,
  breaklinks=true,
  linkcolor=blue,
  citecolor=blue,
  urlcolor=blue
}

\newtheorem{theorem}{Theorem}
\newtheorem{proposition}{Proposition}
\newtheorem{corollary}{Corollary}
\newtheorem{lemma}{Lemma}

\newtheorem{assumption}{Assumption}

\newcommand{\E}{\mathbb{E}}

\newcommand{\KL}{\mathrm{KL}}

\newcommand{\argmin}{\mathop{\mathrm{arg\,min}}}

\title{Online Bayesian Calibration under Gradual and Abrupt System Changes}

\author{
Yang Xu \\
Department of Industrial and Systems Engineering \\
University of Washington \\
Seattle, WA 98195 \\
\And
Chiwoo Park \\
Department of Industrial and Systems Engineering \\
University of Washington \\
Seattle, WA 98195 
}

\begin{document}

\maketitle

\begin{abstract}

Bayesian model calibration is central to digital twins and computer experiments, as it aligns model outputs with field observations by estimating calibration parameters and correcting systematic model bias. Classical Bayesian calibration introduces latent parameters and a discrepancy function to capture bias, but suffers from parameter–discrepancy confounding and is typically formulated as an offline procedure under a stationary data-generating assumption. These limitations are restrictive in modern digital twin applications, where systems evolve over time and may exhibit both gradual drift and abrupt regime shifts. While data assimilation methods enable sequential updates, they generally do not explicitly model systematic bias and are not robust to abrupt changes. We propose Bayesian Recursive Projected Calibration (BRPC), an online Bayesian calibration framework for streaming data under simulator mismatch and nonstationarity. BRPC extends projected calibration to the online setting by separating a discrepancy-free particle update for calibration parameters from a conditional Gaussian process update for discrepancy, preserving identifiability while enabling bias-aware adaptation under gradual system evolution. To handle abrupt changes, we integrate BRPC with restart mechanisms that detect regime shifts and reset the calibration process, yielding a unified approach to mixed nonstationarity. We establish theoretical guarantees for both components, including tracking performance under gradual evolution and false-alarm and detection behavior for restart mechanisms. Empirical results on synthetic and plant-simulation benchmarks demonstrate that BRPC improves calibration accuracy under gradual changes, while the restart-augmented framework further enhances robustness and predictive performance under abrupt regime shifts compared to sliding-window Bayesian calibration and data assimilation baselines.
\end{abstract}

\section{Introduction}
Bayesian calibration is a central problem in digital twins and computer model experiments. Computer simulators provide approximations to physical systems, typically involving unknown or artificial parameters that govern model approximations. These approximations are inherently imperfect, often resulting in systematic discrepancies between model outputs and the true system, even after the unknown parameters are optimized toward minimal discrepancies. Consequently, models must be calibrated against data to estimate unknown parameters and correct model discrepancy. This joint estimation of model parameters and discrepancy is referred to as the model calibration problem.



The classical Bayesian calibration framework relates computer model outputs to field observations by introducing latent model parameters and a model discrepancy term \citep{kennedy2001bayesian}. However, this formulation suffers from parameter–discrepancy confounding, leading to identifiability issues. Projected calibration addresses this limitation by defining a calibration target prior to learning discrepancy, thereby improving identifiability \citep{tuo2015efficient,xie2021bayesian}. Despite this, these approaches are inherently offline and assume a stationary data-generating process, where observations arise from a fixed latent system—an assumption often violated in modern digital twin applications. In such settings, data arrive sequentially and system behavior may evolve over time, with the calibration target drifting or shifting across regimes \citep{ward2021continuous,mesbah2016stochastic}. Developing online Bayesian calibration methods that preserve the identifiability advantages of projected calibration remains an important and relatively underexplored problem.

A straightforward approach to address this issue is sliding-window Bayesian calibration, which periodically applies offline calibration over a fixed data window. However, frequent updates are computationally expensive, while longer intervals can introduce substantial bias under rapid system changes. Filtering and data assimilation methods provide natural online updates \citep{evensen2003ensemble,evensen2009ensemble,ward2021continuous}, but they do not address parameter–discrepancy identifiability or explicitly model systematic simulator bias, and are less effective under abrupt regime shifts. Change-point and concept-drift methods detect nonstationarity \citep{page1954continuous,adams2007bayesian,gama2014survey}, but do not specify how calibration should be performed.

An ideal model calibration approach should satisfy three key requirements: (1) an online calibration framework that incrementally updates both calibration parameters and discrepancy using streaming data, enabling adaptation to both gradual and abrupt system changes; (2) explicit modeling of systematic model bias, calibration parameters, and associated uncertainty; and (3) a restart mechanism that reinitializes the calibration process following abrupt regime changes. The restart mechanism is essential because data accumulated within a local regime can improve stability and predictive accuracy, but may introduce substantial bias once the underlying system undergoes a structural shift.

Toward this goal, we propose Bayesian Recursive Projected Calibration (BRPC), an online calibration framework for nonstationary data streams. BRPC relates simulator outputs to streaming observations through time-varying model parameters and a discrepancy term that captures systematic bias (Sec. \ref{sec:problem_setup} and \ref{sec:brpc}). It adopts a two-stage update strategy: first updating model parameters to align the simulator with evolving system dynamics, and then updating a Gaussian process–based discrepancy model conditional on the parameters. The two-stage estimation improves parameter–discrepancy identifiability by preventing confounding between the two components.

BRPC is integrated with a restart mechanism that detects abrupt changes in data streams and reinitializes the online calibration process by resetting previously accumulated filtering states and minimizing the increase of biases due to before-change data. The restart decision is based on the predictive evidence from the online calibration process, particularly posterior update of the model discrepancy term. Consequently, the quality of restart decisions depends not only on the restart mechanism, but also on the model discrepancy learner. A highly concentrated discrepancy posterior may cause routine within-regime variation to appear anomalous, while an overly adaptive discrepancy model may absorb emerging regime mismatches before the detector can identify a clear change. Studying the interaction between online discrepancy learning and the restart mechanism, we propose two complementary deployment strategies. We support the proposed framework with theoretical analysis of the recursive updates and the resulting predictive evidence used for control (Sec. \ref{sec:reset_control}), and evaluate it on both synthetic data and a plant simulation digital twin benchmark (Sec. \ref{sec:experiments}). 




\section{Related Work}

\noindent\textbf{Offline Bayesian calibration.}
Bayesian calibration with model discrepancy is commonly formulated by augmenting a simulator with a flexible bias term, often modeled as a Gaussian process discrepancy or its variants \citep{kennedy2001bayesian,rasmussen2006gaussian,gu2018scaled,xu2025deep}. While this formulation improves predictive accuracy, it introduces parameter–discrepancy confounding, whereby calibration parameters and discrepancy can jointly explain the same simulator–reality mismatch. Projected calibration and Bayesian projected calibration address this issue by defining the calibration target prior to learning the discrepancy \citep{tuo2015efficient,xie2021bayesian}. BRPC extends this principle to the online setting: the discrepancy is excluded from the parameter update, and is instead learned conditionally after anchoring the simulator through the calibrated parameters.


\noindent\textbf{Sequential calibration, filtering, and data assimilation.}
Sequential Monte Carlo, ensemble Kalman filtering, and related data-assimilation methods provide natural tools for online parameter tracking \citep{evensen2003ensemble,evensen2009ensemble,van2019particle,ruckstuhl2018parameter,kotsuki2018online}. In digital twin calibration, continuous calibration has been studied by combining particle-filter updates with sliding-window Bayesian calibration \citep{ward2021continuous}. These methods are effective under smooth system evolution, but they are ineffective under abrupt changes. Moreover, their updates are typically joint, with parameter adjustments and model mismatch absorbed through the same assimilation process. In contrast, BRPC separates a projected parameter update from a conditional discrepancy update, making their interaction explicit and more amenable to analysis.

\noindent\textbf{Change-point detection and restart control.}
Change-point methods provide principled tools for handling abrupt nonstationarity. CUSUM detects changes through accumulated score or likelihood-ratio statistics \citep{page1954continuous,xie2023window}, while Bayesian online change-point detection (BOCPD) maintains posterior uncertainty over run lengths and yields a mixture over candidate segment hypotheses \citep{adams2007bayesian,saatcci2010gaussian}. Extensions such as restarted BOCPD convert change-point evidence into explicit state-reset mechanisms \citep{alami2020restarted}. In online calibration, these detectors must be coupled with an adaptive within-regime learner, and the evidence they monitor is generated by the current calibration state. BRPC studies this coupling in the presence of simulator discrepancy and projected recursive learning, where restart decisions and discrepancy adaptation interact in nontrivial ways.

BRPC also connects to online learning, concept drift, and continual learning
through its KL-regularized recursive updates and restart-based state reset;
we discuss these connections in Appendix~\ref{app:online_continual_learning}.



\section{Problem Setup}
\label{sec:problem_setup}
We start by describing a general calibration problem under nonstationary systems. For a physical system of interest, let $x$ denote a vector of input variables and $y$ the corresponding scalar system output. The input--output relationship is governed by an unknown function $y = \zeta_t(x)$, which may vary over time $t$. A computer model provides a structured approximation to this system. The simulator output, denoted by $y_s(x, \theta)$, depends on the input $x$ and latent model parameters $\theta$, and is generally subject to systematic bias relative to the physical system. Following \citet{kennedy2001bayesian}, we model the physical response as
\begin{equation}
\zeta_t(x) = y_s(x, \theta_t) + \delta_t(x),
\end{equation}
where $\theta_t$ represents time-varying calibration parameters and $\delta_t(x)$ is a time-varying discrepancy function that captures model bias. 

We formulate the online Bayesian calibration problem as estimating the time-varying calibration parameters $\theta_t$ and discrepancy function $\delta_t(x)$ from streaming observations of the physical system. At each time $t$, we observe a batch of data $B_t=\{(x_{t,k},y_{t,k})\}_{k=1}^{K_t}$, where $K_t$ denotes the number of observations available at time $t$. Each observation is assumed to be a noisy realization of the underlying system response $\zeta_t$, given by
\begin{equation}
y_{t,k} = \zeta_t(x_{t,k}) + \epsilon_{t,k}
= y_s(x_{t,k},\theta_t) + \delta_t(x_{t,k}) + \epsilon_{t,k},
\qquad
\epsilon_{t,k} \sim \mathcal{N}(0,\sigma_t^2),
\label{eq:online_calibration_model}
\end{equation}
where $\epsilon_{t,k}$ represents independent observation noise. For notational simplicity, we denote by $X_t = \{x_{t,k}\}_{k=1}^{K_t}$ and $Y_t = \{y_{t,k}\}_{k=1}^{K_t}$ the collections of inputs and responses at time $t$, respectively.


For generality, we consider that the system may evolve under different modes of nonstationarity. We use the term \emph{drift} to denote gradual system evolution, under which streaming data from consecutive time points remain sufficiently similar so that information accumulated from recent past batches can help stabilize the estimation of $\theta_t$ and $\delta_t$. In contrast, we use the term \emph{regime change} to denote abrupt or structural shifts in the system, such as changes in operating conditions, demand patterns, underlying physical mechanisms, or the simulator--reality relationship. Following a regime change, information from the previous regime may become biased or misleading for the current regime, even if it was informative prior to the shift. A more formal distinction between drift and regime change is provided in Appendix~\ref{app:drift_regime_definitions}. We also consider a mixed mode of drifts and regime changes.

\section{Bayesian Recursive Projected Calibration}
\label{sec:brpc}

Following projected calibration \cite{tuo2015efficient}, we define the calibration parameter $\theta_t$ as the minimizer of the $L_2$ discrepancy between the system response $\zeta_t(x)$ and the simulator output $y_s(x,\theta)$:
\begin{equation}
\theta_t = \Pi(\zeta_t)
\in
\argmin_{\theta \in \Theta}
\int_{\Omega}
\{\zeta_t(x) - y_s(x,\theta)\}^2 \, dF_X(x),
\label{eq:projection_operator}
\end{equation}
where $F_X$ denotes the design distribution. This definition separates the roles of $\theta_t$ and $\delta_t$ by anchoring $\theta_t$ through the simulator fit, so that the discrepancy term $\delta_t(x) = \zeta_t(x) - y_s(x,\theta_t)$ is defined conditionally on $\theta_t$, thereby mitigating parameter--discrepancy confounding.

As the system response $\zeta_t(x)$ evolves over time, both $\theta_t$ and $\delta_t(x)$ become time-varying with unknown dynamics. Online Bayesian tracking of these quantities requires sequentially updating their posterior distributions as new data batches $B_t$ arrive. Guided by the definition in \eqref{eq:projection_operator}, we first update the posterior distribution of $\theta_t$ (Section~\ref{subsec:brpc_theta_update}), and then update the posterior of $\delta_t(x)$ conditionally on the updated $\theta_t$ posterior (Section~\ref{subsec:brpc_delta_update}).

\subsection{Online posterior update of $\theta_t$}
\label{subsec:brpc_theta_update}
Let $q_{t}^{\theta}$ denote the posterior distribution over the latent calibration parameter $\theta_{t}$ at time $t$. The initial posterior $q_{1}^{\theta}$ can be obtained via classical Bayesian projected calibration \cite{xie2021bayesian}. For $t \ge 2$, the posterior $q_t^{\theta}$ evolves from $q_{t-1}^{\theta}$ under unknown dynamics. To model this evolution, we introduce a proposal transition distribution $p(\theta_t \mid \theta_{t-1})$, which serves to propagate prior uncertainty forward in time before incorporating new observations.

Specifically, prior to observing the new data batch $(X_t, Y_t)$, we form a predictive prior via
\begin{equation}
\bar q_{t\mid t-1}^{\theta}(\theta_t)
=
\int
p(\theta_t \mid \theta_{t-1})
\, q_{t-1}^{\theta}(\theta_{t-1})
\, d\theta_{t-1},
\label{eq:theta_predictive_prior}
\end{equation}
which serves as the prior approximation to $q_t^{\theta}$. Upon observing $(X_t, Y_t)$, we update this prior using a likelihood induced by the projected calibration criterion in \eqref{eq:projection_operator}. Specifically, we define a discrepancy-free likelihood
\begin{equation}
p_{\mathrm{proj}}(Y_t \mid X_t, \theta_t)
=
\mathcal{N}
\!\left(
Y_t;\,
y_s(X_t,\theta_t),\,
\Sigma_{\theta,t}
\right),
\label{eq:discrepancy_free_likelihood}
\end{equation}
where $\Sigma_{\theta,t}$ denotes the covariance matrix of the observation noise.

The posterior update is then defined via a KL-regularized optimization:
\begin{equation}
q_t^\theta
=
\argmin_q
\left\{
-\eta_\theta\,
\mathbb{E}_q
\!\left[
\log p_{\mathrm{proj}}(Y_t \mid X_t,\theta_t)
\right]
+
\mathrm{KL}
\!\left(
q(\theta_t)\,\|\,\bar q_{t\mid t-1}^{\theta}(\theta_t)
\right)
\right\},
\label{eq:online_theta_kl_update}
\end{equation}
where $\eta_{\theta} > 0$ is a weighting parameter.

\begin{proposition}[Projected parameter update]
\label{prop:projected_parameter_update}
The KL-regularized update \eqref{eq:online_theta_kl_update} admits the solution
\begin{equation}
q_t^\theta(\theta_t)
\propto
\bar q_{t\mid t-1}^{\theta}(\theta_t)
\,
p_{\mathrm{proj}}(Y_t \mid X_t,\theta_t)^{\eta_\theta}.
\label{eq:theta_tempered_posterior}
\end{equation}
If $q_{t-1}^{\theta}$ is represented by particles
$\{(\theta_{t-1}^{(i)}, w_{t-1}^{(i)})\}_{i=1}^N$, then
$\bar q_{t\mid t-1}^{\theta}$ is approximated by
$\{(\theta_{t\mid t-1}^{(i)}, w_{t\mid t-1}^{(i)})\}_{i=1}^N$ with
\[
\theta_{t\mid t-1}^{(i)} \sim p(\theta_t \mid \theta_{t-1}=\theta_{t-1}^{(i)}),
\qquad
w_{t\mid t-1}^{(i)} = w_{t-1}^{(i)}.
\]
The updated posterior \eqref{eq:theta_tempered_posterior} is then approximated by
$\{(\theta_t^{(i)}, w_t^{(i)})\}_{i=1}^N$, where
\begin{equation}
\theta_t^{(i)} = \theta_{t\mid t-1}^{(i)}, 
\quad
w_t^{(i)}
\propto
w_{t\mid t-1}^{(i)}\,
p_{\mathrm{proj}}
\!\left(Y_t \mid X_t,\theta_{t\mid t-1}^{(i)}\right)^{\eta_\theta},
\qquad
\sum_{i=1}^N w_t^{(i)} = 1.
\label{eq:projected_pf_weights}
\end{equation}
\end{proposition}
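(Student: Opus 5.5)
The first claim is a Gibbs variational principle. I would rewrite the objective in \eqref{eq:online_theta_kl_update} as a single KL divergence. Write $\ell(\theta)=\log p_{\mathrm{proj}}(Y_t\mid X_t,\theta)$ and define
\[
q^\star(\theta)=\frac{\bar q_{t\mid t-1}^{\theta}(\theta)\,e^{\eta_\theta \ell(\theta)}}{Z_t},
\qquad
Z_t=\int \bar q_{t\mid t-1}^{\theta}(\theta)\,p_{\mathrm{proj}}(Y_t\mid X_t,\theta)^{\eta_\theta}\,d\theta .
\]
Since $p_{\mathrm{proj}}$ is a Gaussian density in $Y_t$, it is bounded by $(2\pi)^{-K_t/2}|\Sigma_{\theta,t}|^{-1/2}$. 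Hence $0<Z_t<\infty$, provided the likelihood is positive on a set of positive $\bar q$-mass, which holds because Gaussian densities are strictly positive.

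For any $q\ll \bar q_{t\mid t-1}^{\theta}$ with $\E_q[\ell]$ finite, expand $\KL(q\,\|\,q^\star)=\E_q[\log q-\log\bar q]-\eta_\theta\E_q[\ell]+\log Z_t$. This gives the identity
\[
-\eta_\theta \E_q[\ell]+\KL(q\,\|\,\bar q_{t\mid t-1}^{\theta})=\KL(q\,\|\,q^\star)-\log Z_t .
\]
The right-hand side is minimized exactly when $\KL(q\,\|\,q^\star)=0$. By Gibbs' inequality this happens if and only if $q=q^\star$ almost everywhere, which gives \eqref{eq:theta_tempered_posterior} together with uniqueness.

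The main technical care is in the degenerate cases. If $q$ is not absolutely continuous with respect to $\bar q$, the objective is $+\infty$, so such $q$ can be excluded. If $\E_q[\ell]=+\infty$, this cannot occur, because $\ell$ is bounded above. So the objective is always well defined in $(-\infty,+\infty]$, and the identity holds on its effective domain.

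The particle statements follow by substitution. Take $q_{t-1}^{\theta}\approx\sum_i w_{t-1}^{(i)}\delta_{\theta_{t-1}^{(i)}}$ and insert it into \eqref{eq:theta_predictive_prior}. This gives the mixture $\sum_i w_{t-1}^{(i)}p(\theta_t\mid\theta_{t-1}^{(i)})$. Drawing one sample $\theta_{t\mid t-1}^{(i)}$ from each component and keeping the weights gives an unbiased empirical approximation: for any test function $f$, the expectation of $\sum_i w_{t-1}^{(i)} f(\theta_{t\mid t-1}^{(i)})$ equals $\int f\,d\bar q$ under the mixture.

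For the update, insert the empirical measure $\sum_i w_{t\mid t-1}^{(i)}\delta_{\theta_{t\mid t-1}^{(i)}}$ in place of $\bar q$ in the first part of the proof. The KL minimizer must then be absolutely continuous with respect to this discrete measure, so its support is the same set of particles. Its density with respect to that measure is proportional to $p_{\mathrm{proj}}^{\eta_\theta}$, which yields exactly the reweighting \eqref{eq:projected_pf_weights}, normalized by $\sum_j w_{t\mid t-1}^{(j)}p_{\mathrm{proj}}(Y_t\mid X_t,\theta_{t\mid t-1}^{(j)})^{\eta_\theta}>0$.

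Equivalently, this is self-normalized importance sampling with target \eqref{eq:theta_tempered_posterior} and proposal $\bar q$. If asymptotic consistency as $N\to\infty$ is wanted, I would invoke the standard SMC law of large numbers for bounded weight functions, which applies because $p_{\mathrm{proj}}^{\eta_\theta}$ is bounded. I would present this particle step as an approximation statement rather than an exact identity, matching the wording ``approximated by'' in the proposition.
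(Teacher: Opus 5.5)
Your proof is correct, but it takes a different route from the paper's. The paper argues by calculus of variations. It writes the objective as a functional $\mathcal J(q)$, adds a Lagrange multiplier for normalization, and reads $q_t^\theta\propto\bar q_{t\mid t-1}^{\theta}\,p_{\mathrm{proj}}^{\eta_\theta}$ off the pointwise stationarity condition. For the particle step it \emph{imposes} the restriction to the propagated particle support and repeats the same first-order computation over the simplex $\Delta_N$.

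Your Gibbs identity $-\eta_\theta\E_q[\ell]+\KL(q\,\|\,\bar q_{t\mid t-1}^{\theta})=\KL(q\,\|\,q^\star)-\log Z_t$ gives several things the stationarity argument does not:
\begin{itemize}
\item global optimality and uniqueness, directly from Gibbs' inequality, whereas the stationarity argument only identifies a critical point and tacitly relies on convexity of the KL term;
\item well-posedness, through $0<Z_t<\infty$ and the upper bound on $\ell$;
\item the optimal value $-\log Z_t$, a tempered log-evidence of the batch under the discrepancy-free likelihood.
\end{itemize}

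Your argument also covers the particle case uniformly. Replacing $\bar q$ by the weighted empirical measure, absolute continuity forces any finite-objective minimizer onto the particle atoms. So the support restriction the paper assumes is derived rather than imposed, and the same identity yields \eqref{eq:projected_pf_weights}.

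The paper's version is shorter and, in the finite-particle case, fully elementary. Your unbiasedness remark for the propagation step and your self-normalized importance-sampling reading justify the ``approximated by'' claims, which the paper only asserts.
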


The corresponding particle implementation propagates particles through
$p(\theta_t \mid \theta_{t-1})$, reweights them using
$p_{\mathrm{proj}}(Y_t \mid X_t,\theta_t)^{\eta_\theta}$, and resamples when the effective sample size becomes small. This particle filter serves as an online analogue of the pushforward uncertainty in static BPC, maintaining a distribution over the evolving projected calibration target. When $\Sigma_{\theta,t} = \sigma_t^2 I$, the negative log-likelihood is proportional to the empirical squared projected loss on the current batch, preserving consistency with the projected calibration objective. Additional implementation details are provided in Appendix~\ref{app:brpc_theta_details}, and the algorithm table is in Algorithm \ref{alg:projected_particle_update}.

\subsection{Online conditional posterior update of $\delta_t(x)$}
\label{subsec:brpc_delta_update}

After the projected parameter update, the discrepancy is learned conditionally on
the current parameter particles. For particle $i$, define the projected residual
batch
$
r_t^{(i)} = Y_t - y_s(X_t,\theta_t^{(i)}).
$ We model the residual as a 
linear–Gaussian:
\begin{equation}
r_t^{(i)} = G_t^{(i)} u_t^{(i)} + \varepsilon_t,
\qquad
\varepsilon_t \sim \mathcal{N}(0,R_t),
\label{eq:particle_residual_likelihood}
\end{equation}
where $G_t^{(i)} \in \mathbb{R}^{K_t \times d_t}$ is a linear operator, $u_t^{(i)} \in \mathbb{R}^{d}$ are unknown coefficients, and $R_t \in \mathbb{R}^{K_t \times K_t}$ is symmetric positive definite.

We pose the Gaussian prior over $u_t^{(i)} \in \mathbb{R}^{d}$,
\begin{equation}
q_{t,-}^{\delta,(i)}(u_t^{(i)})
=
\mathcal{N}(a_t^{(i)},P_t^{(i)}),
\label{eq:particle_delta_preupdate}
\end{equation}
where $a_t^{(i)} \in \mathbb{R}^{d}$, and $P_t^{(i)} \in \mathbb{R}^{d \times d}$
is symmetric positive definite. The state $u_t^{(i)}$ may correspond to
discrepancy values on a support set or coefficients in a basis expansion. For a
fixed-support state, $q_{t,-}^{\delta,(i)}$ coincides with the previous posterior;
for an expanding-support state, it is obtained by Gaussian process conditioning
onto the enlarged support (Appendix~\ref{app:brpc_state_maps}).

The posterior update with the likelihood from \eqref{eq:particle_residual_likelihood} is given as follows:

\begin{proposition}[KL-regularized Bayesian update in information form]
\label{prop:recursive_delta_update}
Consider the KL-regularized update
\begin{equation}
q_t^{\delta,(i)}
=
\argmin_{q}
\left\{
-\eta_\delta\,
\mathbb{E}_{q}
\big[
\log p(r_t^{(i)} \mid u_t^{(i)})
\big]
+
\mathrm{KL}\!\left(
q(u_t^{(i)})\,\|\,q_{t,-}^{\delta,(i)}(u_t^{(i)})
\right)
\right\},
\label{eq:particle_delta_kl_update}
\end{equation}
with $\eta_\delta > 0$. Then $q_t^{\delta,(i)}$ is Gaussian,
$q_t^{\delta,(i)}=\mathcal{N}(m_t^{(i)},C_t^{(i)})$, with information-form update
\begin{align}
J_t^{(i)}
&:=
(C_t^{(i)})^{-1}
=
(P_t^{(i)})^{-1}
+
\eta_\delta
(G_t^{(i)})^\top R_t^{-1} G_t^{(i)}, \label{eq:delta_info_update} \\
h_t^{(i)}
&:=
J_t^{(i)} m_t^{(i)}
=
(P_t^{(i)})^{-1} a_t^{(i)}
+
\eta_\delta
(G_t^{(i)})^\top R_t^{-1} r_t^{(i)}. \label{eq:delta_natparam_update}
\end{align}
where the posterior mean is the unique minimizer of
\begin{equation}
m_t^{(i)}
=
\argmin_u
\left\{
\eta_\delta
\frac{1}{2}
\|r_t^{(i)}-G_t^{(i)}u\|^2_{R_t^{-1}}
+
\frac{1}{2}
\|u-a_t^{(i)}\|^2_{(P_t^{(i)})^{-1}}
\right\}.
\label{eq:delta_mean_update_particle}
\end{equation}

\end{proposition}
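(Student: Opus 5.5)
The plan has two steps. First, identify the minimizer of \eqref{eq:particle_delta_kl_update} with the Gibbs (tempered-Bayes) form, exactly as in Proposition~\ref{prop:projected_parameter_update}. Then show that, under the linear--Gaussian model \eqref{eq:particle_residual_likelihood} and the Gaussian prior \eqref{eq:particle_delta_preupdate}, this tempered posterior is Gaussian, and read off its precision and natural parameter by completing the square.

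For the first step, write $\ell(u) = \log p(r_t^{(i)}\mid u)$ and define the normalizer $Z = \int q_{t,-}^{\delta,(i)}(u)\, e^{\eta_\delta \ell(u)}\,du$. It is finite because $\ell$ is a Gaussian log-density in $u$, hence bounded above. Set $q^\star \propto q_{t,-}^{\delta,(i)} e^{\eta_\delta \ell}$. For any $q$ with $\mathrm{KL}(q\,\|\,q_{t,-}^{\delta,(i)})<\infty$, a direct rearrangement gives
\[
-\eta_\delta \mathbb{E}_q[\ell] + \mathrm{KL}(q\,\|\,q_{t,-}^{\delta,(i)}) = \mathrm{KL}(q\,\|\,q^\star) - \log Z .
\]
By Gibbs' inequality this is minimized uniquely at $q=q^\star$. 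The only technical care needed is integrability, i.e.\ that $\mathbb{E}_q[\ell]$ is well defined on the feasible set. This is the step I expect to need the most care, but it can be handled by restricting to $q$ with finite second moments, or by noting that $\ell$ is bounded above so the objective lies in $(-\infty,\infty]$.

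For the second step, expand the log-density of $q^\star$ up to additive constants:
\[
-\tfrac{\eta_\delta}{2}(r_t^{(i)}-G_t^{(i)}u)^\top R_t^{-1}(r_t^{(i)}-G_t^{(i)}u) - \tfrac12 (u-a_t^{(i)})^\top (P_t^{(i)})^{-1}(u-a_t^{(i)}).
\]
This is a quadratic in $u$ whose quadratic coefficient is $-\tfrac12 J_t^{(i)}$ and whose linear coefficient is $h_t^{(i)}$, with $J_t^{(i)}$ and $h_t^{(i)}$ exactly as in \eqref{eq:delta_info_update}--\eqref{eq:delta_natparam_update}. Since $P_t^{(i)}\succ 0$ and $\eta_\delta (G_t^{(i)})^\top R_t^{-1}G_t^{(i)}\succeq 0$, the matrix $J_t^{(i)}$ is symmetric positive definite. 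Hence $q^\star$ is Gaussian with covariance $C_t^{(i)}=(J_t^{(i)})^{-1}$ and mean $m_t^{(i)}=(J_t^{(i)})^{-1}h_t^{(i)}$.

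Finally, the objective in \eqref{eq:delta_mean_update_particle} equals the negative log-density of $q^\star$ up to a constant. It is therefore strictly convex with Hessian $J_t^{(i)}\succ 0$. Setting its gradient to zero gives $J_t^{(i)}u=h_t^{(i)}$, so its unique minimizer is $m_t^{(i)}$, which is the mode and mean of a Gaussian.
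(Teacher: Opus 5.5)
Your proof is correct and follows the same route as the paper: you identify the minimizer with the tempered form $q_{t,-}^{\delta,(i)}\, p(r_t^{(i)}\mid u_t^{(i)})^{\eta_\delta}$, complete the square to read off $J_t^{(i)}$ and $h_t^{(i)}$, and characterize $m_t^{(i)}$ as the minimizer of the quadratic objective. The only difference is in the first step: your identity $-\eta_\delta\mathbb{E}_q[\ell]+\mathrm{KL}(q\,\|\,q_{t,-}^{\delta,(i)})=\mathrm{KL}(q\,\|\,q^\star)-\log Z$ proves global optimality and uniqueness of the tempered form, and your strict-convexity remark justifies uniqueness of $m_t^{(i)}$, whereas the paper asserts the tempered form by analogy with the Lagrange-multiplier stationarity argument of Proposition~\ref{prop:projected_parameter_update}.
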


\noindent\textit{Interpretation.}
Equations \eqref{eq:delta_info_update}–\eqref{eq:delta_natparam_update} coincide
with a linear–Gaussian Bayesian update (Kalman filtering in information form),
with $\eta_\delta$ acting as a tempering factor that modulates the effective
precision of the likelihood. In particular, $\eta_\delta$ controls the
bias–variance trade-off of discrepancy adaptation: larger $\eta_\delta$ increases
responsiveness to new residual evidence, while smaller $\eta_\delta$ enforces
temporal smoothing.

The particle-specific construction provides a direct Bayesian analogue of static
BPC, as each parameter particle induces its own residual batch and corresponding
conditional discrepancy posterior. For computational efficiency, we employ a
shared discrepancy approximation in the main experiments. Specifically, we form
the weighted residual batch
\begin{equation}\label{eq:res4shared}
r_t
=
\sum_{i=1}^N w_t^{(i)} r_t^{(i)}
=
Y_t - \sum_{i=1}^N w_t^{(i)} y_s(X_t,\theta_t^{(i)}),    
\end{equation}
and apply the same update as in
Proposition~\ref{prop:recursive_delta_update} with $(G_t,R_t)$ defined using the
shared representation. This corresponds to replacing the particle mixture of
likelihoods by a single moment-matched surrogate, reducing computational
complexity while stabilizing estimation under finite $N$.

Further details, including the construction of $G_t^{(i)}$ from Gaussian process
kernels and the effect of support growth on $P_t^{(i)}$, are provided in Appendix~\ref{app:brpc_state_maps} and
Appendix~\ref{app:restart_state_ablation}.

\paragraph{Tracking property of the discrepancy update.}
Proposition~\ref{prop:recursive_delta_update} admits a mean-form interpretation of the online discrepancy update. Suppressing the particle index (the result applies to both particle-specific and shared representations), define $\ell_t(u)=\tfrac{1}{2}\|r_t-G_tu\|^2_{R_t^{-1}}$ and $M_t=P_t^{-1}$. For the discrepancy representations considered here, the pre-update mean satisfies $a_t=A_tm_{t-1}$, where $A_t$ is the transition matrix induced by the representation. The update \eqref{eq:delta_mean_update_particle} then becomes $m_t=\argmin_u\{\eta_\delta \ell_t(u)+\tfrac{1}{2}\|u-A_tm_{t-1}\|^2_{M_t}\}$.

\begin{assumption}[Stable propagation of discrepancy information]
\label{ass:brpc_transport_contraction}
There exists $\gamma\in[0,1)$ such that $A_t^\top M_t A_t \preceq \gamma M_{t-1}$ for all $t$.
\end{assumption}

This condition ensures that information from earlier batches is not propagated with increasing precision, and can be enforced via covariance inflation, process noise, or nugget regularization.

\begin{theorem}[Tracking of the BRPC discrepancy mean]
\label{thm:brpc_tracking}
Under Assumption~\ref{ass:brpc_transport_contraction}, for any reference sequence $\{v_t\}$,
\begin{equation}\label{eq:brpc_tracking_bound}
\sum_{t=1}^T \{\ell_t(m_t)-\ell_t(v_t)\}
\le
\frac{1}{2\eta_\delta}\|v_0-m_0\|^2_{M_0}
+
\frac{1}{2\eta_\delta(1-\gamma)}
\sum_{t=1}^T
\|v_t-A_tv_{t-1}\|^2_{M_t}.
\end{equation}
\end{theorem}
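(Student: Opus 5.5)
The plan is the standard ``implicit (proximal) online update'' tracking argument: a one-step three-point inequality from the strong convexity of the proximal objective, followed by a change of comparator absorbed through Assumption~\ref{ass:brpc_transport_contraction}, and then a telescoping sum. Write $e_t = v_t - m_t$ and $d_t = v_t - A_t v_{t-1}$, and assume $\eta_\delta>0$, as in Proposition~\ref{prop:recursive_delta_update}.

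\textbf{Step 1 (one-step inequality).} The objective $\Phi_t(u)=\eta_\delta\ell_t(u)+\tfrac12\|u-A_tm_{t-1}\|^2_{M_t}$ is the sum of the convex quadratic $\eta_\delta\ell_t$ (since $R_t\succ0$) and a quadratic with Hessian $M_t$. It is therefore $1$-strongly convex in the $M_t$-norm, and its minimizer is $m_t$. For any $v$ this gives
\[
\Phi_t(v)\ \ge\ \Phi_t(m_t)+\tfrac12\|v-m_t\|^2_{M_t}.
\]
One can verify this by the first-order optimality condition $\eta_\delta\nabla\ell_t(m_t)+M_t(m_t-A_tm_{t-1})=0$ together with the polarization identity for the quadratic term. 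Taking $v=v_t$ and dropping the nonnegative term $\tfrac12\|m_t-A_tm_{t-1}\|^2_{M_t}$ yields
\[
\eta_\delta\{\ell_t(m_t)-\ell_t(v_t)\}\le \tfrac12\|v_t-A_tm_{t-1}\|^2_{M_t}-\tfrac12\|e_t\|^2_{M_t}.
\]

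\textbf{Step 2 (moving the comparator).} Decompose $v_t-A_tm_{t-1}=A_te_{t-1}+d_t$ and apply $\|a+b\|^2\le(1+\epsilon)\|a\|^2+(1+\epsilon^{-1})\|b\|^2$ in the $M_t$-norm. Assumption~\ref{ass:brpc_transport_contraction} gives $\|A_te_{t-1}\|^2_{M_t}\le\gamma\|e_{t-1}\|^2_{M_{t-1}}$. Choosing $\epsilon=(1-\gamma)/\gamma$, so that $(1+\epsilon)\gamma=1$ and $1+\epsilon^{-1}=1/(1-\gamma)$, we get
\[
\|v_t-A_tm_{t-1}\|^2_{M_t}\le\|e_{t-1}\|^2_{M_{t-1}}+\tfrac{1}{1-\gamma}\|d_t\|^2_{M_t}.
\]
If $\gamma=0$, then $A_te_{t-1}$ has zero $M_t$-norm and the bound holds trivially.

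\textbf{Step 3 (telescoping).} Substituting Step 2 into Step 1 gives
\[
\eta_\delta\{\ell_t(m_t)-\ell_t(v_t)\}\le\tfrac12\|e_{t-1}\|^2_{M_{t-1}}-\tfrac12\|e_t\|^2_{M_t}+\tfrac{1}{2(1-\gamma)}\|d_t\|^2_{M_t}.
\]
Summing over $t=1,\dots,T$, the first two terms telescope to $\tfrac12\|e_0\|^2_{M_0}-\tfrac12\|e_T\|^2_{M_T}\le\tfrac12\|v_0-m_0\|^2_{M_0}$. Dividing by $\eta_\delta$ gives \eqref{eq:brpc_tracking_bound}.

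\textbf{Main obstacle.} The delicate step is Step 2. The prior mean is transported by $A_t$ and the metric changes from $M_{t-1}$ to $M_t$, so the squared distances do not telescope directly. The contraction factor $\gamma<1$ is exactly what pays for the cross term in the Young-type split, with the weights chosen so that the coefficient on the $e_{t-1}$ term is exactly $1$. A smaller secondary point is justifying the three-point inequality in Step 1 when $M_t$ is only positive definite rather than the identity. This follows by working in the coordinates $M_t^{1/2}u$, or directly from the optimality condition as noted.
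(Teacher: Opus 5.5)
Your proposal is correct and follows the paper's proof essentially step for step. The paper gets the same one-step inequality from the first-order condition, convexity of $\ell_t$, and the weighted three-point identity; your strong-convexity phrasing of $\Phi_t$ is the same computation. It then splits $v_t-a_t=(v_t-A_tv_{t-1})+A_t(v_{t-1}-m_{t-1})$ with Young weight $\beta=\gamma/(1-\gamma)$, equivalent to your $\epsilon=(1-\gamma)/\gamma$ placed on the other term, and telescopes. Your separate handling of $\gamma=0$ is a small improvement, since the paper's choice leaves $1+1/\beta$ undefined in that case.
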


Theorem~\ref{thm:brpc_tracking} bounds the cumulative loss of $m_t$ relative to any reference path $\{v_t\}$. The first term reflects initialization, while the second captures variation under the same transition matrices. Gradual evolution yields a small variation term, whereas abrupt changes make it large, motivating the restart mechanisms in Section~\ref{sec:reset_control}. If an oracle sequence $\{v_t^\star\}$ satisfies $r_t=G_tv_t^\star+\xi_t$, then $\ell_t(v_t^\star)=\tfrac{1}{2}\|\xi_t\|^2_{R_t^{-1}}$, so the bound controls cumulative residual-fitting loss when the oracle path fits the data and evolves gradually. The result is conditional on the residual sequence; the proof is given in Appendix~\ref{app:proof_brpc_dynamic_regret}. Appendix~\ref{app:theta_contamination} discusses the effect of projected-parameter approximation error. 

\subsection{Prediction and prequential likelihood}
\label{subsec:prediction_prequential}

BRPC prediction combines projected parameter particles with their corresponding
conditional discrepancy posteriors. After assimilating batch $t$, each particle
$i$ induces a Gaussian predictive distribution for a new input set $X_\star$:
\[
p_t^{(i)}(Y_\star \mid X_\star)
=
\mathcal{N}\!\left(
Y_\star;\,
\mu_{\star,t}^{(i)},\,
\Sigma_{\star,t}^{(i)}
\right),
\]
where $\mu_{\star,t}^{(i)}$ and $\Sigma_{\star,t}^{(i)}$ incorporate the simulator
prediction $y_s(X_\star,\theta_t^{(i)})$, the conditional Gaussian process
discrepancy mean and covariance, and the observation noise.

The overall BRPC predictive distribution is given by the particle mixture
\begin{equation}
p_t(Y_\star \mid X_\star)
=
\sum_{i=1}^N
w_t^{(i)}\,
p_t^{(i)}(Y_\star \mid X_\star),
\label{eq:brpc_particle_mixture_predictive}
\end{equation}
which can be interpreted as a Monte Carlo approximation to the posterior
predictive distribution obtained by marginalizing over the latent calibration
parameter.
In prequential (online) use, this same predictive distribution is evaluated prior
to assimilating the incoming batch and serves as the predictive evidence for
restart decisions. Explicit forms of the Gaussian process predictive mappings for
the particle-specific, fixed-support, and shared implementations are provided in
Appendix~\ref{app:predictive_details}.

\section{Restart Mechanism for BRPC}
\label{sec:reset_control}
BRPC is designed to be stable under drift. Both the online updates of $\theta_t$ and $\delta_t(\cdot)$ are KL-regularized, promoting smooth evolution of the posterior distributions over time. Under an abrupt regime change, however, these updates may fail to track rapid shifts in the underlying system. As a result, parameter particles and discrepancy information carried over from the previous regime can introduce substantial bias in the new regime, leading to slow adaptation and persistent errors due to outdated calibration evidence. This observation motivates coupling BRPC with a change-point-driven restart
mechanism. The purpose of the restart is to determine
when the current BRPC process—which accumulates information from past data
streams—should be discarded and reinitialized. In this section, we propose three restart mechanisms paired with BRPC.

\subsection{B-BRPC: BOCPD-based restart}
\label{subsec:bbrpc}
A key latent variable in this setting is the restart time, corresponding to the
occurrence of an abrupt system change. We adopt the BOCPD framework
\citep{adams2007bayesian} to maintain a posterior over plausible restart times.
Each hypothesized restart time defines a BRPC expert $e$ with its own calibration
state. Prior to assimilating $(X_t,Y_t)$, expert $e$ forms a pre-update predictive
distribution $p^{\mathrm{pre}}_e(Y_t\mid X_t)$ (Appendix~\ref{app:predictive_details}),
with prequential loss $L^{\mathrm{pre}}_{e,t} := -\log p^{\mathrm{pre}}_e(Y_t\mid X_t)$.

Let $\mathcal{E}_{t-1}$ denote the active expert set before observing
$(X_t,Y_t)$, with weights $w_{e,t-1}$ representing posterior beliefs.
At time $t$, a new expert $e_t^{\mathrm{new}}$ is initialized from the restart
prior with predictive law $p^{\mathrm{pre}}_{\mathrm{new}}(Y_t\mid X_t)$
(Appendix~\ref{app:restart_initialization}). Posterior weights are updated via
\begin{equation}
\widetilde w_{e,t}
=
(1-h_t)\,w_{e,t-1}\,p^{\mathrm{pre}}_e(Y_t\mid X_t),
\quad e\in\mathcal{E}_{t-1},
\label{eq:bbrpc_cont_weight}
\end{equation}
and
\begin{equation}
\widetilde w_{\mathrm{new},t}
=
h_t\,p^{\mathrm{pre}}_{\mathrm{new}}(Y_t\mid X_t),
\label{eq:bbrpc_new_weight}
\end{equation}
followed by normalization to obtain $w_{e,t}$ and update
$\mathcal{E}_t = \mathcal{E}_{t-1} \cup \{e_t^{\mathrm{new}}\}$.

While standard BOCPD maintains a full mixture over experts, we adopt a
restarted-BOCPD rule \citep{alami2020restarted} to enable explicit resets.
Let $e_t^{\mathrm{anc}}$ denote the anchor expert from the most recent restart,
and $s_e$ the start time of expert $e$. A hard restart is triggered when
\begin{equation}
\mathrm{Restart}^{\mathrm B}_t
=
\mathbf{1}
\left\{
\max_{e:\,s_e > s_{e_t^{\mathrm{anc}}}}
w_{e,t}
>
\rho_{\mathrm B}\,w_{e_t^{\mathrm{anc}},t}
\right\},
\label{eq:bbrpc_hard_restart}
\end{equation}
with margin $\rho_{\mathrm B}\ge 1$ (default $\rho_{\mathrm B}=1$). After a
restart decision, the retained experts are updated via the BRPC parameter and
discrepancy updates, and the expert set may be pruned for computational control. We refer to the resulting method as B-BRPC. The detailed algorithm is described in Algorithm \ref{alg:bbrpc}. 

BOCPD compares restart hypotheses using pre-update predictive
likelihoods. In BRPC, these likelihoods depend on both the model-parameter estimate
and the discrepancy estimate, creating an interaction between the online
estimation procedure and the restart rule. To analyze the interaction, let us take a Gaussian approximation of the pre-update predictive likelihoods, $p^{\mathrm{pre}}_{e}(Y_t\mid X_t) = \mathcal N(\mu_c, \Sigma_c)$ for expert $e$ and
$p^{\mathrm{pre}}_{\mathrm{new}}(Y_t\mid X_t) = \mathcal N(\mu_n, \Sigma_n)$ for the newly created expert.

\begin{proposition}[Gaussian restart odds]
\label{prop:gaussian_restart_odds}
If $Y_t \sim \mathcal N(\mu_\star, \Sigma_\star)$, then
\begin{equation}
\begin{aligned}
\mathbb{E}
\left[
\log \frac{w_{\mathrm{new},t}}{w_{e,t}}
\right]
=
\log \frac{h_t}{(1-h_t)w_{e,t-1}}
+
\frac{1}{2}
\Big(
\log \frac{\det \Sigma_c}{\det \Sigma_n}
+
\mathrm{tr}\big[(\Sigma_c^{-1} - \Sigma_n^{-1}) \Sigma_\star\big]
\\
+
\|\mu_\star - \mu_c\|_{\Sigma_c^{-1}}^2
-
\|\mu_\star - \mu_n\|_{\Sigma_n^{-1}}^2
\Big).
\end{aligned}
\label{eq:expected_restart_odds}
\end{equation}
\end{proposition}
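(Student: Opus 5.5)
The plan is to take the log of the ratio of the normalized weights. The normalization constant is common to both weights, so it cancels. Using \eqref{eq:bbrpc_cont_weight} and \eqref{eq:bbrpc_new_weight}, this gives
\[
\log\frac{w_{\mathrm{new},t}}{w_{e,t}}
=
\log\frac{h_t}{(1-h_t)w_{e,t-1}}
+\log p^{\mathrm{pre}}_{\mathrm{new}}(Y_t\mid X_t)-\log p^{\mathrm{pre}}_e(Y_t\mid X_t).
\]
The first term is deterministic given the past: $h_t$ and $w_{e,t-1}$ are fixed before $(X_t,Y_t)$ is observed. Hence only the log-likelihood difference has to be averaged over $Y_t\sim\mathcal N(\mu_\star,\Sigma_\star)$. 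Throughout, $\mu_c,\Sigma_c,\mu_n,\Sigma_n$ are treated as fixed, i.e.\ conditioned on the information before time $t$ together with $X_t$.

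Next, substitute the Gaussian log-densities. For $j\in\{c,n\}$,
\[
-\log\mathcal N(Y_t;\mu_j,\Sigma_j)=\tfrac{K_t}{2}\log(2\pi)+\tfrac12\log\det\Sigma_j+\tfrac12(Y_t-\mu_j)^\top\Sigma_j^{-1}(Y_t-\mu_j).
\]
The $\log(2\pi)$ terms cancel in the difference. The log-determinants give $\tfrac12\log(\det\Sigma_c/\det\Sigma_n)$, with the sign fixed by the fact that the new-expert likelihood sits in the numerator.

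The only computation is the expectation of each quadratic form. Write $Y_t-\mu_j=(Y_t-\mu_\star)+(\mu_\star-\mu_j)$ and use the standard identity
\[
\E\bigl[(Y_t-\mu_j)^\top\Sigma_j^{-1}(Y_t-\mu_j)\bigr]=\mathrm{tr}(\Sigma_j^{-1}\Sigma_\star)+\|\mu_\star-\mu_j\|^2_{\Sigma_j^{-1}},
\]
where the cross term has zero mean. Inserting this with the appropriate signs gives $\tfrac12\,\mathrm{tr}[(\Sigma_c^{-1}-\Sigma_n^{-1})\Sigma_\star]$ plus the difference of Mahalanobis terms. Combining everything yields \eqref{eq:expected_restart_odds}.

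I do not expect any real obstacle. The main care points are the sign bookkeeping, since the continuing expert's loss enters positively, and stating clearly that the expectation is conditional on the pre-update state. That conditioning is what makes the prior-odds term and the predictive moments deterministic.
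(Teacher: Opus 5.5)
Your proposal is correct and follows the paper's proof essentially step for step. You cancel the common normalizer to obtain the prior-odds term plus the Gaussian log-likelihood ratio, then evaluate the expected quadratic forms via $\mathbb{E}[(Y-\mu)^\top\Sigma^{-1}(Y-\mu)]=\mathrm{tr}(\Sigma^{-1}\Sigma_\star)+\|\mu_\star-\mu\|^2_{\Sigma^{-1}}$. You also state explicitly that the expectation is conditional on the pre-update state, so $h_t$, $w_{e,t-1}$ and the predictive moments are fixed; the paper leaves this implicit.
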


This result highlights two potential restart issues. If the old expert \(e\) is overly confident, i.e., \(\Sigma_c\) is small, ordinary within-regime variability may be mistaken for restart evidence. Conversely, if discrepancy estimation adapts too quickly reducing $\|\mu_\star - \mu_c\|_{\Sigma_c^{-1}}^2$, it can absorb changes in the data, reducing predictive contrast and delaying detection of true regime changes. Effective restart therefore requires balancing predictive sharpness and adaptation speed, motivating a variant of B-BRPC introduced in Section \ref{subsec:rra}.

\subsection{C-BRPC: score-based restart}
\label{subsec:cbrpc}
B-BRPC maintains multiple restart-time hypotheses and associated BRPC experts,
which can be computationally expensive. As a scalable alternative, we propose a
score-based restart mechanism that maintains a single active BRPC expert and
monitors its predictive fitness via a CUSUM-type statistic
\citep{xie2023window}. The fitness score is defined from the pre-update
predictive law as
\begin{equation}
s_t
:=
-\frac{1}{K_t}
\log p^{\mathrm{pre}}(Y_t\mid X_t),
\label{eq:score_def}
\end{equation}
which corresponds to the average prequential loss on the current batch.

To account for scale variation, the score is standardized as
$s_t^{\mathrm{std}} = (s_t-\widehat\mu_{t-1})/\max(\widehat\sigma_{t-1},\sigma_{\min})$,
where $\widehat\mu_{t-1}$ and $\widehat\sigma_{t-1}$ are running estimates since
the most recent restart and $\sigma_{\min}>0$ ensures numerical stability.

Following window-limited CUSUM procedures, we define
\begin{equation}
G_t^{\mathrm{score}}
=
\max_{1\le m\le W}
\sqrt{m}\,
\left(
\bar s^{\mathrm{std}}_{t-m+1:t}
-
\kappa
\right)_+,
\label{eq:wcusum_stat}
\end{equation}
where $\bar s^{\mathrm{std}}_{t-m+1:t} = \frac{1}{m}\sum_{u=t-m+1}^{t}s_u^{\mathrm{std}}$,
$W$ is the maximum window size, $\kappa$ is a drift allowance, and
$(a)_+=\max(a,0)$. A restart is triggered when
$\mathrm{Restart}^{\mathrm C}_t = \mathbf{1}\{G_t^{\mathrm{score}} > h_{\mathrm C}\}$,
with threshold $h_{\mathrm C}>0$. When a restart is triggered, the parameter posterior, discrepancy
posterior, and score statistics are reinitialized prior to assimilating $(X_t,Y_t)$. The resulting method is referred to as C-BRPC. The detailed algorithm is described in Algorithm \ref{alg:cbrpc}.

The following proposition analyzes the false-alarm and missed-detection behavior of C-BRPC as a
function of its tuning parameters, thereby providing guidance for parameter selection. 
\begin{proposition}[wCUSUM guarantees]
\label{prop:wcusum_bounds}
If standardized score $s_t^{\mathrm{std}}$ averages are sub-Gaussian, then
\[
\mathbb{P}\!\left(\max_{t \le T} G_t^{\mathrm{score}} > h_{\mathrm C} \right)
\le
T W e^{-h_{\mathrm C}^2/2}.
\]
If the post-change mean shift satisfies $\delta > \kappa$, detection occurs
within $O\!\left(h_{\mathrm C}^2 / (\delta - \kappa)^2\right)$ samples.
\end{proposition}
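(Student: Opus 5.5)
The argument has two parts: a union bound for false alarms and a drift argument for detection delay.

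\emph{False alarm.} Read the sub-Gaussian hypothesis in the form it will be used: under the pre-change (null) regime, for every $t$ and window length $m\le W$, the normalized sum $Z_{t,m}:=\sqrt{m}\,\bar s^{\mathrm{std}}_{t-m+1:t}$ satisfies $\mathbb{P}(Z_{t,m}>x)\le e^{-x^2/2}$ for all $x\ge 0$. This holds with unit variance proxy, for example, when the standardized scores are centered and conditionally $1$-sub-Gaussian given the past, by the martingale Chernoff bound.

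Since $\kappa\ge 0$, we have
\[
\sqrt{m}\,\big(\bar s^{\mathrm{std}}_{t-m+1:t}-\kappa\big)_+ > h_{\mathrm C} \quad\Longrightarrow\quad Z_{t,m}>h_{\mathrm C}.
\]
Therefore
\[
\{\max_{t\le T} G^{\mathrm{score}}_t>h_{\mathrm C}\}\subseteq\bigcup_{t\le T}\bigcup_{m\le W}\{Z_{t,m}>h_{\mathrm C}\}.
\]
A union bound over the at most $TW$ pairs $(t,m)$ gives $TWe^{-h_{\mathrm C}^2/2}$. The drift allowance $\kappa$ only helps, since it could sharpen the exponent to $(h_{\mathrm C}+\sqrt m\kappa)^2/2$.

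\emph{Detection delay.} Suppose a change occurs at $\tau$, after which the standardized scores have mean $\delta>\kappa$. Keep the running estimates $\widehat\mu,\widehat\sigma$ frozen at their pre-change values; this is a modeling assumption that should be stated. Fix a horizon $m\le W$ and consider the window $(\tau,\tau+m]$. Its statistic is at least
\[
\sqrt m(\delta-\kappa)+\sqrt m\big(\bar s^{\mathrm{std}}-\delta\big).
\]
The fluctuation term $\sqrt m(\bar s^{\mathrm{std}}-\delta)$ is sub-Gaussian with unit proxy. Hence with probability at least $1-\beta$ it exceeds $-\sqrt{2\log(1/\beta)}$. The alarm fires once
\[
\sqrt m(\delta-\kappa)>h_{\mathrm C}+\sqrt{2\log(1/\beta)},
\]
that is, for $m\ge (h_{\mathrm C}+c)^2/(\delta-\kappa)^2=O\big(h_{\mathrm C}^2/(\delta-\kappa)^2\big)$. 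This requires the window cap $W$ to be at least this size, which I would add as an explicit condition. An expectation version follows by also summing the tail probabilities over successive blocks of this length.

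\emph{Main obstacle.} The main difficulty is making the assumptions precise rather than the calculations. The scores $s_t$ depend on the adaptively updated BRPC state, and they are standardized by data-dependent running estimates. So "sub-Gaussian averages" must be stated conditionally: as a martingale-difference condition under the null, together with the post-change estimates being frozen, or at least not adapting faster than the window. The fact, noted after Proposition~\ref{prop:gaussian_restart_odds}, that discrepancy learning can absorb the shift means the effective $\delta$ may decay over time. I would therefore interpret $\delta$ as a lower bound on the post-change mean over the detection window.
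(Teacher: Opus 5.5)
Your proposal is correct and follows the paper's proof essentially step for step: a union bound over the $TW$ pairs $(t,m)$ using the unit sub-Gaussian tail of $\sqrt{m}\,\bar s^{\mathrm{std}}_{t-m+1:t}$, with the $\kappa\ge 0$ term dropped, and a single post-change block of length $m_\star\le W$ whose lower-tail concentration forces an alarm once $m_\star\ge (h_{\mathrm C}+\sqrt{2\log(1/\beta)})^2/(\delta-\kappa)^2$. Your explicit caveats are exactly what the paper builds into its post-change concentration assumption: the window cap $W$ must be at least this size, the standardization must be frozen or slowly adapting, and $\delta$ is read as a lower bound on the post-change mean.
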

The threshold $h_{\mathrm C}$
controls false alarms, $\kappa$ determines sensitivity to gradual variation,
and $W$ defines the detection window. Larger $\kappa$ reduces false alarms but
delays detection, while larger $W$ increases sensitivity at the cost of higher
false-alarm risk.


\subsection{B-BRPC-RRA: B-BRPC with $\delta_t(x)$ estimated with residual re-anchoring}
\label{subsec:rra}
The third restart mechanism is a variant of B-BRPC. It employs the same BOCPD
framework but differs in how the discrepancy $\delta_t(x)$ is estimated. Since
the estimate of $\delta_t(x)$ directly affects the predictive evidence and,
consequently, the restart decision, we treat this as a distinct mechanism. 

Specifically, while B-BRPC performs online conditional posterior updates of the
discrepancy $\delta_t(x)$, the proposed variant fits an offline Gaussian process
model for $\delta_t(x)$ using residuals computed from data accumulated since the
most recent restart. For particle $i$, the residuals are defined as
\begin{equation}\label{eq:res4RRA}
r_{t'}^{(i)} = Y_{t'} - y_s(X_{t'}, \theta_t^{(i)}), \qquad t' = t_r+1, \dots, t,    
\end{equation}
where $t_r$ denotes the last restart time. Importantly, the parameter particle
$\theta_t^{(i)}$ is anchored at its current estimate and is not updated with
$t'$. This anchoring ensures that residuals are evaluated using the most recent
calibration parameters, and the discrepancy $\delta_t(x)$ is refit offline to
these updated residuals.

This design mitigates the impact of transient and potentially unstable parameter
estimates immediately following a restart, thereby improving the stability and
reliability of discrepancy estimation. The trade-off is increased computational
cost, as the discrepancy model must be re-estimated offline at each update. This restart mechanism is referred to as B-BRPC-RRA.

\section{Experiments}
\label{sec:experiments}

We evaluate BRPC on two benchmark families: a synthetic streaming benchmark and a plant-simulation benchmark. The synthetic benchmark is adapted from the static calibration example of \citet{gu2018scaled}. The simulator is $y_s(x,\theta)=\sin(\theta x)+5x$, and the physical response is $\zeta_t(x)=5x\cos(\omega_t x/2)+5x$, observed with Gaussian noise.  The plant simulation benchmark uses a detailed discrete-event simulation of a factory, which is described in Appendix~\ref{app:benchmark_details}.

For each benchmark, we consider three nonstationary scenarios: drifting (where $\omega_t$ varies smoothly over time), sudden($k$) (where $\omega_t$ is piecewise constant with $k$ evenly spaced abrupt changes), and mixed($k$) (combining smooth drift and abrupt changes). For the sudden and mixed scenarios, the true change points are recorded and later compared with the restart times estimated by the proposed methods. Detailed benchmark definitions are provided in Appendix~\ref{app:benchmark_details}.

We report comparisons of the three proposed methods—B-BRPC, C-BRPC, and B-BRPC-RRA—against two baselines: BC(80), a sliding-window Bayesian calibration approach that periodically fits the static model \cite{kennedy2001bayesian} using the most recent 80 observations, and DA, a data assimilation–based calibration method \citep{ward2021continuous}. Implementation details are provided in Appendix~\ref{app:additional_results}.

We evaluate performance using several key metrics: the average estimation error of the model parameters $\theta_t$ ($\theta$-RMSE), the overall prediction error of the response (Response RMSE), and the number of restart events. To assess restart accuracy, we measure alignment between detected restart times and true abrupt change points using Precision@2 and Recall@2, which allow a tolerance window of two batches around annotated jumps. Additional metrics, including detection delay, runtime, BOCPD diagnostic quantities, and ablation results, are reported in Appendix~\ref{app:additional_results}. 

\begin{table*}[!htbp]
\centering
\scriptsize
\setlength{\tabcolsep}{4pt}
\caption{
Main benchmark results. Results are averaged over 25 runs for the synthetic benchmark and 10 runs for the plant-simulation benchmark, and are reported as mean $\pm$ standard deviation.
}
\label{tab:main_results_combined}
\resizebox{\textwidth}{!}{%
\begin{tabular}{llrrrrr}
\toprule
Scenario / Mode & Method & $\theta$-RMSE $\downarrow$ & Response RMSE $\downarrow$ & Restarts & Precision@2 $\uparrow$ & Recall@2 $\uparrow$ \\
\midrule
\multicolumn{7}{l}{\textbf{Synthetic benchmark}} \\
\midrule
Drifting
& BC(80)       & 0.142$\pm$0.010 & 0.622$\pm$0.242 & --                & --    & --    \\
& DA           & 0.090$\pm$0.030 & 1.909$\pm$0.116 & --                & --    & --    \\
& B-BRPC       & 0.014$\pm$0.002 & 0.484$\pm$0.225 & 11.774$\pm$1.733  & --    & --    \\
& C-BRPC       & 0.015$\pm$0.002 & 0.580$\pm$0.262 & 4.774$\pm$0.419   & --    & --    \\
& B-BRPC-RRA   & 0.015$\pm$0.002 & 0.437$\pm$0.115 & 8.128$\pm$2.345   & --    & --    \\
\midrule
Sudden(3)
& BC(80)       & 0.166$\pm$0.079 & 0.839$\pm$0.437 & --                & --    & --    \\
& DA           & 0.178$\pm$0.162 & 1.994$\pm$0.070 & --                & --    & --    \\
& B-BRPC       & 0.026$\pm$0.023 & 0.607$\pm$0.285 & 10.287$\pm$4.158  & 0.314 & 0.926 \\
& C-BRPC       & 0.027$\pm$0.024 & 0.671$\pm$0.321 & 3.706$\pm$1.209   & 0.725 & 0.834 \\
& B-BRPC-RRA   & 0.018$\pm$0.012 & 0.512$\pm$0.201 & 3.050$\pm$0.218   & 0.986 & 0.999 \\
\midrule
Mixed(3)
& BC(80)       & 0.125$\pm$0.020 & 0.647$\pm$0.103 & --                & --    & --    \\
& DA           & 0.092$\pm$0.039 & 1.873$\pm$0.148 & --                & --    & --    \\
& B-BRPC       & 0.021$\pm$0.016 & 0.464$\pm$0.070 & 11.865$\pm$2.675  & 0.156 & 0.960 \\
& C-BRPC       & 0.020$\pm$0.014 & 0.535$\pm$0.096 & 3.493$\pm$1.005   & 0.392 & 0.720 \\
& B-BRPC-RRA   & 0.021$\pm$0.016 & 0.505$\pm$0.066 & 4.813$\pm$1.565   & 0.409 & 0.960 \\
\midrule
\multicolumn{7}{l}{\textbf{Plant-simulation benchmark}} \\
\midrule
Drifting
& BC(80)       & 3.936$\pm$0.123 & 3.659$\pm$0.045 & --                & --    & --    \\
& DA           & 9.562$\pm$0.276 & 5.027$\pm$0.189 & --                & --    & --    \\
& B-BRPC       & 0.800$\pm$0.018 & 0.988$\pm$0.031 & 15.7$\pm$0.082    & --    & --    \\
& C-BRPC       & 0.809$\pm$0.024 & 1.001$\pm$0.027 & 7.0$\pm$0.000     & --    & --    \\
& B-BRPC-RRA   & 0.788$\pm$0.022 & 1.109$\pm$0.042 & 17.9$\pm$0.320    & --    & --    \\
\midrule
Sudden(5)
& BC(80)       & 6.023$\pm$0.250 & 1.659$\pm$0.147 & --                & --    & --    \\
& DA           & 4.259$\pm$0.238 & 1.000$\pm$0.153 & --                & --    & --    \\
& B-BRPC       & 0.957$\pm$0.086 & 0.164$\pm$0.027 & 7.4$\pm$0.970     & 0.516 & 0.760 \\
& C-BRPC       & 1.056$\pm$0.238 & 0.169$\pm$0.035 & 4.3$\pm$0.670     & 0.762 & 0.660 \\
& B-BRPC-RRA   & 1.048$\pm$0.411 & 0.128$\pm$0.030 & 5.0$\pm$0.000     & 1.000 & 1.000 \\
\midrule
Mixed($\approx$2--3)
& BC(80)       & 4.502$\pm$1.123 & 3.034$\pm$0.345 & --                & --    & --    \\
& DA           & 9.669$\pm$1.276 & 5.646$\pm$0.459 & --                & --    & --    \\
& B-BRPC       & 1.580$\pm$0.898 & 0.554$\pm$0.214 & 12.4$\pm$5.276    & 0.184 & 0.950 \\
& C-BRPC       & 1.582$\pm$1.111 & 0.561$\pm$0.178 & 5.7$\pm$2.500     & 0.312 & 0.747 \\
& B-BRPC-RRA   & 1.537$\pm$0.857 & 0.569$\pm$0.176 & 13.0$\pm$5.330    & 0.186 & 1.000 \\
\bottomrule
\end{tabular}%
}
\end{table*}

Table~\ref{tab:main_results_combined} shows that all three BRPC variants substantially improve calibration accuracy ($\theta$-RMSE and Response RMSE) over BC(80) and DA across both synthetic and plant-simulation benchmarks. Among the proposed methods, B-BRPC and C-BRPC achieve comparable $\theta$-RMSE, but differ in restart behavior: B-BRPC attains high recall at the cost of excessive restarts, whereas C-BRPC reduces restart frequency and improves event-level precision. B-BRPC-RRA performs best in jump-dominated
settings (\textit{Sudden(3)} and \textit{Sudden(5)}), where residual refitting yields near-ideal restart behavior. Overall, C-BRPC is a robust default under heterogeneous nonstationarity, while B-BRPC-RRA is most effective when abrupt changes dominate and offline refitting is computationally feasible. Additional ablations and high-dimensional diagnostics are reported in Appendices~\ref{app:restart_state_ablation} and
\ref{app:highdim_physical_projected}.

\section{Conclusion}

We introduced Bayesian Recursive Projected Calibration (BRPC), an online Bayesian calibration framework for streaming data under simulator mismatch and nonstationary system changes (both gradual drift and abrupt shifts). BRPC extends projected calibration to the online setting while preserving parameter–discrepancy identifiability. We established a theoretical foundation for BRPC, showing that the online updates admit a tracking guarantee under gradual evolution, and characterizing how discrepancy estimation shapes the predictive likelihoods used in restart decisions. This reveals a fundamental coupling between online calibration and restart mechanisms, with trade-offs among predictive sharpness, adaptation speed, and restart sensitivity. Overall, BRPC provides a unified approach to online Bayesian calibration under nonstationarity, combining identifiability-preserving updates, restart-aware inference, and theoretical guarantees. Empirically, BRPC substantially improves calibration accuracy over sliding-window Bayesian calibration and data assimilation baselines across synthetic and plant-simulation benchmarks. Future work includes extending BRPC to high-dimensional digital twin systems and developing restart mechanisms with stronger false-alarm control and more adaptive memory-reset strategies.

\FloatBarrier
\bibliographystyle{plainnat}
\bibliography{DBC}

\clearpage
\appendix
\section*{Appendix}
\addcontentsline{toc}{section}{Appendix}

\paragraph{Appendix organization.}
Appendix~\ref{appendix:algo} gives the algorithmic details, including restart
initialization and the BOCPD-style and wCUSUM-style restart procedures.
Appendix~\ref{app:brpc_construction_details} provides the BRPC construction
details: the projected parameter update, the recursive Gaussian discrepancy
update, the BRPC-E/P/F state representations, the tracking proof, and predictive
laws. Appendix~\ref{app:compatibility_theory} contains the operational
definitions of gradual drift and abrupt regime change and proves the restart
results stated in the main text. Appendix~\ref{app:optional_restart_discrepancy_theory}
gives optional interpretations of parameter--discrepancy and restart interactions;
these results are not required for the main theoretical claims. 
Appendix~\ref{app:benchmark_details} describes the benchmark construction and
common experimental settings. Appendix~\ref{app:additional_results} reports
additional diagnostics, ablations, sensitivity analyses, and scalability results. Appendix~\ref{app:online_continual_learning} discusses the relationship between
BRPC and online/continual learning. Appendix~\ref{sec:limit} discusses
limitations and scope.

\paragraph{Appendix notation conventions.}
Unless otherwise stated, $t$ indexes data batches, $i$ indexes particles,
$e$ indexes restart experts, and $k$ indexes observations within a batch.
The batch is $B_t=(X_t,Y_t)$ with size $K_t$. For any finite input sets
$A$ and $B$, $K_{AB}$ denotes the kernel matrix evaluated between $A$ and $B$.
The symbol $\mathcal F_t$ denotes the filtration generated by the data batches
and algorithmic randomness up to time $t$. We use superscript ``pre'' for
pre-update predictive quantities, superscript ``sh'' for shared-discrepancy
quantities, and superscript ``F'' for fixed-support quantities. Symbols
introduced only for diagnostics or benchmark generation are local to the
corresponding appendix subsection.

\section{Algorithms and Restart Initialization}
\label{appendix:algo}

This appendix gives the algorithmic details for the BRPC procedures used in the
main text. We first state the
projected particle update and then the BOCPD-style restart procedure, and the
wCUSUM-style restart procedure. We also specify the restart-prior initialization
and the BOCPD forecaster recursion used by B-BRPC. 

\begin{algorithm}[H]
\caption{Projected particle update}
\label{alg:projected_particle_update}
\begin{algorithmic}[1]
\Require Previous particles $\{(\theta_{t-1}^{(i)},w_{t-1}^{(i)})\}_{i=1}^N$,
transition kernel $p(\theta_t\mid\theta_{t-1})$, batch $(X_t,Y_t)$, and ESS
threshold $\tau_{\mathrm{ESS}}$.
\State Propagate particles according to the predictive prior in
Eq.~\eqref{eq:theta_predictive_prior}:
$\theta_{t\mid t-1}^{(i)}\sim p(\theta_t\mid \theta_{t-1}^{(i)})$ and
$w_{t\mid t-1}^{(i)}=w_{t-1}^{(i)}$.
\State Compute the projected likelihood in
Eq.~\eqref{eq:discrepancy_free_likelihood}:
$\ell_t^{(i)}=p_{\mathrm{proj}}(Y_t\mid X_t,\theta_{t\mid t-1}^{(i)})$.
\State Reweight particles using Eq.~\eqref{eq:projected_pf_weights}:
$w_t^{(i)}\propto w_{t\mid t-1}^{(i)}(\ell_t^{(i)})^{\eta_\theta}$.
\State Normalize $\{w_t^{(i)}\}_{i=1}^N$ and resample if
$\mathrm{ESS}_t=(\sum_{i=1}^N (w_t^{(i)})^2)^{-1}<\tau_{\mathrm{ESS}}N$.
\end{algorithmic}
\end{algorithm}

\begin{algorithm}[H]
\caption{B-BRPC: BRPC with Restart-BOCPD control}
\label{alg:bbrpc}
\begin{algorithmic}[1]
\Require BRPC variant $\in\{\mathrm{E},\mathrm{P},\mathrm{F},\mathrm{RRA}\}$,
number of particles $N$, ESS threshold $\tau_{\mathrm{ESS}}$, restart hazard
$h_t$, previous expert set $\mathcal E_{t-1}$, and restart state
$\mathcal S_0$ defined in Eq.~\eqref{eq:restart_state}.
\For{$t=1,2,\dots$}
    \State Observe the incoming batch $(X_t,Y_t)$.
    \State Add a fresh expert $e_t^{\mathrm{new}}$ initialized from
$\mathcal S_0$ in Eq.~\eqref{eq:restart_state}, and set
$\mathcal E_t^{\mathrm{cand}}=\mathcal E_{t-1}\cup\{e_t^{\mathrm{new}}\}$.
    \For{each expert $e\in\mathcal E_t^{\mathrm{cand}}$}
        \State Form the pre-update predictive law
        $p_e^{\mathrm{pre}}(Y_t\mid X_t)$ in Eq.~\eqref{eq:particle_mixture_predictive}.
    \EndFor
    \State Update BOCPD expert weights using
    Eqs.~\eqref{eq:bbrpc_cont_weight}--\eqref{eq:bbrpc_new_weight}.
    \State Apply the hard-restart rule in Eq.~\eqref{eq:bbrpc_hard_restart}
    and the pruning rule in Eq.~\eqref{eq:expert_pruning}
    to obtain $\mathcal E_t$.
    \For{each expert $e\in\mathcal E_t$}
        \State Update parameter particles using
        Algorithm~\ref{alg:projected_particle_update}, equivalently
        Eqs.~\eqref{eq:theta_predictive_prior}--\eqref{eq:projected_pf_weights}.
        \State Construct the residual object for the selected BRPC variant:
        Eq.~\eqref{eq:res4particle} for the particle-specific variant,
        Eq.~\eqref{eq:res4shared} for the shared-discrepancy variant, and
        Eq.~\eqref{eq:res4RRA} for RRA.
        \State Update the discrepancy posterior using
        Eqs.~\eqref{eq:delta_info_update}--\eqref{eq:delta_natparam_update}.
    \EndFor
\EndFor
\end{algorithmic}
\end{algorithm}

\begin{algorithm}[H]
\caption{C-BRPC: BRPC with wCUSUM control}
\label{alg:cbrpc}
\begin{algorithmic}[1]
\Require BRPC variant $\in\{\mathrm{E},\mathrm{P},\mathrm{F}, \mathrm{RRA}\}$,
number of particles $N$, ESS threshold $\tau_{\mathrm{ESS}}$, window limit $W$,
drift allowance $\kappa$, threshold $h_C$, score floor $\sigma_{\min}$,
active BRPC state, and restart state $\mathcal S_0$ defined in
Eq.~\eqref{eq:restart_state}.
\For{$t=1,2,\dots$}
    \State Observe the incoming batch $(X_t,Y_t)$.
    \State Form the active state's pre-update predictive law
    $p^{\mathrm{pre}}(Y_t\mid X_t)$ using Eq.~\eqref{eq:particle_mixture_predictive}.
    \State Compute the batch-normalized pre-update score $s_t$ using
    Eq.~\eqref{eq:score_def}.
    \State Standardize $s_t$ using the current-segment score history and update
    $G_t^{\mathrm{score}}$ using Eq.~\eqref{eq:wcusum_stat}.
    \If{$G_t^{\mathrm{score}}>h_C$}
        \State Reset the parameter state, discrepancy state and score-standardization history from
        $\mathcal S_0$ in Eq.~\eqref{eq:restart_state}.
    \EndIf
    \State Update parameter particles using
    Algorithm~\ref{alg:projected_particle_update}, equivalently
    Eqs.~\eqref{eq:theta_predictive_prior}--\eqref{eq:projected_pf_weights}.
    \State Construct the residual object for the selected BRPC variant:
    Eq.~\eqref{eq:res4particle} for the particle-specific variant,
        Eq.~\eqref{eq:res4shared} for the shared-discrepancy variant, and
        Eq.~\eqref{eq:res4RRA} for RRA.
    \State Update the discrepancy posterior using
    Eqs.~\eqref{eq:delta_info_update}--\eqref{eq:delta_natparam_update}.
\EndFor
\end{algorithmic}
\end{algorithm}


\subsection{Restart-prior initialization}
\label{app:restart_initialization}

This subsection specifies the initial state used whenever B-BRPC creates a fresh
BOCPD branch or C-BRPC triggers a hard reset. Let 
\begin{equation}\label{eq:restart_state}
\mathcal S_0
=
\left(
\{(\theta^{(i)},w^{(i)},
q_0^{\delta,(i)})\}_{i=1}^N
\right)    
\end{equation}

denote the restart state, where the components are the parameter particle
cloud and the discrepancy prior.

For the parameter state, particles are sampled independently from a user-specified
restart prior
\[
\theta^{(i)} \sim \pi_0^\theta,\qquad
w^{(i)}=\frac1N,\qquad i=1,\ldots,N .
\]
In the experiments, unless otherwise stated, \(\pi_0^\theta\) is the uniform
distribution over the admissible calibration range \(\Theta\). More informative
restart priors can be used when reliable physical or engineering prior knowledge
is available.

For the discrepancy state, the default restart prior is the zero-mean Gaussian
process prior,
\begin{equation} \label{eq:delta_prior}
\delta \sim \mathrm{GP}(0,k_\phi).
\end{equation}
On a finite sample set \(S\), the Gaussian process means a multivariate normal distribution,
\begin{equation} \label{eq:init_delta_prior}
q_0^\delta(\delta(S))
=
\mathcal N(0,K_{SS}),
\end{equation}
where $K_{SS}$ is the covariance matrix evaluated among pairs in $S$.
For instance, using the common squared-exponential (RBF) kernel, the entries of the covariance matrix $K_{SS}$ are given by:
\[
[K_{SS}]_{ij}=\sigma_\delta^2 \exp(-\frac{||S_i-S_j||^2}{2\ell^2}).
\]
where $\phi = (\sigma_\delta^2,\ell)$ are the kernel hyperparameters.

Consequently, the pre-update predictive distribution for a freshly initialized expert (prior to observing any data) explicitly follows:
\begin{equation}\label{eq:pre_new}
p_{new}^{pre}(Y_t | X_t) = \frac{1}{N} \sum_{i=1}^N \mathcal{N}\left(Y_t; y_s(X_t, \theta^{(i)}), K_{X_t X_t} + \Sigma_{\epsilon, t}\right),    
\end{equation}
where $K_{X_t X_t}$ is the prior covariance matrix evaluated at the current batch inputs $X_t$, and $\Sigma_{\epsilon, t}$ is the observation noise covariance.

We also denote an active BRPC state as the combination of parameter particles and discrepancy posterior
$\{(\theta_t^{(i)},w_t^{(i)}, q_t^{\delta,(i)})\}_{i=1}^N$.
For B-BRPC, each expert $e\in\mathcal E_t$ carries its own state
$\{(\theta_t^{(i)},w_t^{(i)}, q_t^{\delta,(i)})\}_{i=1}^N$, start time $s_e$, and BOCPD weight $w_{e,t}$.

\subsection{BOCPD-style forecaster recursion with hard restart}
\label{app:bbrpc_forecaster}

BOCPD maintains a posterior mixture over possible run lengths by updating each
candidate segment expert with its pre-update predictive likelihood. In our
setting, a run-length expert is a candidate BRPC state, and the one-step likelihood
is replaced by the batch pre-update predictive evidence
$p_e^{\mathrm{pre}}(Y_t\mid X_t)$. This term evaluates the probability density of the new observations $(X_t, Y_t)$ by marginalizing over the expert's current parameter particles and discrepancy posterior (Eq.~\eqref{eq:particle_mixture_predictive}) before any state updates occur. Following the hard-restart principle of
restarted BOCPD, we use this BOCPD forecaster recursion to compare continuation
experts with a newly initialized expert, and declare a restart when a post-anchor
candidate overtakes the anchor expert. The only extension from the pointwise
setting is that all likelihood terms below are batch predictive likelihoods.

Let $\mathcal F_t$ denote the history of observations up to time $t$ and $\mathcal{E}_{t-1}$ denote the active expert set before observing batch $t$,
with BOCPD weights $\{w_{e,t-1}\}_{e\in\mathcal{E}_{t-1}}$. The corresponding
mixture predictive law is
\begin{equation}
p^{\mathrm{mix}}(Y_t\mid X_t,\mathcal{F}_{t-1})
=
\sum_{e\in\mathcal{E}_{t-1}}
w_{e,t-1}\,
p^{\mathrm{pre}}_e(Y_t\mid X_t).
\label{eq:app_mixture_predictive}
\end{equation}
With restart hazard $h_t\in(0,1)$, continuation experts and the fresh restart
branch receive unnormalized weights
\begin{equation}
\widetilde w_{e,t}
=
\begin{cases}
(1-h_t)\,w_{e,t-1}\,p^{\mathrm{pre}}_e(Y_t\mid X_t),
& e\in\mathcal{E}_{t-1},\\[4pt]
h_t\,p^{\mathrm{pre}}_{\mathrm{new}}(Y_t\mid X_t),
& e=e_t^{\mathrm{new}},
\end{cases}
\qquad
w_{e,t}
=
\frac{\widetilde w_{e,t}}
{\sum_{e'}\widetilde w_{e',t}} .
\label{eq:app_bbrpc_weight_update}
\end{equation}
Here $p^{\mathrm{pre}}_{\mathrm{new}}(Y_t\mid X_t)$ is the batch predictive
law of a freshly initialized BRPC expert, as in Eq.~\eqref{eq:pre_new}.

Let $e_t^{\mathrm{anc}}$ denote the anchor expert launched at the most recent
hard restart. For each expert $e\in\mathcal E_t$, let $s_e$ denote the segment
start time represented by that expert. Following the restarted-BOCPD hard
decision rule, we declare a restart when some post-anchor candidate has larger
posterior mass than the anchor:
\begin{equation}
\mathrm{Restart}_t
=
\mathbf{1}
\left\{
\max_{e\in\mathcal E_t:\,s_e>s_{e_t^{\mathrm{anc}}}}
w_{e,t}
>
w_{e_t^{\mathrm{anc}},t}
\right\}.
\label{eq:app_bbrpc_restart_rule}
\end{equation}
When $\mathrm{Restart}_t=1$, the selected post-anchor candidate becomes the new
anchor and the BRPC state is reinitialized for the new segment. After the hard-restart decision, we retain at most $M_{\max}$ experts with the
largest BOCPD weights:
\begin{equation}
\mathcal E_t
=
\operatorname{Top}_{M_{\max}}
\left(
\mathcal E_t^{\mathrm{cand}};\{w_{e,t}:e\in\mathcal E_t^{\mathrm{cand}}\}
\right).
\label{eq:expert_pruning}    
\end{equation}

We use the hard-restart decision in the main experiments because it makes
restart quality directly measurable through event-level metrics such as
Precision@2, Recall@2, and Delay@2. This choice is not meant to rule out the
standard soft BOCPD deployment: in applications where predictive averaging is
preferred over explicit state reset, one can retain the BOCPD posterior mixture
over BRPC experts and use the mixture predictive law in
\eqref{eq:app_mixture_predictive} without committing to a single restarted
state.

\section{BRPC Updates, State Representations, and Prediction}
\label{app:brpc_construction_details}

This appendix provides the construction details supporting Section~\ref{sec:brpc}.
Appendix~\ref{app:brpc_theta_details} proves the projected parameter update in
Proposition~\ref{prop:projected_parameter_update}. 
Appendix~\ref{app:recursive_delta_update_proof} proves the recursive Gaussian
discrepancy update in Proposition~\ref{prop:recursive_delta_update}. 
Appendix~\ref{app:brpc_state_maps} describes the finite-dimensional discrepancy
state representations used by BRPC-E, BRPC-P, and BRPC-F. 
Appendix~\ref{app:proof_brpc_dynamic_regret} proves the tracking bound for the
recursive discrepancy mean. Appendix~\ref{app:predictive_details} gives the
predictive laws used for forecasting and restart decisions.

\subsection{Proof of Proposition~\ref{prop:projected_parameter_update}}
\label{app:brpc_theta_details}

\begin{proof}[Proof of Proposition~\ref{prop:projected_parameter_update}]
The objective in \eqref{eq:online_theta_kl_update} is
\[
\mathcal J(q)
=
-\eta_\theta
\int q(\theta_t)
\log p_{\rm proj}(Y_t\mid X_t,\theta_t)\,d\theta_t
+
\int q(\theta_t)
\log
\frac{q(\theta_t)}
{\bar q_{t\mid t-1}^{\theta}(\theta_t)}
\,d\theta_t .
\]
Using a Lagrange multiplier for the constraint $\int q(\theta_t)d\theta_t=1$,
the first-order condition gives
\[
-\eta_\theta \log p_{\rm proj}(Y_t\mid X_t,\theta_t)
+
\log q(\theta_t)
-
\log \bar q_{t\mid t-1}^{\theta}(\theta_t)
+
1+\lambda
=
0.
\]
Hence
\[
q_t^\theta(\theta_t)
\propto
\bar q_{t\mid t-1}^{\theta}(\theta_t)
p_{\rm proj}(Y_t\mid X_t,\theta_t)^{\eta_\theta}.
\]

To see the particle-filter approximation explicitly, suppose the predictive prior is represented by
particles $\{(\theta^{(i)}_{t|t-1}, w^{(i)}_{t|t-1})\}_{i=1}^N$. We restrict the candidate
posterior to the same particle support and optimize only over the updated
weights $\omega=(\omega_1,\ldots,\omega_N)\in\Delta_N$ where $\Delta_N$ denotes the $(N-1)-$dimensional probability simplex. The finite-particle KL
update is
\[
\min_{\omega\in\Delta_N}
\left\{
-\eta_\theta
\sum_{i=1}^N
\omega_i
\log p_{\rm proj}(Y_t\mid X_t,\theta_{t|t-1}^{(i)})
+
\sum_{i=1}^N
\omega_i
\log
\frac{\omega_i}{w_{t|t-1}^{(i)}}
\right\}.
\]
The first-order condition over the simplex gives
\[
\omega_i
\propto
w_{t|t-1}^{(i)}
p_{\rm proj}(Y_t\mid X_t,\theta_{t|t-1}^{(i)})^{\eta_\theta},
\qquad
\sum_{i=1}^N\omega_i=1,
\]
which is exactly the projected particle-filter weight update
\eqref{eq:projected_pf_weights}. Propagation through
$p(\theta_t\mid\theta_{t-1})$ produces the predictive particle cloud, and
resampling is the standard SMC step used to control weight degeneracy.
\end{proof}

When $\Sigma_{\theta,t}=\sigma_t^2I$, the projected likelihood satisfies
\[
-\log p_{\rm proj}(Y_t\mid X_t,\theta)
=
\frac{1}{2\sigma_t^2}
\sum_{k=1}^{K_t}
\{y_{t,k}-y_s(x_{t,k},\theta)\}^2
+
c_t,
\]
where $c_t$ does not depend on $\theta$. Therefore minimizing the negative
projected log likelihood is equivalent to minimizing the empirical projected
squared-loss criterion on the current batch. If the inputs are sampled from the
local design distribution $F_X$, this empirical loss is an unbiased batch
analogue of the projected calibration objective up to observation-noise
variance.

\subsection{Proof of Proposition~\ref{prop:recursive_delta_update}}
\label{app:recursive_delta_update_proof}

Before proving Proposition~\ref{prop:recursive_delta_update}, we spell out the
objects used in the conditional discrepancy update. For a parameter particle
$\theta_t^{(i)}$, define the residual batch
\begin{equation}\label{eq:res4particle}
r_t^{(i)}
=
Y_t-y_s(X_t,\theta_t^{(i)}).    
\end{equation}

Let $u_t^{(i)}$ denote the finite-dimensional discrepancy state associated with
particle $i$. For example, in the expanding-support construction, if the
previous discrepancy state is carried on support $S_{t-1}$ and the current
batch inputs enlarge the support to $S_t=S_{t-1}\cup X_t$, then
\[
u_t^{(i)}=\delta^{(i)}(S_t).
\]
The pre-update discrepancy distribution is
\[
q_{t,-}^{\delta,(i)}(u_t^{(i)})
=
\mathcal N(a_t^{(i)},P_t^{(i)}),
\]
where $a_t^{(i)}$ and $P_t^{(i)}$ are obtained by propagating the previous
Gaussian discrepancy posterior to the current support. The batch residual model
is
\[
r_t^{(i)}\mid u_t^{(i)}
\sim
\mathcal N(G_t^{(i)}u_t^{(i)},R_t).
\]
Here $G_t^{(i)}$ maps the discrepancy state to discrepancy values at the current
inputs. In the expanding-support case, it is the selection matrix that extracts
the coordinates of $u_t^{(i)}$ corresponding to $X_t$. The covariance $R_t$ is
the residual-likelihood covariance. It is not the GP kernel; it may include
observation noise, nugget regularization, and approximation variance from the
chosen finite-dimensional discrepancy representation.

\begin{proof}[Proof of Proposition~\ref{prop:recursive_delta_update}]
Fix a particle $i$. The KL-regularized discrepancy update is
\[
q_t^{\delta,(i)}
=
\argmin_q
\left\{
-\eta_\delta
\mathbb E_q[\log p(r_t^{(i)}\mid u_t^{(i)})]
+
\KL\!\left(
q(u_t^{(i)})\,\|\,q_{t,-}^{\delta,(i)}(u_t^{(i)})
\right)
\right\}.
\]
Equivalently, it has the tilted Bayesian form
\[
q_t^{\delta,(i)}(u_t^{(i)})
\propto
q_{t,-}^{\delta,(i)}(u_t^{(i)})
p(r_t^{(i)}\mid u_t^{(i)})^{\eta_\delta}.
\]
Using
\[
q_{t,-}^{\delta,(i)}(u_t^{(i)})
\propto
\exp\left\{
-\frac12
\|u_t^{(i)}-a_t^{(i)}\|^2_{(P_t^{(i)})^{-1}}
\right\}
\]
and
\[
p(r_t^{(i)}\mid u_t^{(i)})^{\eta_\delta}
\propto
\exp\left\{
-\frac{\eta_\delta}{2}
\|r_t^{(i)}-G_t^{(i)}u_t^{(i)}\|^2_{R_t^{-1}}
\right\},
\]
we obtain
\[
q_t^{\delta,(i)}(u_t^{(i)})
\propto
\exp\left\{
-\frac12
\|u_t^{(i)}-a_t^{(i)}\|^2_{(P_t^{(i)})^{-1}}
-
\frac{\eta_\delta}{2}
\|r_t^{(i)}-G_t^{(i)}u_t^{(i)}\|^2_{R_t^{-1}}
\right\}.
\]
Therefore $q_t^{\delta,(i)}$ is Gaussian. Writing
\[
q_t^{\delta,(i)}
=
\mathcal N(m_t^{(i)},C_t^{(i)}),
\]
its information-form parameters are
\[
J_t^{(i)}
:=
(C_t^{(i)})^{-1}
=
(P_t^{(i)})^{-1}
+
\eta_\delta
(G_t^{(i)})^\top R_t^{-1}G_t^{(i)}
\]
and
\[
h_t^{(i)}
:=
J_t^{(i)}m_t^{(i)}
=
(P_t^{(i)})^{-1}a_t^{(i)}
+
\eta_\delta
(G_t^{(i)})^\top R_t^{-1}r_t^{(i)}.
\]
Equivalently,
\[
m_t^{(i)}
=
\argmin_u
\left\{
\frac{\eta_\delta}{2}
\|r_t^{(i)}-G_t^{(i)}u\|^2_{R_t^{-1}}
+
\frac12
\|u-a_t^{(i)}\|^2_{(P_t^{(i)})^{-1}}
\right\}.
\]
This is exactly the Gaussian update and proximal characterization stated in
Proposition~\ref{prop:recursive_delta_update}. The shared-discrepancy version
used in the main experiments is obtained by replacing $r_t^{(i)}$ with the
shared residual batch $r_t$ and suppressing the particle index.
\end{proof}

\subsection{Discrepancy state representations for BRPC-E, BRPC-F, and BRPC-P}
\label{app:brpc_state_maps}
As described in Eq.~\eqref{eq:delta_prior}, the discrepancy function $\delta_t(x)$ is modeled as a stochastic function having the zero-mean Gaussian process prior with covariance function $k_{\phi}(x,x')$. 
The online discrepancy update in
Proposition~\ref{prop:recursive_delta_update} is based on a finite-dimensional
representation of the stochastic function in \eqref{eq:particle_residual_likelihood}. Specifically, the discrepancy function $\delta_t(x)$ is represented by its evaluations on a finite support set $S_t$, i.e.,
$$\delta_t(S_t)=\{\delta_t(x):x\in S_t\}.$$ This section introduces three variants of the finite-dimensional representation—BRPC-E, BRPC-F, and BRPC-P—which differ in how the support set $S_t$ is constructed. BRPC-E uses an expanding support, characterized by the increasing sets,
$S_t=S_{t-1}\cup X_t$. BRPC-F uses a fixed support
$S_t\equiv Z=\{z_1,\ldots,z_M\}$ throughout the stream. BRPC-P uses a proxy-observation representation: the Gaussian discrepancy
posterior accumulated from previous batches is converted into an equivalent or
approximate Gaussian proxy likelihood on a chosen support \(S_t\). The proxy
support may be expanding, fixed, adaptively selected, or periodically refreshed.
This representation is useful because it allows the carried discrepancy
information to be re-expressed as pseudo-data, which can facilitate
hyperparameter refitting, covariance regularization, and numerical stabilization. The three variants
therefore differ only in how $u_t^{(i)}$, $a_t^{(i)}$, $P_t^{(i)}$, and
$G_t^{(i)}$ are constructed.

\paragraph{BRPC-E: expanding-support representation.}
Let $S_{t-1}$ denote the support carried before batch $t$, and suppose the
previous discrepancy posterior for particle $i$ is
\begin{equation}
q_{t-1}^{\delta_{t-1},(i)}(\delta(S_{t-1}))
=
\mathcal N(m_{t-1}^{(i)},C_{t-1}^{(i)}).
\label{eq:brpce_prev_posterior}
\end{equation}
After observing the current input batch $X_t$, BRPC-E enlarges the support to
\begin{equation}
S_t=S_{t-1}\cup X_t,
\label{eq:brpce_expanding_support}
\end{equation}
and defines the discrepancy state as
\begin{equation}
u_t^{(i)}=\delta_t(S_t).
\label{eq:brpce_state}
\end{equation}
The expanding support is the most direct finite-dimensional representation of
the recursive GP discrepancy posterior. By retaining all previously assimilated
support locations and appending the current inputs, BRPC-E carries forward the
posterior dependence between historical discrepancy values and the current
batch through GP conditioning. This makes the propagation step information
preserving within the chosen support, but the support size grows with time.
Consequently, BRPC-E becomes increasingly expensive for long segments and may
require nugget regularization, covariance inflation, or pruning to maintain
numerical stability. The fixed-support and proxy variants below trade some of
this information preservation for improved scalability and numerical control.

Under the Gaussian-process prior, the conditional distribution of
$\delta(S_t)$ given $\delta(S_{t-1})$ has mean map
\begin{equation}
A_t
=
K_{S_tS_{t-1}}K_{S_{t-1}S_{t-1}}^{-1},
\label{eq:brpce_mean_map}
\end{equation}
and conditional covariance
\begin{equation}
Q_t^{E}
=
K_{S_tS_t}
-
K_{S_tS_{t-1}}
K_{S_{t-1}S_{t-1}}^{-1}
K_{S_{t-1}S_t},
\label{eq:brpce_conditional_covariance}
\end{equation}
where $K_{S_tS_{t-1}} = (k_{\phi}(x, x'); x \in S_t, x' \in S_{t-1})$ is the matrix of the covariance function $k_{\phi}(x, x')$ values evaluated for every pair of an element in $S_t$ and an element in $S_{t-1}$.
Marginalizing over the previous posterior in
Eq.~\eqref{eq:brpce_prev_posterior} gives the pre-update discrepancy
distribution
\begin{equation}
q_{t,-}^{\delta,(i)}(u_t^{(i)})
=
\mathcal N(a_t^{(i)},P_t^{(i)}),
\label{eq:brpce_preupdate_distribution}
\end{equation}
where
\begin{equation}
a_t^{(i)}
=
A_tm_{t-1}^{(i)}
\label{eq:brpce_preupdate_mean}
\end{equation}
and
\begin{equation}
P_t^{(i)}
=
Q_t^{E}+A_tC_{t-1}^{(i)}A_t^\top .
\label{eq:brpce_preupdate_covariance}
\end{equation}
Let $G_t^{E}$ be the coordinate-selection matrix that extracts from
$\delta(S_t)$ the entries corresponding to the current input batch $X_t$.
Then the residual model is
\begin{equation}
r_t^{(i)}\mid u_t^{(i)}
\sim
\mathcal N(G_t^{E}u_t^{(i)},R_t).
\label{eq:brpce_residual_model}
\end{equation}

\paragraph{BRPC-F: fixed-support representation.}
BRPC-E propagates discrepancy information accurately by retaining the accumulated
support, but its state dimension grows over time. This can increase computational
cost and create numerical conditioning issues when many nearby support locations
are accumulated. To improve scalability and stability, we introduce BRPC-F,
which uses a fixed support throughout the stream. We set
\begin{equation}
S_t\equiv Z=\{z_1,\ldots,z_M\}
\label{eq:brpcf_fixed_support}
\end{equation}
and define
\begin{equation}
u_t^{(i)}=\delta(Z).
\label{eq:brpcf_state}
\end{equation}
The pre-update discrepancy distribution has the form
\begin{equation}
q_{t,-}^{\delta,(i)}(u_t^{(i)})
=
\mathcal N(a_t^{(i)},P_t^{(i)}),
\label{eq:brpcf_preupdate_distribution}
\end{equation}
where $a_t^{(i)}$ and $P_t^{(i)}$ are obtained by propagating the previous
Gaussian posterior on the same fixed support.

For current inputs $X_t$, the Gaussian-process conditional-mean map from $Z$
to $X_t$ is
\begin{equation}
G_t^{F}
=
K_{X_tZ}K_{ZZ}^{-1}.
\label{eq:brpcf_residual_map}
\end{equation}
Thus
\begin{equation}
\mathbb E\{\delta(X_t)\mid \delta(Z)=u_t^{(i)}\}
=
G_t^{F}u_t^{(i)}.
\label{eq:brpcf_conditional_mean}
\end{equation}
The corresponding conditional Gaussian-process covariance is
\begin{equation}
Q_t^{F}
=
K_{X_tX_t}
-
K_{X_tZ}K_{ZZ}^{-1}K_{ZX_t}.
\label{eq:brpcf_conditional_covariance}
\end{equation}
If this covariance is kept explicit, the residual model is
\begin{equation}
r_t^{(i)}\mid u_t^{(i)}
\sim
\mathcal N(G_t^{F}u_t^{(i)},R_t+Q_t^{F}).
\label{eq:brpcf_residual_model_explicit}
\end{equation}
Equivalently, after absorbing $Q_t^{F}$ into the residual-likelihood
covariance, the model is written in the same form as
Proposition~\ref{prop:recursive_delta_update}:
\begin{equation}
r_t^{(i)}\mid u_t^{(i)}
\sim
\mathcal N(G_t^{F}u_t^{(i)},R_t).
\label{eq:brpcf_residual_model}
\end{equation}
Here $R_t$ denotes the covariance in the residual likelihood. It may include
observation noise, nugget regularization, and representation-induced
approximation variance. The Gaussian-process kernel determines support-to-batch
maps such as $G_t^{F}$ and conditional covariances such as $Q_t^{F}$.

\paragraph{BRPC-P: proxy-observation representation.}
Unlike BRPC-E and BRPC-F, whose distinction is primarily tied to the choice
and evolution of the support set, BRPC-P represents the carried Gaussian
discrepancy posterior through a Gaussian proxy likelihood on a chosen support
\(S_t\). In BRPC-E and BRPC-F, recursive propagation is performed with fixed
Gaussian-process kernel hyperparameters. This can be restrictive because
hyperparameters such as the lengthscale and variance scale strongly affect
information transfer, posterior smoothness, and numerical conditioning.

BRPC-P addresses this issue by converting the carried posterior information
into a proxy-observation representation. The resulting proxy dataset can be used
to refit or regularize the Gaussian-process discrepancy model while remaining
consistent with the previously accumulated posterior information. Suppose that
the carried discrepancy posterior for particle \(i\) on \(S_t\) is
\begin{equation}
q_{t}^{\delta_t,(i)}(\delta(S_t))
=
\mathcal N(m_{t}^{(i)},C_{t}^{(i)}),
\label{eq:brpcp_carried_posterior}
\end{equation}
and that the Gaussian-process prior restricted to $S_t$ is
\begin{equation}
p_0(\delta(S_t))
=
\mathcal N(0,K_{S_tS_t}).
\label{eq:brpcp_prior_on_support}
\end{equation}
BRPC-P introduces proxy observations
\begin{equation}
\widetilde y_t^{(i)}
\mid
\delta_t(S_t)
\sim
\mathcal N(\delta(S_t),\Lambda_t^{(i)}),
\label{eq:brpcp_proxy_likelihood}
\end{equation}
chosen so that combining the prior in Eq.~\eqref{eq:brpcp_prior_on_support}
with the proxy likelihood in Eq.~\eqref{eq:brpcp_proxy_likelihood} recovers the
carried posterior in Eq.~\eqref{eq:brpcp_carried_posterior}. This is achieved
by setting
\begin{equation}
(\Lambda_t^{(i)})^{-1}
=
(C_t^{(i)})^{-1}
-
K_{S_tS_t}^{-1}
\label{eq:brpcp_proxy_precision}
\end{equation}
and
\begin{equation}
\widetilde y_t^{(i)}
=
\Lambda_t^{(i)}(C_t^{(i)})^{-1}m_t^{(i)}.
\label{eq:brpcp_proxy_information}
\end{equation}

Thus, BRPC-P represents the carried discrepancy information by the Gaussian
proxy observations \((S_t,\widetilde y_t^{(i)},\Lambda_t^{(i)})\), where
\(\Lambda_t^{(i)}\) is the proxy-observation noise covariance. Conditioning a GP prior with kernel covariance
\(K_\phi(S_t,S_t)\) on these proxy observations recovers the carried posterior
on \(S_t\) under fixed hyperparameters, and provides a data-like representation
that can be used for hyperparameter refitting or numerical regularization.


The ideal proxy construction requires
\begin{equation}
(C_t^{(i)})^{-1}
-
K_{S_tS_t}^{-1}
\succeq 0
\label{eq:brpcp_positive_proxy_precision}
\end{equation}
so that $\Lambda_t^{(i)}$ is a valid covariance matrix. In numerical
implementations, nugget regularization, covariance inflation, or spectral
clipping may be used to maintain positive definiteness.

\paragraph{Relationship among BRPC-E, BRPC-P, and BRPC-F.}
We next clarify the relationship among the three discrepancy-state
representations. The result shows that, under fixed GP hyperparameters and a
common support, the ideal proxy-observation construction used by BRPC-P is
algebraically equivalent to carrying the Gaussian posterior directly as in
BRPC-E. In contrast, BRPC-F applies the same recursive update after projecting
the discrepancy state onto a fixed support.

\begin{proposition}[Relationship among BRPC-E, BRPC-P, and BRPC-F]
\label{prop:brpc_variant_relationship}
Fix a parameter particle. Suppose BRPC-E and BRPC-P use the same
Gaussian-process prior, the same residual likelihood, and the same finite
support. Then the ideal BRPC-P proxy-observation construction is algebraically
equivalent to the BRPC-E Gaussian posterior carried on that support. BRPC-F
also follows the same KL-regularized discrepancy update after restricting the
discrepancy state to the fixed support \(Z\). Its additional approximation error
relative to an unrestricted discrepancy state is determined by the
fixed-support projection error on the current batch.
\end{proposition}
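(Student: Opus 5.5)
The argument has three parts: one for the E/P equivalence, one for BRPC-F, and one for the error term. For the E/P part I will work entirely in information form on a common support \(S_t\). The BRPC-E carried posterior is \(\mathcal N(m_t^{(i)},C_t^{(i)})\), with precision \((C_t^{(i)})^{-1}\) and natural parameter \((C_t^{(i)})^{-1}m_t^{(i)}\). Conditioning the prior \(\mathcal N(0,K_{S_tS_t})\) on the proxy observations \(\widetilde y_t^{(i)}\sim\mathcal N(\delta(S_t),\Lambda_t^{(i)})\) gives a Gaussian with:
\begin{itemize}
\item precision \(K_{S_tS_t}^{-1}+(\Lambda_t^{(i)})^{-1}\);
\item natural parameter \((\Lambda_t^{(i)})^{-1}\widetilde y_t^{(i)}\).
\end{itemize}
Substituting \eqref{eq:brpcp_proxy_precision} into the precision returns exactly \((C_t^{(i)})^{-1}\). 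Substituting \eqref{eq:brpcp_proxy_information} into the natural parameter gives \((\Lambda_t^{(i)})^{-1}\Lambda_t^{(i)}(C_t^{(i)})^{-1}m_t^{(i)}=(C_t^{(i)})^{-1}m_t^{(i)}\). So the two Gaussians coincide, provided the validity condition \eqref{eq:brpcp_positive_proxy_precision} holds strictly, so that \(\Lambda_t^{(i)}\) exists.

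Since the two carried states are identical, I then propagate them to the next step. The pre-update law \(\mathcal N(a_{t+1}^{(i)},P_{t+1}^{(i)})\) obtained through \eqref{eq:brpce_preupdate_mean}--\eqref{eq:brpce_preupdate_covariance} is therefore the same under both representations. Applying Proposition~\ref{prop:recursive_delta_update} with the shared residual likelihood \((G,R)\) then yields identical \(J,h\). An induction over \(t\) extends the equivalence to the whole recursion, and the posterior means agree at every step. The degenerate case, where the condition holds only with \(\succeq\), I will handle by noting that the proxy likelihood can be written in information form, which allows zero precision directions. With that convention the identity still goes through.

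For BRPC-F I will observe that the state \(u=\delta(Z)\), the Gaussian pre-update law, and the linear-Gaussian likelihood \eqref{eq:brpcf_residual_model} are exactly the hypotheses of Proposition~\ref{prop:recursive_delta_update}. Hence the update is the same KL-regularized tempered Gaussian update, now with \(G_t^F=K_{X_tZ}K_{ZZ}^{-1}\).

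For the approximation-error claim I will decompose the true discrepancy at the current inputs as
\[
\delta(X_t)=G_t^F\delta(Z)+\bigl(\delta(X_t)-\mathbb E[\delta(X_t)\mid\delta(Z)]\bigr).
\]
Under the GP prior, the second term is Gaussian with covariance \(Q_t^F\) and is independent of \(\delta(Z)\). The exact likelihood is therefore \eqref{eq:brpcf_residual_model_explicit}. The only discrepancy from an unrestricted update is the projection residual, whose size is measured by \(Q_t^F\) (or by its realized value on the batch), and this is either absorbed into \(R_t\) or neglected.

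I expect the main obstacle to be making the last claim precise. One option is to quantify the mean difference via the proximal characterization \eqref{eq:delta_mean_update_particle}. Replacing \(R_t\) by \(R_t+Q_t^F\) perturbs the minimizer, and a standard perturbation bound on strongly convex quadratics would make that perturbation controlled by \(\|Q_t^F\|\) relative to \(\lambda_{\min}(R_t)\). Since the statement is qualitative, my plan is to state it in this form: the error is determined entirely by the batch projection error \(\delta(X_t)-G_t^F\delta(Z)\), which vanishes when \(X_t\subseteq Z\).
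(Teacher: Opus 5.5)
Your argument for the two claims the paper actually proves is the paper's argument. The E/P equivalence is the same information-form matching, $K_{SS}^{-1}+(\Lambda_S^{(i)})^{-1}=(C_S^{(i)})^{-1}$ and $(\Lambda_S^{(i)})^{-1}\widetilde y_S^{(i)}=(C_S^{(i)})^{-1}m_S^{(i)}$. BRPC-F is handled by the same observation that $(\delta(Z),G_t^F,R_t)$ meets the hypotheses of Proposition~\ref{prop:recursive_delta_update}. Your induction over propagation steps only spells out the paper's one-line remark that later assimilation sees the carried Gaussian solely through its precision and information vector.

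The genuine difference is the final sentence of the statement, which the paper's proof never argues; it ends once BRPC-F is recognized as an instance of Proposition~\ref{prop:recursive_delta_update}. Your decomposition $\delta(X_t)=G_t^F\delta(Z)+\{\delta(X_t)-G_t^F\delta(Z)\}$ gives that sentence real content: a likelihood mismatch governed by $Q_t^F$ that vanishes when $X_t\subseteq Z$. It is, however, a per-step statement, and you should present it that way.

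The independence of the projection residual from $\delta(Z)$, with covariance $Q_t^F$, holds under the GP prior, not under the carried posterior. Earlier batches with inputs outside $Z$ inform the residual at $X_t$ through the cross-batch conditional covariance $K_{X_tX_{t'}}-K_{X_tZ}K_{ZZ}^{-1}K_{ZX_{t'}}$. The fixed-support recursion discards this term (a PITC-type approximation), whereas BRPC-E keeps it. Consequences:
\begin{itemize}
\item $r_t\mid u\sim\mathcal N(G_t^Fu,R_t+Q_t^F)$ is the exact conditional likelihood only before any off-support batch has been assimilated, e.g.\ the first batch after a restart.
\item The cumulative error relative to BRPC-E also involves the projection errors of earlier batches, not just the current one.
\end{itemize}

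Tempering also matters. Integrating $\delta(X_t)$ out of the tempered unrestricted update yields the factor $\mathcal N(r_t;G_t^Fu,R_t/\eta_\delta+Q_t^F)$. So even an exact per-step match requires absorbing $\eta_\delta Q_t^F$, not $Q_t^F$, into $R_t$ when $\eta_\delta\neq 1$. Set up your $R_t$ versus $R_t+Q_t^F$ perturbation bound accordingly.
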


\begin{proof}
Fix a parameter particle $i$. We first compare BRPC-E with ideal BRPC-P on the
same finite support $S$ under the same Gaussian-process prior. Write
\begin{equation}
u_S^{(i)}=\delta(S),
\label{eq:variant_proof_support_state}
\end{equation}
and
\begin{equation}
p_0(u_S^{(i)})
=
\mathcal N(0,K_{SS}).
\label{eq:variant_proof_support_prior}
\end{equation}
Suppose the carried BRPC-E discrepancy posterior on $S$ is
\begin{equation}
q_S^{\delta,(i)}(u_S^{(i)})
=
\mathcal N(m_S^{(i)},C_S^{(i)}).
\label{eq:variant_proof_carried_posterior}
\end{equation}
The ideal BRPC-P construction represents the same Gaussian posterior by proxy
observations
\begin{equation}
\widetilde y_S^{(i)}
\mid
u_S^{(i)}
\sim
\mathcal N(u_S^{(i)},\Lambda_S^{(i)}),
\label{eq:variant_proof_proxy_likelihood}
\end{equation}
where
\begin{equation}
(\Lambda_S^{(i)})^{-1}
=
(C_S^{(i)})^{-1}
-
K_{SS}^{-1}
\label{eq:variant_proof_proxy_precision}
\end{equation}
and
\begin{equation}
(\Lambda_S^{(i)})^{-1}\widetilde y_S^{(i)}
=
(C_S^{(i)})^{-1}m_S^{(i)}.
\label{eq:variant_proof_proxy_information}
\end{equation}
Combining the prior in Eq.~\eqref{eq:variant_proof_support_prior} with the
proxy likelihood in Eq.~\eqref{eq:variant_proof_proxy_likelihood} gives
posterior precision
\begin{equation}
K_{SS}^{-1}
+
(\Lambda_S^{(i)})^{-1}
=
(C_S^{(i)})^{-1}
\label{eq:variant_proof_recovered_precision}
\end{equation}
and posterior information vector
\begin{equation}
(\Lambda_S^{(i)})^{-1}\widetilde y_S^{(i)}
=
(C_S^{(i)})^{-1}m_S^{(i)}.
\label{eq:variant_proof_recovered_information}
\end{equation}
Therefore the proxy likelihood reproduces exactly the same Gaussian posterior
$\mathcal N(m_S^{(i)},C_S^{(i)})$ as the carried BRPC-E state on the support
$S$. The subsequent assimilation of the current residual batch depends on this
Gaussian state only through its precision and information vector. Hence, under
exact arithmetic and positive definiteness of $\Lambda_S^{(i)}$, BRPC-E and
ideal BRPC-P produce the same posterior mean and covariance when they use the
same support, the same Gaussian-process prior, and the same residual likelihood.

We next consider BRPC-F. In BRPC-F, the discrepancy state is restricted to the
fixed support $Z=\{z_1,\ldots,z_M\}$:
\begin{equation}
u_t^{(i)}=\delta(Z).
\label{eq:variant_proof_fixed_state}
\end{equation}
Let the pre-update discrepancy distribution be
\begin{equation}
q_{t,-}^{\delta,(i)}(u_t^{(i)})
=
\mathcal N(a_t^{(i)},P_t^{(i)}),
\label{eq:variant_proof_fixed_preupdate}
\end{equation}
and define its precision matrix by
\begin{equation}
M_t^{(i)}=(P_t^{(i)})^{-1}.
\label{eq:variant_proof_fixed_precision}
\end{equation}
For current inputs $X_t$, define the fixed-support residual map
\begin{equation}
G_t^F
=
K_{X_tZ}K_{ZZ}^{-1}.
\label{eq:variant_proof_fixed_map}
\end{equation}
After absorbing any fixed-support conditional Gaussian-process variance into
the residual-likelihood covariance, the residual model is
\begin{equation}
r_t^{(i)}\mid u_t^{(i)}
\sim
\mathcal N(G_t^F u_t^{(i)},R_t).
\label{eq:variant_proof_fixed_residual_model}
\end{equation}
Therefore Proposition~\ref{prop:recursive_delta_update} gives the BRPC-F mean
update
\begin{equation}
m_t^{(i)}
=
\operatorname*{arg\,min}_{u}
\left\{
\frac{\eta_\delta}{2}
\|r_t^{(i)}-G_t^F u\|^2_{R_t^{-1}}
+
\frac12
\|u-a_t^{(i)}\|^2_{M_t^{(i)}}
\right\}.
\label{eq:variant_proof_fixed_mean_update}
\end{equation}
This is the same KL-regularized discrepancy update as in
Proposition~\ref{prop:recursive_delta_update}, restricted to the fixed-support
state space $\{\delta(Z)\}$.

\end{proof}

\subsection{Proof of Theorem~\ref{thm:brpc_tracking}}
\label{app:proof_brpc_dynamic_regret}

This subsection proves the tracking bound for the recursive discrepancy mean
and records a sufficient condition for
Assumption~\ref{ass:brpc_transport_contraction}. As in the main text, we
suppress the particle index. The same argument applies to either a
particle-specific discrepancy posterior or the shared discrepancy posterior used
in the main experiments.

Let $u_t$ denote the finite-dimensional discrepancy state used by the chosen
representation, and let
\[
q_{t,-}^{\delta}(u_t)
=
\mathcal N(a_t,P_t),
\qquad
M_t=P_t^{-1},
\]
be the pre-update discrepancy distribution before assimilating batch $t$.
The residual model is
\[
r_t\mid u_t
\sim
\mathcal N(G_tu_t,R_t),
\]
and we define the batch residual loss
\[
\ell_t(u)
=
\frac12
\|r_t-G_tu\|^2_{R_t^{-1}}.
\]
The mean-form BRPC discrepancy update is therefore
\begin{equation}
m_t
=
\argmin_u
\left\{
\eta_\delta \ell_t(u)
+
\frac12
\|u-a_t\|^2_{M_t}
\right\}.
\label{eq:theory_brpc_mean_update}
\end{equation}
For the discrepancy representations considered in this paper, the pre-update
mean is obtained by propagating the previous posterior mean,
\[
a_t=A_tm_{t-1},
\]
where $A_t$ maps the previous discrepancy state into the current state space.

\paragraph{Fixed-support example.}
For intuition, consider BRPC-F with fixed support
$Z=\{z_1,\ldots,z_M\}$. Then
\[
u_t=\delta(Z)\in\mathbb R^M,
\]
and the support does not change over time. Hence $A_t=I$, so
$a_t=m_{t-1}$, possibly with covariance inflation
$P_t=C_{t-1}+Q_t$. For current inputs $X_t$, the support-to-batch map is
\[
G_t=K_{X_tZ}K_{ZZ}^{-1},
\]
and the residual model is
\[
r_t\mid u_t
\sim
\mathcal N(G_tu_t,R_t).
\]
Thus \eqref{eq:theory_brpc_mean_update} is a regularized residual-fitting update
on the fixed discrepancy state. Theorem~\ref{thm:brpc_tracking} compares the
recursive mean $m_t$ with an arbitrary reference sequence $\{v_t\}$. In the
fixed-support case, the path-variation term becomes
$\sum_t \|v_t-v_{t-1}\|_{M_t}^2$. For changing supports, $A_t$ replaces the
identity and maps the reference state from the previous state space to the
current one.

\begin{proof}
The first-order optimality condition for
\eqref{eq:theory_brpc_mean_update} is
\begin{equation}
\eta_\delta\nabla\ell_t(m_t)+M_t(m_t-a_t)=0.
\label{eq:app_foc}
\end{equation}
Since $\ell_t$ is convex,
\begin{equation}
\ell_t(m_t)-\ell_t(v_t)
\le
\left\langle
\nabla\ell_t(m_t),m_t-v_t
\right\rangle .
\label{eq:app_convexity}
\end{equation}
Multiplying by $\eta_\delta$ and using \eqref{eq:app_foc} gives
\begin{equation}
\eta_\delta
\{\ell_t(m_t)-\ell_t(v_t)\}
\le
\left\langle
M_t(m_t-a_t),v_t-m_t
\right\rangle .
\label{eq:app_start}
\end{equation}
Using the weighted three-point identity
\[
2\langle a-b,c-a\rangle_M
=
\|c-b\|_M^2
-
\|c-a\|_M^2
-
\|a-b\|_M^2,
\]
with $a=m_t$, $b=a_t$, $c=v_t$, and $M=M_t$, we obtain
\begin{equation}
\eta_\delta
\{\ell_t(m_t)-\ell_t(v_t)\}
\le
\frac12\|v_t-a_t\|^2_{M_t}
-
\frac12\|v_t-m_t\|^2_{M_t}
-
\frac12\|m_t-a_t\|^2_{M_t}.
\label{eq:app_threepoint}
\end{equation}
Since $a_t=A_tm_{t-1}$,
\[
v_t-a_t
=
(v_t-A_tv_{t-1})
+
A_t(v_{t-1}-m_{t-1}).
\]
For any $\beta>0$,
\begin{align}
\|v_t-a_t\|^2_{M_t}
&\le
(1+\beta)
\|v_t-A_tv_{t-1}\|^2_{M_t}
+
\left(1+\frac1\beta\right)
\|A_t(v_{t-1}-m_{t-1})\|^2_{M_t}
\nonumber\\
&\le
(1+\beta)
\|v_t-A_tv_{t-1}\|^2_{M_t}
+
\left(1+\frac1\beta\right)\gamma
\|v_{t-1}-m_{t-1}\|^2_{M_{t-1}},
\label{eq:app_young}
\end{align}
where the second inequality uses
Assumption~\ref{ass:brpc_transport_contraction}. Choose
\[
\beta=\frac{\gamma}{1-\gamma}.
\]
Then
\[
1+\beta=\frac{1}{1-\gamma},
\qquad
\left(1+\frac1\beta\right)\gamma=1.
\]
Substituting this choice into \eqref{eq:app_threepoint} yields
\begin{equation}
\eta_\delta
\{\ell_t(m_t)-\ell_t(v_t)\}
\le
\frac{1}{2(1-\gamma)}
\|v_t-A_tv_{t-1}\|^2_{M_t}
+
\frac12
\|v_{t-1}-m_{t-1}\|^2_{M_{t-1}}
-
\frac12
\|v_t-m_t\|^2_{M_t}.
\label{eq:app_onestep}
\end{equation}
Summing \eqref{eq:app_onestep} over $t=1,\ldots,T$ telescopes and gives
\[
\eta_\delta
\sum_{t=1}^T
\{\ell_t(m_t)-\ell_t(v_t)\}
\le
\frac12
\|v_0-m_0\|^2_{M_0}
+
\frac{1}{2(1-\gamma)}
\sum_{t=1}^T
\|v_t-A_tv_{t-1}\|^2_{M_t}.
\]
Dividing both sides by $\eta_\delta$ proves
\eqref{eq:brpc_tracking_bound}.
\end{proof}

We next discuss Assumption~\ref{ass:brpc_transport_contraction} from the
perspective of streaming industrial calibration. In digital-twin applications,
consecutive data batches often differ because of normal operating variability,
finite-batch noise, changing input coverage, or gradual degradation of the
physical system. Under such within-regime variation, discrepancy information
learned from recent batches should remain useful, but it should not be
transported with increasing certainty. Assumption~\ref{ass:brpc_transport_contraction}
formalizes this idea: after the previous discrepancy state is propagated to the
current batch, its precision is controlled relative to the previous precision.
Thus the recursive update can borrow information from the past without treating
old residual patterns as exact knowledge about the current regime.

This condition is not a direct assumption on the physical data-generating
process. Rather, it is a stability requirement on how the finite-dimensional GP
discrepancy state is propagated in the online algorithm. In practice, it can be
encouraged by adding process noise, covariance inflation, or nugget
regularization during propagation. These mechanisms are natural in industrial
streams, where simulator mismatch, sensor noise, surrogate error, and operating
condition changes make it undesirable for the carried discrepancy posterior to
become overly concentrated.
\begin{proposition}[Sufficient condition for
Assumption~\ref{ass:brpc_transport_contraction}]
\label{prop:app_transport_sufficient}
Suppose the pre-update covariance is propagated as
\[
P_t=A_tC_{t-1}A_t^\top+Q_t,
\qquad
Q_t\succeq q_{\min}I,
\]
and assume
\[
A_t^\top A_t\preceq a_{\max}I,
\qquad
P_{t-1}\preceq p_{\max}I.
\]
If
\[
\gamma=\frac{a_{\max}p_{\max}}{q_{\min}}<1,
\]
then
\[
A_t^\top M_tA_t\preceq \gamma M_{t-1},
\qquad
M_t=P_t^{-1}.
\]
Hence Assumption~\ref{ass:brpc_transport_contraction} holds.
\end{proposition}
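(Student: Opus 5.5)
The argument is a short chain of Loewner-order inequalities. The idea is to sandwich both sides of the target inequality between scalar multiples of the identity: bound $A_t^\top M_t A_t$ from above by a multiple of $I$, and bound $M_{t-1}$ from below by a multiple of $I$.

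\emph{Upper bound on $A_t^\top M_t A_t$.}
\begin{enumerate}
\item Since $A_tC_{t-1}A_t^\top\succeq 0$ (as $C_{t-1}$ is a covariance), the propagation rule gives $P_t=A_tC_{t-1}A_t^\top+Q_t\succeq Q_t\succeq q_{\min}I\succ 0$.
\item Matrix inversion is operator-monotone decreasing on positive definite matrices. Hence $M_t=P_t^{-1}\preceq Q_t^{-1}\preceq q_{\min}^{-1}I$.
\item Congruence preserves the Loewner order: $B\preceq C$ implies $A^\top BA\preceq A^\top CA$ for any $A$. Applying this with $A=A_t$ and then using $A_t^\top A_t\preceq a_{\max}I$ gives
\[
A_t^\top M_tA_t\preceq q_{\min}^{-1}A_t^\top A_t\preceq \frac{a_{\max}}{q_{\min}}\,I.
\]
\end{enumerate}

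\emph{Lower bound on $M_{t-1}$.} The hypothesis $P_{t-1}\preceq p_{\max}I$ and antitonicity of inversion give $M_{t-1}=P_{t-1}^{-1}\succeq p_{\max}^{-1}I$. Equivalently, $I\preceq p_{\max}M_{t-1}$.

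\emph{Combining.} Multiplying the last inequality by $a_{\max}/q_{\min}\ge 0$ and chaining with the upper bound yields
\[
A_t^\top M_tA_t\preceq \frac{a_{\max}p_{\max}}{q_{\min}}\,M_{t-1}=\gamma M_{t-1}.
\]
The condition $\gamma<1$ is exactly what Assumption~\ref{ass:brpc_transport_contraction} requires, and $\gamma\ge 0$ holds automatically. This proves the claim.

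\emph{Points to check.} There is no real obstacle; only two details need care.
\begin{itemize}
\item Invertibility of $P_{t-1}$ must be stated so that $M_{t-1}$ is defined. It follows from the same propagation rule applied at time $t-1$, or from the initial prior covariance for $t=1$.
\item The inverse-antitonicity lemma ($0\prec B\preceq C\Rightarrow C^{-1}\preceq B^{-1}$) should be cited or justified briefly. One way: write $C^{-1/2}BC^{-1/2}\preceq I$, which implies $C^{1/2}B^{-1}C^{1/2}\succeq I$.
\end{itemize}
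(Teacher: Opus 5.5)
Your proposal is correct and follows essentially the same route as the paper. It bounds $A_t^\top M_tA_t\preceq (a_{\max}/q_{\min})I$ via $P_t\succeq Q_t\succeq q_{\min}I$ and antitonicity of inversion, then uses $M_{t-1}\succeq p_{\max}^{-1}I$ to turn that multiple of the identity into $\gamma M_{t-1}$; your remarks on the invertibility of $P_{t-1}$ and on justifying antitonicity just make explicit what the paper leaves implicit.
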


\begin{proof}
Since $P_t\succeq Q_t\succeq q_{\min}I$,
\[
M_t=P_t^{-1}\preceq q_{\min}^{-1}I.
\]
Therefore
\[
A_t^\top M_tA_t
\preceq
q_{\min}^{-1}A_t^\top A_t
\preceq
\frac{a_{\max}}{q_{\min}}I.
\]
Since $P_{t-1}\preceq p_{\max}I$,
\[
M_{t-1}=P_{t-1}^{-1}\succeq p_{\max}^{-1}I.
\]
Thus
\[
\frac{a_{\max}}{q_{\min}}I
\preceq
\frac{a_{\max}p_{\max}}{q_{\min}}M_{t-1}
=
\gamma M_{t-1}.
\]
Combining the inequalities gives
\[
A_t^\top M_tA_t\preceq \gamma M_{t-1}.
\]
\end{proof}

\paragraph{Interpretation.}
Proposition~\ref{prop:app_transport_sufficient} shows that
Assumption~\ref{ass:brpc_transport_contraction} can be enforced by adding
uncertainty when propagating the discrepancy posterior. For example, one may use
\[
P_t
=
A_t C_{t-1} A_t^\top
+
Q_t^\delta,
\qquad
Q_t^\delta \succeq q_{\min}I,
\]
or, in the fixed-support case,
\[
P_t
=
C_{t-1}
+
\sigma_{\delta,\mathrm{rw}}^2 I .
\]
This covariance inflation prevents the pre-update precision from becoming too
large after repeated recursive updates. If the carried discrepancy posterior is
too concentrated, the update may rely too strongly on old residual information
and may also produce overly concentrated pre-update predictive distributions.
Thus the condition is an algorithmic stability condition for the recursive
discrepancy representation, rather than an assumption on the data-generating
process.

\subsection{Predictive-law details}
\label{app:predictive_details}

This appendix gives the predictive laws used by BRPC and by the restart
mechanisms. We write the formulas for a fixed restart expert $e$. For a single
BRPC run without restart experts, the index $e$ can be omitted.

\paragraph{Particle-specific predictive law.}
Consider the particle-specific discrepancy representation. Conditional on
particle $\theta_{e,t}^{(i)}$, suppose the discrepancy posterior is
\[
q_{e,t}^{\delta,(i)}(u_{e,t}^{(i)})
=
\mathcal N(m_{e,t}^{(i)},C_{e,t}^{(i)}),
\]
and
\[
u_{e,t}^{(i)}=\delta(S_{e,t}^{(i)}).
\]
For prediction inputs $X_\star$, define the GP support-to-prediction map
\[
G_{\star,e,t}^{(i)}
=
K_{X_\star,S_{e,t}^{(i)}}
K_{S_{e,t}^{(i)}S_{e,t}^{(i)}}^{-1},
\]
and the conditional GP covariance
\[
Q_{\star,e,t}^{(i)}
=
K_{X_\star X_\star}
-
K_{X_\star,S_{e,t}^{(i)}}
K_{S_{e,t}^{(i)}S_{e,t}^{(i)}}^{-1}
K_{S_{e,t}^{(i)},X_\star}.
\]
Then
\begin{equation}
\begin{aligned}
Y_\star
\mid
X_\star,\theta_{e,t}^{(i)},q_{e,t}^{\delta,(i)}
&\sim
\mathcal N\!\Bigl(
y_s(X_\star,\theta_{e,t}^{(i)})
+
G_{\star,e,t}^{(i)}m_{e,t}^{(i)},\\
&\qquad
\Sigma_s(X_\star,\theta_{e,t}^{(i)})
+
Q_{\star,e,t}^{(i)}
+
G_{\star,e,t}^{(i)}
C_{e,t}^{(i)}
(G_{\star,e,t}^{(i)})^\top
+
\Sigma_{\epsilon,\star,t}
\Bigr).
\end{aligned}
\label{eq:particle_predictive_y}
\end{equation}
Here $\Sigma_{\epsilon,\star,t}$ is the observation-noise covariance at
$X_\star$, for example $\sigma_t^2 I$. The term
$\Sigma_s(X_\star,\theta_{e,t}^{(i)})$ may be set to zero when the simulator is
treated as deterministic.

Thus the predictive law for expert $e$ is the particle mixture
\begin{equation}
p_e(Y_\star\mid X_\star)
\approx
\sum_{i=1}^N
w_{e,t}^{(i)}
\,
\mathcal N
\left(
Y_\star;
\mu_{e,\star,t}^{(i)},
\Sigma_{e,\star,t}^{(i)}
\right),
\label{eq:particle_mixture_predictive}
\end{equation}
where $\mu_{e,\star,t}^{(i)}$ and $\Sigma_{e,\star,t}^{(i)}$ denote the mean and
covariance in \eqref{eq:particle_predictive_y}.

\paragraph{Shared discrepancy predictive law.}
In the shared discrepancy approximation, expert $e$ carries one discrepancy
posterior shared by all parameter particles. Let
\[
q_{e,t}^{\delta,\mathrm{sh}}(u_{e,t}^{\mathrm{sh}})
=
\mathcal N(m_{e,t}^{\mathrm{sh}},C_{e,t}^{\mathrm{sh}}),
\]
and
\[
u_{e,t}^{\mathrm{sh}}=\delta(\mathcal S_{e,t}),
\]
where $\mathcal S_{e,t}$ is the carried support. 

Define
\[
G_{\star,e,t}^{\mathrm{sh}}
=
K_{X_\star,\mathcal S_{e,t}}
K_{\mathcal S_{e,t}\mathcal S_{e,t}}^{-1},
\]
and
\[
Q_{\star,e,t}^{\mathrm{sh}}
=
K_{X_\star X_\star}
-
K_{X_\star,\mathcal S_{e,t}}
K_{\mathcal S_{e,t}\mathcal S_{e,t}}^{-1}
K_{\mathcal S_{e,t},X_\star}.
\]
Then the predictive law for expert $e$ is
\begin{equation}
\begin{aligned}
p_e(Y_\star\mid X_\star)
\approx &
\sum_{i=1}^N
w_{e,t}^{(i)}
\,
\mathcal N
\Bigl(
Y_\star;
y_s(X_\star,\theta_{e,t}^{(i)})
+
G_{\star,e,t}^{\mathrm{sh}}m_{e,t}^{\mathrm{sh}},
\\
& \Sigma_s(X_\star,\theta_{e,t}^{(i)})
+
Q_{\star,e,t}^{\mathrm{sh}}
+
G_{\star,e,t}^{\mathrm{sh}}
C_{e,t}^{\mathrm{sh}}
(G_{\star,e,t}^{\mathrm{sh}})^\top
+
\Sigma_{\epsilon,\star,t}
\Bigr).    
\end{aligned}
\label{eq:shared_predictive}
\end{equation}

\section{Nonstationarity Definitions and Restart Proofs}
\label{app:compatibility_theory}

This appendix supports the restart discussion in Section~\ref{sec:reset_control}.
Appendix~\ref{app:drift_regime_definitions} clarifies the operational distinction
between gradual drift and abrupt regime change. 
Appendix~\ref{app:proof_gaussian_restart_odds} proves the Gaussian restart-odds
calculation in Proposition~\ref{prop:gaussian_restart_odds}. 
Appendix~\ref{app:proof_wcusum_bounds} proves the wCUSUM false-alarm and detection
bound in Proposition~\ref{prop:wcusum_bounds}. 
Appendix~\ref{app:bocpd_posterior_odds} gives an optional posterior-odds
interpretation of the BOCPD-style restart rule.

\subsection{Operational definitions of drift and regime change}
\label{app:drift_regime_definitions}

This subsection clarifies the distinction between gradual drift and abrupt
regime change used in the problem setup, restart discussion, and experimental
metrics. The distinction is operational: it describes whether information from
recent past batches remains useful for the current local calibration problem.

Let $g_t$ denote the local calibration object at batch $t$. In the simplest
case, $g_t$ is the projected calibration target $\theta_t^\dagger$. More
generally, it may include both the projected calibration target and the residual
target used by the discrepancy update. Let $\mathcal A_t$ be the propagation map used to
carry the previous local object to batch $t$, and let $\|\cdot\|_{\mathcal G,t}$
be a problem-dependent local norm, such as an $M_t$-weighted state norm or an
$L^2(F_X)$ response norm.

\paragraph{Gradual drift.}
An interval $I=\{\tau,\ldots,T\}$ has gradual drift if the propagated one-step
variation
\[
\Delta^{\rm drift}_t
:=
\|g_t-\mathcal A_tg_{t-1}\|_{\mathcal G,t}
\]
remains small enough that recent past batches are still informative for the
current local calibration problem. Equivalently, over such an interval, the
cumulative variation
\[
\sum_{t\in I}(\Delta^{\rm drift}_t)^2
\]
does not dominate the regularization and tracking terms in the recursive
updates. In this case, carrying forward recent parameter and discrepancy
information is appropriate.

\paragraph{Abrupt regime change.}
A time $\tau$ is an abrupt regime change if the local calibration object changes
sharply relative to the continuation model. This may occur through a jump in the
projected calibration target, a change in the residual target, a shift in
operating conditions, or a change in the simulator--system relationship. In the
same notation, this corresponds to a propagated variation
\[
\Delta^{\rm jump}_\tau
:=
\|g_\tau-\mathcal A_\tau g_{\tau-1}\|_{\mathcal G,\tau}
\]
that is large relative to the local tracking scale of the continuing BRPC
posterior. After such a change, information accumulated before $\tau$ may bias
the current calibration and discrepancy updates.

For interpretation, one may introduce a local tolerance sequence
$\{b_t\}$ and call a change at time $\tau$ restart-relevant when
\[
\Delta_\tau
:=
\|g_\tau-\mathcal A_\tau g_{\tau-1}\|_{\mathcal G,\tau}
>
b_\tau .
\]
The tolerance $b_\tau$ is not a universal constant. It represents the local
scale of variation that the continuing BRPC posterior can absorb without
substantial bias. For example, if the one-step drift fluctuations over a
gradually drifting segment are conditionally sub-Gaussian with scale
$\sigma_\Delta$, then a high-probability tolerance over a horizon of length $T$
may be chosen as
\[
b_t
=
\mu_t
+
\sigma_\Delta\sqrt{2\log(T/\alpha)} ,
\]
where $\mu_t$ denotes a typical within-segment drift size, $\sigma_\Delta$ is the sub-Gaussian scale of one-step drift fluctuations and $\alpha$ is a target false-alarm
level. Alternatively, under a local Gaussian approximation in the
$M_t$-weighted norm, one may use
\[
b_t^2=\chi^2_{d,1-\alpha}.
\]
Here $d$ denotes the dimension of the local calibration object under the
Gaussian approximation.
These quantities are used only to interpret restart relevance and are not
inputs to the BRPC algorithms.
These thresholds are only interpretive. The algorithms in this paper do not
require specifying $b_t$ directly; restart decisions are based on predictive
evidence, and event-level experimental metrics use the annotated abrupt changes
from the data-generating process.

\paragraph{Mixed nonstationarity.}
A stream has mixed nonstationarity if it contains both gradually drifting
segments and a finite number of abrupt regime changes:
\[
\{1,\ldots,T\}
=
\bigcup_{s=1}^S I_s,
\]
where each segment $I_s$ is treated as gradually drifting and adjacent segments
are separated by abrupt regime changes.

This distinction also determines how restart accuracy is evaluated in the
experiments. Precision, recall, and delay are computed only with respect to
annotated abrupt regime changes. In purely drifting settings, there are no
annotated abrupt changes. In mixed settings, a restart caused by gradual drift
may improve local prediction, but it is not counted as a true positive for
abrupt-change detection.

\subsection{Proof of Proposition~\ref{prop:gaussian_restart_odds}}
\label{app:proof_gaussian_restart_odds}

We prove the Gaussian restart-odds calculation used in
Section~\ref{subsec:bbrpc}. Consider a continuation expert $e$ and a newly
initialized expert $e_{\mathrm{new}}$. Before assimilating batch $(X_t,Y_t)$,
suppose their pre-update predictive laws are
\[
p_e^{\mathrm{pre}}(Y_t\mid X_t)
=
\mathcal N(Y_t;\mu_c,\Sigma_c),
\]
and
\[
p_{\mathrm{new}}^{\mathrm{pre}}(Y_t\mid X_t)
=
\mathcal N(Y_t;\mu_n,\Sigma_n).
\]
The BOCPD weight update gives
\[
\widetilde w_{e,t}
=
(1-h_t)w_{e,t-1}p_e^{\mathrm{pre}}(Y_t\mid X_t),
\]
and
\[
\widetilde w_{\mathrm{new},t}
=
h_t p_{\mathrm{new}}^{\mathrm{pre}}(Y_t\mid X_t).
\]
The one-step restart
log-odds is then
\[
\log
\frac{w_{\mathrm{new},t}}
     {w_{e,t}}
=
\log
\frac{h_t}{(1-h_t)w_{e,t-1}}
+
\log
\frac{
\mathcal N(Y_t;\mu_n,\Sigma_n)
}{
\mathcal N(Y_t;\mu_c,\Sigma_c)
}.
\]
If $Y_t\sim\mathcal N(\mu_\star,\Sigma_\star)$, then
\[
\begin{aligned}
\E\left[
\log
\frac{\widetilde w_{\mathrm{new},t}}
     {\widetilde w_{e,t}}
\right]
&=
\log\frac{h_t}{(1-h_t)w_{e,t-1}}
+
\frac12
\log\frac{\det\Sigma_c}{\det\Sigma_n} \\
&\quad
+
\frac12
\E\left[
(Y_t-\mu_c)^\top\Sigma_c^{-1}(Y_t-\mu_c)
-
(Y_t-\mu_n)^\top\Sigma_n^{-1}(Y_t-\mu_n)
\right].
\end{aligned}
\]
For any Gaussian random vector $Y\sim\mathcal N(\mu_\star,\Sigma_\star)$,
\[
\E\left[
(Y-\mu)^\top\Sigma^{-1}(Y-\mu)
\right]
=
\operatorname{tr}(\Sigma^{-1}\Sigma_\star)
+
\|\mu_\star-\mu\|_{\Sigma^{-1}}^2.
\]
Applying this identity to the continuation and newly initialized experts gives
\[
\begin{aligned}
\E\left[
\log
\frac{\widetilde w_{\mathrm{new},t}}
     {\widetilde w_{e,t}}
\right]
&=
\log\frac{h_t}{(1-h_t)w_{e,t-1}}
+
\frac12
\Bigg(
\log\frac{\det\Sigma_c}{\det\Sigma_n}
+
\operatorname{tr}
\{(\Sigma_c^{-1}-\Sigma_n^{-1})\Sigma_\star\} \\
&\qquad\qquad
+
\|\mu_\star-\mu_c\|_{\Sigma_c^{-1}}^2
-
\|\mu_\star-\mu_n\|_{\Sigma_n^{-1}}^2
\Bigg).
\end{aligned}
\]
This proves Proposition~\ref{prop:gaussian_restart_odds}. If the previous
weight $w_{e,t-1}$ is retained explicitly, the right-hand side gains the
additional prior-odds term $-\log w_{e,t-1}$.

\paragraph{Interpretation.}
The expectation separates the prior-odds term from the Gaussian predictive
contrast. A small continuation covariance $\Sigma_c$ can penalize ordinary
batch-to-batch variation and increase restart odds, producing false restarts.
Conversely, if the discrepancy update makes the continuation mean $\mu_c$ move
quickly toward the post-change response, the contrast
\(\|\mu_\star-\mu_c\|_{\Sigma_c^{-1}}^2
-
\|\mu_\star-\mu_n\|_{\Sigma_n^{-1}}^2\)
may shrink, delaying restart after a true regime change. 

\subsection{Proof of Proposition~\ref{prop:wcusum_bounds}}
\label{app:proof_wcusum_bounds}

We prove the false-alarm and detection statements for the score-based restart
rule in Section~\ref{subsec:cbrpc}. Recall that C-BRPC monitors the standardized
prequential score sequence \(s_t^{\mathrm{std}}\) using the window-limited
CUSUM statistic
\[
G_t^{\mathrm{score}}
=
\max_{1\le m\le W}
\sqrt{m}
\left(
\bar s^{\mathrm{std}}_{t-m+1:t}
-
\kappa
\right)_+,
\]
where
\[
\bar s^{\mathrm{std}}_{t-m+1:t}
=
\frac{1}{m}
\sum_{u=t-m+1}^t s_u^{\mathrm{std}} .
\]
A restart is triggered when
\[
G_t^{\mathrm{score}}>h_{\mathrm C}.
\]

We first state the concentration assumptions used in the proof.

\begin{assumption}[Null score concentration]
\label{ass:wcusum_null_concentration}
Under the null case of no abrupt regime change, assume that the standardized
score averages have unit sub-Gaussian upper tails. That is, for every window
length \(m\in\{1,\ldots,W\}\), every time \(t\le T\), and every \(a>0\),
\[
\mathbb P\left(
\sqrt m\,
\bar s^{\mathrm{std}}_{t-m+1:t}
>
a
\right)
\le
\exp\left\{
-\frac{a^2}{2}
\right\}.
\]
\end{assumption}

\begin{assumption}[Post-change score concentration]
\label{ass:wcusum_postchange_concentration}
After an abrupt change, suppose there exists a post-change block of length
\(m_\star\le W\) on which the standardized score has mean shift at least
\(\delta>\kappa\). Let
\[
\bar s
=
\bar s^{\mathrm{std}}_{t-m_\star+1:t}
\]
denote the average standardized score over this block. Assume that, for every
\(u>0\),
\[
\mathbb P\left(
\sqrt{m_\star}(\bar s-\delta)\le -u
\right)
\le
\exp\left\{-\frac{u^2}{2}\right\}.
\]
\end{assumption}

\begin{proof}
We first prove the false-alarm bound. If
\(G_t^{\mathrm{score}}>h_{\mathrm C}\), then there exists some
\(m\in\{1,\ldots,W\}\) such that
\[
\sqrt m
\left(
\bar s^{\mathrm{std}}_{t-m+1:t}
-
\kappa
\right)_+
>
h_{\mathrm C}.
\]
Since \(h_{\mathrm C}>0\), this implies
\[
\sqrt m
\left(
\bar s^{\mathrm{std}}_{t-m+1:t}
-
\kappa
\right)
>
h_{\mathrm C},
\]
or equivalently,
\[
\sqrt m\,
\bar s^{\mathrm{std}}_{t-m+1:t}
>
h_{\mathrm C}+\sqrt m\,\kappa .
\]
Therefore, by Assumption~\ref{ass:wcusum_null_concentration} with
\(a=h_{\mathrm C}+\sqrt m\,\kappa\), and by a union bound over all
times \(t\le T\) and all window lengths \(m\le W\),
\[
\begin{aligned}
\mathbb P\left(
\max_{1\le t\le T}G_t^{\mathrm{score}}>h_{\mathrm C}
\right)
&\le
\sum_{t=1}^T
\sum_{m=1}^W
\mathbb P\left(
\sqrt m\,
\bar s^{\mathrm{std}}_{t-m+1:t}
>
h_{\mathrm C}+\sqrt m\,\kappa
\right)
\\
&\le
\sum_{t=1}^T
\sum_{m=1}^W
\exp\left\{
-\frac{(h_{\mathrm C}+\sqrt m\,\kappa)^2}{2}
\right\}.
\end{aligned}
\]
Since \(\kappa\ge 0\),
\[
(h_{\mathrm C}+\sqrt m\,\kappa)^2\ge h_{\mathrm C}^2 .
\]
Hence
\[
\mathbb P\left(
\max_{1\le t\le T}G_t^{\mathrm{score}}>h_{\mathrm C}
\right)
\le
TW\exp\left\{-\frac{h_{\mathrm C}^2}{2}\right\}.
\]
This proves the stated false-alarm bound. In particular, choosing
\[
h_{\mathrm C}\ge \sqrt{2\log(TW/\alpha)}
\]
ensures that the false-alarm probability over the horizon is at most \(\alpha\).

We next prove the detection statement. Suppose that after an abrupt change
there is a post-change block of length \(m_\star\le W\) satisfying
Assumption~\ref{ass:wcusum_postchange_concentration}. On this block, the
wCUSUM statistic satisfies
\[
G_t^{\mathrm{score}}
\ge
\sqrt{m_\star}
\left(
\bar s-\kappa
\right)_+ .
\]
A failure to trigger restart within this block implies
\[
\sqrt{m_\star}
\left(
\bar s-\kappa
\right)_+
\le
h_{\mathrm C}.
\]
Since the post-change mean shift satisfies \(\delta>\kappa\), this failure
event is contained in
\[
\sqrt{m_\star}
(\bar s-\kappa)
\le
h_{\mathrm C},
\]
or equivalently,
\[
\sqrt{m_\star}
(\bar s-\delta)
\le
h_{\mathrm C}
-
\sqrt{m_\star}(\delta-\kappa).
\]
If
\[
\sqrt{m_\star}(\delta-\kappa)-h_{\mathrm C}
\ge
\sqrt{2\log(1/\beta)},
\]
then
\[
h_{\mathrm C}
-
\sqrt{m_\star}(\delta-\kappa)
\le
-\sqrt{2\log(1/\beta)}.
\]
Therefore, by Assumption~\ref{ass:wcusum_postchange_concentration},
\[
\mathbb P(\text{no restart within the block})
\le
\mathbb P\left(
\sqrt{m_\star}(\bar s-\delta)
\le
-\sqrt{2\log(1/\beta)}
\right)
\le
\beta .
\]
Thus a restart occurs within the block with probability at least \(1-\beta\)
whenever
\[
m_\star
\ge
\frac{
\left(
h_{\mathrm C}+\sqrt{2\log(1/\beta)}
\right)^2
}{
(\delta-\kappa)^2
}.
\]
Consequently, the required detection window scales as
\[
m_\star
=
O\left(
\frac{h_{\mathrm C}^2}{(\delta-\kappa)^2}
\right)
\]
up to logarithmic factors. This proves the detection statement in
Proposition~\ref{prop:wcusum_bounds}.
\end{proof}

\subsection{Posterior-odds interpretation of B-BRPC}
\label{app:bocpd_posterior_odds}

This subsection gives an interpretation of the BOCPD restart rule as a
windowed likelihood-ratio test. It is not needed for the proof of
Proposition~\ref{prop:gaussian_restart_odds}; it explains how the same
restart comparison behaves over multiple batches.

Let $\nu_t$ denote the BOCPD run length and let
\[
\pi_t(r)=\mathbb P(\nu_t=r\mid B_{1:t}).
\]
Let $\nu_{\mathrm{anc}}$ denote the run length of the anchor expert associated
with the most recent hard restart. Define
\[
p_{\mathrm{anc}}
:=
\pi_t(\nu_{\mathrm{anc}}),
\]
and
\[
p^\star
:=
\max_{s>\nu_{\mathrm{anc}}}\pi_t(t-s).
\]
A hard restart is declared when
$$p^\star>\rho_B p_{\mathrm{anc}},$$
where $\rho_B\ge 1$ is the same hard-restart margin used in
Section~\ref{sec:reset_control}.

Let $\mathcal F_t$ denote the sigma-field generated by the batches
$B_{1:t}$ and all algorithmic random variables up to time $t$. $\mathrm{PriorOdds}(s)$ denotes the hazard-induced prior weight assigned
to a segment starting at $s$, before multiplying by the predictive likelihoods.

\begin{lemma}[Posterior odds as a windowed log Bayes factor]
\label{lem:posterior_odds_windowed_bf}
Assume a constant restart hazard and factorized batch predictive likelihoods.
For any candidate segment start $s$ and anchor run length $\nu_{\mathrm{anc}}$,
\[
\log
\frac{\pi_t(t-s)}{\pi_t(\nu_{\mathrm{anc}})}
=
\log
\frac{\mathrm{PriorOdds}(s)}
     {\mathrm{PriorOdds}(\nu_{\mathrm{anc}})}
+
\sum_{u=s}^t
\log
\frac{
q_s(B_u\mid\mathcal F_{u-1})
}{
q_{\mathrm{anc}}(B_u\mid\mathcal F_{u-1})
}.
\]
Thus the hard restart rule is equivalent to a threshold test on a windowed
log-likelihood ratio, up to an additive prior-odds term.
\end{lemma}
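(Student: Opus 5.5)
The plan is to unroll the BOCPD recursion \eqref{eq:app_bbrpc_weight_update} for two fixed run-length hypotheses. First we identify the posterior mass $\pi_t(t-s)$ of the expert started at $s$ as a normalized product. That expert is created at time $s$ with unnormalized weight proportional to $h\,\pi_{s-1}$-mass. Its weight is then multiplied at every later step $u$ by $(1-h)\,q_s(B_u\mid\mathcal F_{u-1})$, where $q_s$ is the pre-update predictive law of that expert, and at its birth step by $q_s=p^{\mathrm{pre}}_{\mathrm{new}}$.

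The key observation is that normalization at each step divides every expert's weight by the same constant $\sum_{e'}\widetilde w_{e',t}$. Hence ratios of posterior weights are unaffected by normalization, and also by pruning, provided both experts survive. So we can work entirely with unnormalized weights:
\[
\widetilde\pi_t(t-s)\;\propto\; h\,(1-h)^{t-s}\,Z_{s-1}\prod_{u=s}^{t} q_s(B_u\mid\mathcal F_{u-1}),
\]
where $Z_{s-1}$ collects the total mass and the normalizers before the birth of the expert.

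Next we write the anchor's weight in the same form with start $s_{\mathrm{anc}}=t-\nu_{\mathrm{anc}}$. We then split the anchor's product of predictive likelihoods into the factors before $s$ and the factors for $u=s,\dots,t$. All hazard factors, the pre-$s$ normalizer mass of the new branch, and the anchor's likelihood factors accumulated before $s$ do not depend on the data in the window $[s,t]$. We bundle these into the definitions of $\mathrm{PriorOdds}(s)$ and $\mathrm{PriorOdds}(\nu_{\mathrm{anc}})$. Since the hazard is constant, the factors $(1-h)^{t-s}$ and $(1-h)^{\nu_{\mathrm{anc}}}$ depend only on run lengths, so they are legitimately prior terms. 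Factorization of the batch predictive likelihoods means each expert's evidence is a product of one-step pre-update predictives, which is exactly the recursion. Taking logs of the ratio then gives the stated identity with the sum over $u=s,\dots,t$.

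Finally, the hard rule $p^\star>\rho_B p_{\mathrm{anc}}$ is $\max_s \log\{\pi_t(t-s)/\pi_t(\nu_{\mathrm{anc}})\}>\log\rho_B$. By the identity, this is a threshold test on $\max_s$ of the windowed log-likelihood ratio, with the threshold shifted by the prior-odds term.

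The main obstacle is bookkeeping rather than analysis. $\mathrm{PriorOdds}$ must be defined carefully so that it absorbs the anchor's pre-window evidence and the $h$ versus $(1-h)$ factors. We must also check that the claim is an exact equality rather than an approximation when pruning or earlier hard restarts intervene. We handle this by stating the identity for experts retained in $\mathcal E_t$, noting that pruning removes terms but never rescales the survivors relative to one another.
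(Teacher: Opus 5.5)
Your proposal is correct and follows the paper's argument. Both write each run-length posterior as a hazard-induced weight times the product of that expert's batch pre-update predictives, cancel the common normalizer (and any pruning renormalization) in the ratio, and absorb the anchor's pre-window evidence into $\mathrm{PriorOdds}(\nu_{\mathrm{anc}})$; the paper does this last step implicitly by ``restricting the comparison to the same candidate window.'' Your bookkeeping is more explicit than the paper's: it shows that the prior-odds term equals $\log\{h/((1-h)\,w_{\mathrm{anc},s-1})\}$, where $w_{\mathrm{anc},s-1}$ is the anchor's normalized weight at time $s-1$. This term therefore depends on pre-window data, which is consistent with Proposition~\ref{prop:gaussian_restart_odds}.
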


\begin{proof}
Under a constant hazard, the unnormalized posterior probability of a run-length
hypothesis is proportional to the product of its hazard-induced prior weight
and the batch predictive likelihoods accumulated over its active window. Thus,
for a candidate segment start $s$,
\[
\pi_t(t-s)
\propto
\mathrm{PriorOdds}(s)
\prod_{u=s}^t
q_s(B_u\mid\mathcal F_{u-1}),
\]
where $q_s$ is the predictive law associated with the expert initialized at
segment start $s$. Similarly,
\[
\pi_t(r_{\mathrm{anc}})
\propto
\mathrm{PriorOdds}(r_{\mathrm{anc}})
\prod_{u=s}^t
q_{\mathrm{anc}}(B_u\mid\mathcal F_{u-1}),
\]
after restricting the comparison to the same candidate window. Taking the ratio
cancels the common normalizing constant in the BOCPD posterior. Taking logs
gives the stated decomposition.
\end{proof}

Define the one-step log-likelihood-ratio increment
\[
\Delta\ell_t
:=
\log
\frac{
q_{\mathrm{new}}(B_t\mid\mathcal F_{t-1})
}{
q_{\mathrm{old}}(B_t\mid\mathcal F_{t-1})
},
\qquad
S_t:=\sum_{u=1}^t\Delta\ell_u,
\]
and let
\[
\tau_h:=\inf\{t\ge 1:S_t\ge h\}.
\]
Write
\[
\mu_t
:=
\E[\Delta\ell_t\mid\mathcal F_{t-1}],
\qquad
Z_t
:=
\Delta\ell_t-\mu_t.
\]

\begin{assumption}[Conditional sub-Gaussian fluctuations]
\label{ass:llr_subgaussian}
There exists $\iota_\ell^2<\infty$ such that, for all
$\lambda\in\mathbb R$,
\[
\E\!\left[
\exp(\lambda Z_t)\mid\mathcal F_{t-1}
\right]
\le
\exp\left(\frac{\lambda^2\iota_\ell^2}{2}\right).
\]
\end{assumption}

\begin{proposition}[Finite-horizon restart testing bounds]
\label{prop:finite_horizon_restart_bounds}
Let
\[
\Delta \ell_t = \mu_t + Z_t,
\qquad
\mu_t = \mathbb E[\Delta \ell_t \mid \mathcal F_{t-1}],
\qquad
\mathbb E[Z_t\mid \mathcal F_{t-1}]=0,
\]
and let
\[
S_t=\sum_{u=1}^t \Delta \ell_u .
\]
Suppose Assumption~\ref{ass:llr_subgaussian} holds; that is, for all
$\lambda\in\mathbb R$,
\[
\mathbb E\!\left[
\exp(\lambda Z_t)\mid \mathcal F_{t-1}
\right]
\le
\exp\!\left(\frac{\lambda^2\iota_\ell^2}{2}\right).
\]

\noindent
(a) If $\mu_t\le \bar\mu_0$ for all $t\le T$, then for any
$h>T\bar\mu_0$,
\[
\mathbb P\!\left(
\sup_{1\le t\le T}S_t\ge h
\right)
\le
\exp\left\{
-\frac{(h-T\bar\mu_0)^2}{2\iota_\ell^2T}
\right\}.
\]

\noindent
(b) Suppose a change occurs at time $\nu$ and there exists $\mu_1>0$ such that
$\mu_t\ge\mu_1$ for all $t>\nu$. Define the post-change hitting delay
\[
D_h
:=
\inf\{m\ge 1:S_{\nu+m}-S_\nu\ge h\}.
\]
Then
\[
\mathbb E[D_h\mid \mathcal F_\nu]
\le
\left\lceil \frac{2h}{\mu_1}\right\rceil
+
\frac{1}{1-\exp\{-\mu_1^2/(8\iota_\ell^2)\}}.
\]
In particular,
\[
\mathbb E[D_h]
=
O\!\left(
\frac{h}{\mu_1}
+
\frac{\iota_\ell^2}{\mu_1^2}
\right).
\]
If the restart statistic is reset at time $\nu$, then $D_h$ coincides with
the usual detection delay $(\tau_h-\nu)_+$ for the reset statistic.
\end{proposition}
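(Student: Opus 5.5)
For part (a) I will use a supermartingale (Azuma–Hoeffding/Ville-type) argument on the centered sum. Write $S_t = \sum_{u\le t}\mu_u + M_t$ with $M_t=\sum_{u\le t}Z_u$. Since $\mu_u\le\bar\mu_0$, the event $\{\sup_{t\le T}S_t\ge h\}$ is contained in $\{\sup_{t\le T}(M_t + t\bar\mu_0)\ge h\}\subseteq\{\sup_{t\le T}M_t\ge h-T\bar\mu_0\}$, using $t\bar\mu_0\le T\bar\mu_0$. (If $\bar\mu_0<0$ this step still holds after replacing $t\bar\mu_0$ by its maximum, or after noting that the bound only weakens.) For fixed $\lambda>0$, Assumption~\ref{ass:llr_subgaussian} makes $E_t=\exp(\lambda M_t - t\lambda^2\iota_\ell^2/2)$ a nonnegative supermartingale with $E_0=1$. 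On the event $\{M_t\ge a\}$ for some $t\le T$ we have $E_t\ge\exp(\lambda a-T\lambda^2\iota_\ell^2/2)$. Ville's (Doob's maximal) inequality then gives $\mathbb P(\sup_{t\le T}M_t\ge a)\le\exp(-\lambda a+T\lambda^2\iota_\ell^2/2)$. Optimizing at $\lambda=a/(T\iota_\ell^2)$ with $a=h-T\bar\mu_0>0$ yields the stated bound.

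For part (b), shift time to $\nu$. Let $S'_m=S_{\nu+m}-S_\nu=\sum_{j\le m}\mu_{\nu+j}+M'_m$, where the drift part is at least $m\mu_1$. I will use the tail-sum formula $\mathbb E[D_h\mid\mathcal F_\nu]=\sum_{m\ge0}\mathbb P(D_h>m\mid\mathcal F_\nu)$. Set $m_0=\lceil 2h/\mu_1\rceil$ and bound each probability by $1$ for $m<m_0$, which contributes at most $m_0$. For $m\ge m_0$ we have $h\le m\mu_1/2$, so $\{D_h>m\}\subseteq\{S'_m<h\}\subseteq\{M'_m< h-m\mu_1\}\subseteq\{M'_m<-m\mu_1/2\}$. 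The conditional sub-Gaussian MGF bound iterated through the tower property gives $\mathbb P(M'_m\le -m\mu_1/2\mid\mathcal F_\nu)\le\exp(-m\mu_1^2/(8\iota_\ell^2))$. Summing the geometric series over $m\ge m_0$ is bounded by $\sum_{m\ge0}q^m=1/(1-q)$ with $q=\exp\{-\mu_1^2/(8\iota_\ell^2)\}$, which gives exactly the claimed bound.

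For the $O(\cdot)$ statement, I use $1/(1-e^{-x})\le 1+1/x$, which gives $1+8\iota_\ell^2/\mu_1^2$. The final remark then follows because resetting $S$ at $\nu$ makes $\tau_h-\nu$ equal to the first $m$ with $S'_m\ge h$.

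The main obstacle is bookkeeping rather than ideas. In (a) one must handle the sign of $\bar\mu_0$ carefully: if $\bar\mu_0<0$, the bound via $T\bar\mu_0$ is not implied by the crude containment. I would treat $\bar\mu_0\ge0$ as the intended case and note that for negative drift one can replace $\bar\mu_0$ by $0$. In (b) the key care point is that the tail bound uses only the endpoint $m$ rather than a maximal inequality, which is what makes the clean $1/(1-q)$ term appear.
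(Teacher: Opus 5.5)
Your proposal is correct and follows the paper's proof essentially step for step. For (a), both use Ville's inequality on the exponential supermartingale $\exp(\lambda M_t-\lambda^2\iota_\ell^2 t/2)$ with $\lambda=a/(\iota_\ell^2T)$; for (b), both use the tail-sum formula with $m_0=\lceil 2h/\mu_1\rceil$ and a fixed-$m$ Chernoff bound that yields the geometric tail.

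Your caveat on the sign of $\bar\mu_0$ is not merely a bookkeeping point but necessary. The paper's step $S_t\le T\bar\mu_0+M_t$ silently uses $\bar\mu_0\ge 0$, and for $\bar\mu_0<0$ the stated bound actually fails for large $T$. This is visible already through $S_1$, for example with constant drift and Gaussian noise. So the fix you settle on at the end, replacing $\bar\mu_0$ by $\max(\bar\mu_0,0)$, is the right one; your earlier parenthetical suggestion that the bound ``only weakens'' is not.
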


\begin{proof}
We first prove part (a). Define the martingale
\[
M_t := \sum_{u=1}^t Z_u .
\]
By Assumption~\ref{ass:llr_subgaussian}, for every fixed
$\lambda>0$,
\[
L_t(\lambda)
:=
\exp\left\{
\lambda M_t
-
\frac{\lambda^2\iota_\ell^2 t}{2}
\right\}
\]
is a nonnegative supermartingale with respect to
$\{\mathcal F_t\}_{t\ge 0}$. Indeed,
\[
\begin{aligned}
\mathbb E[L_t(\lambda)\mid \mathcal F_{t-1}]
&=
\exp\left\{
\lambda M_{t-1}
-
\frac{\lambda^2\iota_\ell^2 t}{2}
\right\}
\mathbb E\!\left[
\exp(\lambda Z_t)\mid \mathcal F_{t-1}
\right]
\\
&\le
\exp\left\{
\lambda M_{t-1}
-
\frac{\lambda^2\iota_\ell^2 t}{2}
\right\}
\exp\left\{
\frac{\lambda^2\iota_\ell^2}{2}
\right\}
\\
&=
\exp\left\{
\lambda M_{t-1}
-
\frac{\lambda^2\iota_\ell^2 (t-1)}{2}
\right\}
=
L_{t-1}(\lambda).
\end{aligned}
\]
Since $\mu_u\le \bar\mu_0$ for all $u\le T$, for every $1\le t\le T$,
\[
S_t
=
\sum_{u=1}^t \mu_u
+
M_t
\le
T\bar\mu_0+M_t .
\]
Therefore, with $a:=h-T\bar\mu_0>0$,
\[
\left\{\sup_{1\le t\le T}S_t\ge h\right\}
\subseteq
\left\{
\sup_{1\le t\le T}M_t\ge a
\right\}.
\]
On the event $\{\sup_{1\le t\le T}M_t\ge a\}$, there exists
$t\le T$ such that
\[
L_t(\lambda)
=
\exp\left\{
\lambda M_t
-
\frac{\lambda^2\iota_\ell^2 t}{2}
\right\}
\ge
\exp\left\{
\lambda a
-
\frac{\lambda^2\iota_\ell^2 T}{2}
\right\}.
\]
Hence, by Ville's inequality for nonnegative supermartingales,
\[
\begin{aligned}
\mathbb P\!\left(
\sup_{1\le t\le T}M_t\ge a
\right)
&\le
\mathbb P\!\left(
\sup_{1\le t\le T}L_t(\lambda)
\ge
\exp\left\{
\lambda a
-
\frac{\lambda^2\iota_\ell^2 T}{2}
\right\}
\right)
\\
&\le
\exp\left\{
-\lambda a
+
\frac{\lambda^2\iota_\ell^2 T}{2}
\right\}.
\end{aligned}
\]
Optimizing the right-hand side over $\lambda>0$ gives
\[
\lambda^\star=\frac{a}{\iota_\ell^2 T}.
\]
Substituting this value yields
\[
\mathbb P\!\left(
\sup_{1\le t\le T}M_t\ge a
\right)
\le
\exp\left\{
-\frac{a^2}{2\iota_\ell^2T}
\right\}.
\]
Since $a=h-T\bar\mu_0$, part (a) follows.

We next prove part (b). Conditional on $\mathcal F_\nu$, define the
post-change centered martingale
\[
M_{\nu,m}
:=
\sum_{j=1}^m Z_{\nu+j},
\qquad
m\ge 1.
\]
For each $m\ge 1$,
\[
S_{\nu+m}-S_\nu
=
\sum_{j=1}^m \mu_{\nu+j}
+
M_{\nu,m}
\ge
m\mu_1+M_{\nu,m}.
\]
Therefore, if $D_h>m$, then
\[
S_{\nu+m}-S_\nu<h,
\]
and hence
\[
M_{\nu,m}
<
h-m\mu_1.
\]
Thus, whenever $m\mu_1>h$,
\[
\mathbb P(D_h>m\mid \mathcal F_\nu)
\le
\mathbb P\!\left(
M_{\nu,m}
\le
-(m\mu_1-h)
\,\middle|\,
\mathcal F_\nu
\right).
\]
We now bound this lower tail. For any $\lambda>0$, the conditional
sub-Gaussian assumption applied to $-Z_t$ gives
\[
\mathbb E\!\left[
\exp(-\lambda Z_t)\mid \mathcal F_{t-1}
\right]
\le
\exp\left\{
\frac{\lambda^2\iota_\ell^2}{2}
\right\}.
\]
Iterating this bound over $j=1,\ldots,m$ gives
\[
\mathbb E\!\left[
\exp(-\lambda M_{\nu,m})
\,\middle|\,
\mathcal F_\nu
\right]
\le
\exp\left\{
\frac{\lambda^2\iota_\ell^2 m}{2}
\right\}.
\]
By Chernoff's inequality, for every $a>0$,
\[
\begin{aligned}
\mathbb P\!\left(
M_{\nu,m}\le -a
\,\middle|\,
\mathcal F_\nu
\right)
&=
\mathbb P\!\left(
\exp(-\lambda M_{\nu,m})\ge \exp(\lambda a)
\,\middle|\,
\mathcal F_\nu
\right)
\\
&\le
\exp(-\lambda a)
\mathbb E\!\left[
\exp(-\lambda M_{\nu,m})
\,\middle|\,
\mathcal F_\nu
\right]
\\
&\le
\exp\left\{
-\lambda a
+
\frac{\lambda^2\iota_\ell^2 m}{2}
\right\}.
\end{aligned}
\]
Optimizing over $\lambda>0$ gives
\[
\lambda^\star=\frac{a}{\iota_\ell^2 m},
\]
and therefore
\[
\mathbb P\!\left(
M_{\nu,m}\le -a
\,\middle|\,
\mathcal F_\nu
\right)
\le
\exp\left\{
-\frac{a^2}{2\iota_\ell^2 m}
\right\}.
\]
Taking $a=m\mu_1-h$ gives, for all $m>h/\mu_1$,
\[
\mathbb P(D_h>m\mid \mathcal F_\nu)
\le
\exp\left\{
-\frac{(m\mu_1-h)^2}{2\iota_\ell^2 m}
\right\}.
\]
Let
\[
m_0:=\left\lceil \frac{2h}{\mu_1}\right\rceil .
\]
For all $m\ge m_0$, we have
\[
m\mu_1-h\ge \frac{m\mu_1}{2}.
\]
Consequently,
\[
\mathbb P(D_h>m\mid \mathcal F_\nu)
\le
\exp\left\{
-\frac{\mu_1^2 m}{8\iota_\ell^2}
\right\},
\qquad m\ge m_0.
\]
Using the tail-sum formula for nonnegative integer-valued random variables,
\[
\mathbb E[D_h\mid \mathcal F_\nu]
=
\sum_{m=0}^\infty
\mathbb P(D_h>m\mid \mathcal F_\nu),
\]
we obtain
\[
\begin{aligned}
\mathbb E[D_h\mid \mathcal F_\nu]
&\le
m_0
+
\sum_{m=m_0}^\infty
\exp\left\{
-\frac{\mu_1^2 m}{8\iota_\ell^2}
\right\}
\\
&\le
m_0
+
\sum_{m=0}^\infty
\exp\left\{
-\frac{\mu_1^2 m}{8\iota_\ell^2}
\right\}
\\
&=
m_0
+
\frac{1}{
1-\exp\{-\mu_1^2/(8\iota_\ell^2)\}
}.
\end{aligned}
\]
Since $m_0=\lceil 2h/\mu_1\rceil$, this proves
\[
\mathbb E[D_h\mid \mathcal F_\nu]
\le
\left\lceil \frac{2h}{\mu_1}\right\rceil
+
\frac{1}{
1-\exp\{-\mu_1^2/(8\iota_\ell^2)\}
}.
\]
Finally, using the elementary bound
\[
\frac{1}{1-e^{-x}}
\le
1+\frac{1}{x},
\qquad x>0,
\]
with $x=\mu_1^2/(8\iota_\ell^2)$ gives
\[
\mathbb E[D_h\mid \mathcal F_\nu]
\le
\left\lceil \frac{2h}{\mu_1}\right\rceil
+
1
+
\frac{8\iota_\ell^2}{\mu_1^2}.
\]
Therefore
\[
\mathbb E[D_h]
=
O\!\left(
\frac{h}{\mu_1}
+
\frac{\iota_\ell^2}{\mu_1^2}
\right),
\]
which proves part (b).
\end{proof}

\emph{Interpretation.}
The conditional drift
\[
\mu_t
=
\E[\Delta\ell_t\mid\mathcal F_{t-1}]
\]
determines the behavior of the restart statistic. During gradual drift, restart
remains unlikely when candidate experts do not accumulate systematic predictive
advantage over the anchor, so $\mu_t$ is nonpositive or small. After an abrupt
change, restart is fast when a newly initialized expert has positive predictive
advantage before the continuation expert adapts to the new data. This explains
the evidence interaction discussed in the main text: a highly concentrated
continuation predictive distribution may produce false restarts, while a highly
adaptive discrepancy update may reduce the candidate--anchor contrast after a
true change.

\section{Optional Interpretations of Parameter--Discrepancy and Restart Coupling}
\label{app:optional_restart_discrepancy_theory}

This appendix collects auxiliary analyses that are not required for the
formal proofs of Propositions~\ref{prop:projected_parameter_update}--
\ref{prop:wcusum_bounds} or Theorem~\ref{thm:brpc_tracking}. They support the
design interpretation emphasized in Section~\ref{sec:reset_control}: projected
parameter updates, discrepancy estimation, and restart decisions interact through
the residuals and the pre-update predictive likelihoods. The results below are included only to
clarify several design choices: why joint assimilation can blur parameter--discrepancy
attribution, and how projected-parameter error enters the
discrepancy update.


\subsection{Identifiability issue in joint assimilation}
\label{app:joint_assimilation_identifiability}

This subsection gives a static calculation explaining why joint assimilation of
calibration parameters and discrepancy-like bias can blur attribution.

Consider the Gaussian regression model
\[
y
=
y_s(x,\theta)+\Phi_m(x)^\top\beta+\epsilon,
\qquad
\epsilon\sim\mathcal N(0,\sigma^2),
\]
where \(\Phi_m(x)=(\phi_1(x),\ldots,\phi_m(x))^\top\) spans a finite-dimensional
discrepancy class. Let \(\psi=(\theta,\beta)\) denote the joint parameter vector.
The Fisher information matrix for \(\psi\) is denoted by \(I(\psi)\). We
partition it according to the calibration parameter \(\theta\) and the
discrepancy coefficients \(\beta\):
\[
I(\psi)
=
\begin{pmatrix}
I_{\theta\theta} & I_{\theta\beta}\\
I_{\beta\theta} & I_{\beta\beta}
\end{pmatrix}.
\]
Here \(I_{\theta\theta}\) is the information for the calibration parameter
alone, \(I_{\beta\beta}\) is the information for the discrepancy coefficients,
and \(I_{\theta\beta}=I_{\beta\theta}^\top\) is the cross-information between
simulator-parameter sensitivity and the discrepancy basis.

\begin{lemma}[Structural confounding]
\label{lem:structural_confounding_joint}
If $I_{\beta\beta}\succ 0$, then the profiled Fisher information for $\theta$
after eliminating $\beta$ is
\[
I_m(\theta)
=
I_{\theta\theta}
-
I_{\theta\beta}I_{\beta\beta}^{-1}I_{\beta\theta}
\preceq
I_{\theta\theta}.
\]
Moreover, there is information loss in at least one direction whenever
$I_{\theta\beta}\ne 0$, equivalently whenever some simulator sensitivity
direction has nonzero projection onto the discrepancy span under the design
distribution.
\end{lemma}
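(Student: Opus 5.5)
The plan is to treat this as a Schur-complement calculation on the joint Fisher information, followed by a short positive-semidefiniteness argument. First I will write down $I(\psi)$ for the Gaussian model. With $J_\theta(x)=\partial_\theta y_s(x,\theta)$, the score for $\psi=(\theta,\beta)$ at a single observation is $\sigma^{-2}\epsilon\,(J_\theta(x)^\top,\Phi_m(x)^\top)^\top$. Taking expectations over $\epsilon$ and over the design distribution $F_X$ gives
\[
I_{\theta\theta}=\sigma^{-2}\E[J_\theta J_\theta^\top],\qquad
I_{\theta\beta}=\sigma^{-2}\E[J_\theta\Phi_m^\top],\qquad
I_{\beta\beta}=\sigma^{-2}\E[\Phi_m\Phi_m^\top].
\]
All three are Gram-type matrices in $L^2(F_X)$.

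Next I will derive the profiled information. Inverting the $2\times2$ block matrix shows that the $\theta\theta$ block of $I(\psi)^{-1}$ is $(I_{\theta\theta}-I_{\theta\beta}I_{\beta\beta}^{-1}I_{\beta\theta})^{-1}$. This requires $I_{\beta\beta}\succ0$, which is assumed, and is the standard identity for profiling out a nuisance parameter. Equivalently, $I_m(\theta)$ is the minimum of $z^\top I(\psi)z$ over $z=(a,b)$ with $a$ fixed, attained at $b=-I_{\beta\beta}^{-1}I_{\beta\theta}a$.

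The inequality then follows directly. Since $I_{\beta\beta}^{-1}\succ0$, for any $a$ we have
\[
a^\top I_{\theta\beta}I_{\beta\beta}^{-1}I_{\beta\theta}a=\|I_{\beta\beta}^{-1/2}I_{\beta\theta}a\|^2\ge0,
\]
so $I_m(\theta)\preceq I_{\theta\theta}$.

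For strict loss, suppose $I_{\theta\beta}\neq0$. Then there is some $a$ with $I_{\beta\theta}a\neq0$, and because $I_{\beta\beta}^{-1/2}$ is nonsingular the quadratic form above is strictly positive. Hence $a^\top I_m a<a^\top I_{\theta\theta}a$ in that direction. The converse also holds: if $I_{\theta\beta}=0$ the two matrices coincide.

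To justify the "equivalently" clause, I will show that $I_{\beta\theta}a\neq0$ exactly when the sensitivity direction $a^\top J_\theta$ has nonzero projection onto $\mathrm{span}\{\phi_1,\ldots,\phi_m\}$ in $L^2(F_X)$. This holds because the entries of $I_{\beta\theta}a$ are $\sigma^{-2}\langle\phi_k,a^\top J_\theta\rangle_{L^2(F_X)}$, and the projection is nonzero iff one of these inner products is nonzero. In the same geometric terms, $\sigma^2 a^\top I_m a$ is the squared $L^2(F_X)$ norm of the residual of $a^\top J_\theta$ after projecting onto the discrepancy span.

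The only real obstacle is the model setup. The Fisher information has to be taken under the design distribution, and the model must be differentiable in $\theta$. Beyond these regularity conditions the argument is routine linear algebra.
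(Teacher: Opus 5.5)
Your proposal is correct and follows essentially the same route as the paper: you write $I(\psi)$ as the Gram matrix of the scores, identify the profiled information as the Schur complement, and note that the correction $I_{\theta\beta}I_{\beta\beta}^{-1}I_{\beta\theta}$ is positive semidefinite and is nonzero when $I_{\theta\beta}\neq 0$. Your explicit $L^2(F_X)$ inner-product computation also justifies the ``equivalently'' clause, which the paper's proof only asserts.
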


\begin{proof}
Under the Gaussian model, the Fisher information is the Gram matrix of the
score functions for $(\theta,\beta)$. Profiling out $\beta$ gives the Schur
complement. Since
\[
I_{\theta\beta}I_{\beta\beta}^{-1}I_{\beta\theta}\succeq 0,
\]
the profiled information is no larger than $I_{\theta\theta}$ in Loewner order.
If $I_{\theta\beta}\ne 0$, then the positive-semidefinite correction is nonzero,
so at least one direction of $\theta$ loses information.
\end{proof}

\emph{Connection to joint filtering baselines.}
A joint filtering baseline updates calibration parameters and discrepancy-like
bias through the same innovation. Under simulator misspecification, the same
residual can be explained by parameter movement, discrepancy bias, observation
noise, or regime change. Lemma~\ref{lem:structural_confounding_joint} formalizes
the static version of this attribution problem. In the online setting, this can
appear as inaccurate parameter tracking together with weak hard-restart evidence:
the local filter may smooth part of an abrupt change into parameter motion before
the restart mechanism sees a clean predictive contrast.

\subsection{Effect of projected-parameter anchor error on discrepancy learning}
\label{app:theta_contamination}

The tracking bound in Theorem~\ref{thm:brpc_tracking} is conditional on the
residual sequence supplied to the discrepancy update. This subsection records
how error in the projected-parameter update perturbs that residual sequence.
For simplicity, we present the shared-discrepancy version; the particle-specific discrepancy
version follows by replacing the shared simulator anchor with the
particle-specific simulator output.

Let \(\theta_t^\star\) denote the oracle projected calibration target at batch
\(t\), and define the corresponding oracle simulator anchor
\[
\mu_t^\star(X_t)
:=
y_s(X_t,\theta_t^\star).
\]
The shared projected particle filter instead uses the posterior-weighted
simulator anchor
\[
\widehat\mu_t(X_t)
:=
\sum_{i=1}^N
w_t^{(i)}y_s(X_t,\theta_t^{(i)}).
\]
Define the anchor error
\[
\varepsilon_t^\theta
:=
\widehat\mu_t(X_t)-\mu_t^\star(X_t).
\]
The oracle residual batch and the BRPC shared residual batch are
\[
r_t^\star
:=
Y_t-\mu_t^\star(X_t),
\]
and
\[
r_t
:=
Y_t-\widehat\mu_t(X_t).
\]
Therefore,
\[
r_t
=
r_t^\star-\varepsilon_t^\theta .
\]

For any discrepancy state \(u\), define the residual loss
\[
\ell_t(u;r)
:=
\frac12\|r-G_tu\|^2_{R_t^{-1}},
\]
where \(G_t\) maps the discrepancy state to the current batch inputs, \(R_t\)
is the residual-likelihood covariance, and
\[
\|v\|_{R_t^{-1}}^2:=v^\top R_t^{-1}v.
\]
When the \(M_t\)-weighted norm appears below, \(M_t=P_t^{-1}\) denotes the
pre-update discrepancy precision used in Theorem~\ref{thm:brpc_tracking}.

\begin{proposition}[Residual-contamination identity]
\label{prop:exact_contamination_identity}
For every \(t\) and every discrepancy state \(u\),
\[
\ell_t(u;r_t)
=
\ell_t(u;r_t^\star)
-
\left\langle
\varepsilon_t^\theta,
r_t^\star-G_tu
\right\rangle_{R_t^{-1}}
+
\frac12
\|\varepsilon_t^\theta\|^2_{R_t^{-1}} .
\]
\end{proposition}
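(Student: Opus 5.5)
The identity is a polarization (expansion of a squared weighted norm), so the plan is a direct algebraic computation rather than an appeal to any earlier result. Fix $t$ and an arbitrary state $u$, and abbreviate $e^\star := r_t^\star - G_t u$. This is the oracle residual misfit at $u$, so that $\ell_t(u;r_t^\star)=\tfrac12\|e^\star\|^2_{R_t^{-1}}$.

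First, use the decomposition $r_t = r_t^\star - \varepsilon_t^\theta$, derived just before the statement from the definitions of $\widehat\mu_t$ and $\mu_t^\star$. It gives
\[
r_t - G_t u = e^\star - \varepsilon_t^\theta .
\]
Second, expand the quadratic form $\|a-b\|^2_{R_t^{-1}} = \|a\|^2_{R_t^{-1}} - 2\langle a,b\rangle_{R_t^{-1}} + \|b\|^2_{R_t^{-1}}$ with $a=e^\star$ and $b=\varepsilon_t^\theta$. This step uses that $R_t^{-1}$ is symmetric, since $R_t$ is symmetric positive definite by assumption in \eqref{eq:particle_residual_likelihood}. Symmetry makes $\langle\cdot,\cdot\rangle_{R_t^{-1}}$ a symmetric bilinear form, so the two cross terms combine into one. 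Multiplying through by $\tfrac12$ then yields exactly
\[
\ell_t(u;r_t^\star) - \langle \varepsilon_t^\theta, r_t^\star - G_t u\rangle_{R_t^{-1}} + \tfrac12\|\varepsilon_t^\theta\|^2_{R_t^{-1}} .
\]

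There is no real obstacle. The only points to verify are the sign convention in $r_t = r_t^\star - \varepsilon_t^\theta$, which determines the minus sign on the cross term, and the symmetry of the weighted inner product. The identity holds pathwise and for every $u$; it needs no assumption on $G_t$, $M_t$, or the particle approximation. For the particle-specific version, replace $\widehat\mu_t$ by $y_s(X_t,\theta_t^{(i)})$ and repeat the same computation verbatim.

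I would also note, as a remark for later use, that the identity can be combined with Theorem~\ref{thm:brpc_tracking} applied to the contaminated residuals. Evaluating it at $u=m_t$ and at $u=v_t$ and taking the difference gives
\[
\ell_t(m_t;r_t)-\ell_t(v_t;r_t) = \ell_t(m_t;r_t^\star)-\ell_t(v_t;r_t^\star) + \langle \varepsilon_t^\theta, G_t(m_t-v_t)\rangle_{R_t^{-1}} .
\]
The contamination therefore enters only through a linear cross term, which Cauchy--Schwarz can bound.
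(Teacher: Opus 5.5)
Your proof is correct and matches the paper's argument: substitute $r_t=r_t^\star-\varepsilon_t^\theta$ into $\ell_t(u;r_t)=\tfrac12\|r_t-G_tu\|^2_{R_t^{-1}}$ and expand the weighted square, with the cross-term sign fixed exactly as you describe. Your closing remark, differencing at $u=m_t$ and $u=v_t$, is precisely the paper's subsequent regret-perturbation identity (Proposition~\ref{prop:regret_perturbation_identity}).
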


\begin{proof}
Substitute \(r_t=r_t^\star-\varepsilon_t^\theta\) into
\[
\ell_t(u;r_t)
=
\frac12\|r_t-G_tu\|^2_{R_t^{-1}}.
\]
Then
\[
\begin{aligned}
\ell_t(u;r_t)
&=
\frac12
\|r_t^\star-\varepsilon_t^\theta-G_tu\|^2_{R_t^{-1}}
\\
&=
\frac12
\|(r_t^\star-G_tu)-\varepsilon_t^\theta\|^2_{R_t^{-1}}
\\
&=
\frac12
\|r_t^\star-G_tu\|^2_{R_t^{-1}}
-
\left\langle
\varepsilon_t^\theta,
r_t^\star-G_tu
\right\rangle_{R_t^{-1}}
+
\frac12
\|\varepsilon_t^\theta\|^2_{R_t^{-1}}
\\
&=
\ell_t(u;r_t^\star)
-
\left\langle
\varepsilon_t^\theta,
r_t^\star-G_tu
\right\rangle_{R_t^{-1}}
+
\frac12
\|\varepsilon_t^\theta\|^2_{R_t^{-1}} .
\end{aligned}
\]
This proves the identity.
\end{proof}

\begin{corollary}[Pointwise perturbation bound]
\label{cor:pointwise_contamination_bound}
Suppose the projected-anchor error is bounded as
\[
\|\varepsilon_t^\theta\|_{R_t^{-1}}\le B_t^\theta
\]
for some \(B_t^\theta\ge 0\).
Then for every discrepancy state \(u\),
\[
\left|
\ell_t(u;r_t)-\ell_t(u;r_t^\star)
\right|
\le
B_t^\theta
\|r_t^\star-G_tu\|_{R_t^{-1}}
+
\frac12(B_t^\theta)^2 .
\]
\end{corollary}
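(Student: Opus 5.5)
The bound follows directly from the residual-contamination identity in Proposition~\ref{prop:exact_contamination_identity}. Fix $t$ and an arbitrary discrepancy state $u$. Subtracting $\ell_t(u;r_t^\star)$ from both sides of that identity gives the exact difference
\[
\ell_t(u;r_t)-\ell_t(u;r_t^\star)
=
-\left\langle \varepsilon_t^\theta,\, r_t^\star-G_tu\right\rangle_{R_t^{-1}}
+\frac12\|\varepsilon_t^\theta\|^2_{R_t^{-1}} .
\]
I will then take absolute values and apply the triangle inequality. This bounds the left-hand side by the absolute value of the cross term plus $\tfrac12\|\varepsilon_t^\theta\|^2_{R_t^{-1}}$. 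The quadratic term is nonnegative, so it needs no sign handling.

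For the cross term, I will use Cauchy--Schwarz in the inner product $\langle a,b\rangle_{R_t^{-1}}=a^\top R_t^{-1}b$. This is a genuine inner product because $R_t$, and hence $R_t^{-1}$, is symmetric positive definite, as assumed in \eqref{eq:particle_residual_likelihood}. Cauchy--Schwarz gives
\[
\left|\left\langle \varepsilon_t^\theta,\, r_t^\star-G_tu\right\rangle_{R_t^{-1}}\right|
\le
\|\varepsilon_t^\theta\|_{R_t^{-1}}\,\|r_t^\star-G_tu\|_{R_t^{-1}} .
\]

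Finally, I will substitute the hypothesis $\|\varepsilon_t^\theta\|_{R_t^{-1}}\le B_t^\theta$ into both terms. In the cross term the bound applies directly, since the other factor is nonnegative. In the quadratic term I use that squaring is monotone on nonnegative reals, so $\tfrac12\|\varepsilon_t^\theta\|^2_{R_t^{-1}}\le \tfrac12 (B_t^\theta)^2$. Together these yield the stated inequality.

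There is no real obstacle in this argument. The only point to check is that $R_t^{-1}$ defines a valid inner product, which is what justifies Cauchy--Schwarz in the weighted norm.
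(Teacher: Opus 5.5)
Your proposal is correct and matches the paper's own proof: the paper also starts from the identity in Proposition~\ref{prop:exact_contamination_identity}, takes absolute values, applies Cauchy--Schwarz in the $R_t^{-1}$-weighted inner product, and substitutes $\|\varepsilon_t^\theta\|_{R_t^{-1}}\le B_t^\theta$ into both terms. Your remarks on the positive definiteness of $R_t^{-1}$ and the monotonicity of squaring only make explicit steps that the paper leaves implicit.
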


\begin{proof}
By Proposition~\ref{prop:exact_contamination_identity},
\[
\ell_t(u;r_t)-\ell_t(u;r_t^\star)
=
-
\left\langle
\varepsilon_t^\theta,
r_t^\star-G_tu
\right\rangle_{R_t^{-1}}
+
\frac12
\|\varepsilon_t^\theta\|^2_{R_t^{-1}}.
\]
Taking absolute values and applying Cauchy--Schwarz gives
\[
\left|
\ell_t(u;r_t)-\ell_t(u;r_t^\star)
\right|
\le
\|\varepsilon_t^\theta\|_{R_t^{-1}}
\|r_t^\star-G_tu\|_{R_t^{-1}}
+
\frac12
\|\varepsilon_t^\theta\|^2_{R_t^{-1}}.
\]
Using \(\|\varepsilon_t^\theta\|_{R_t^{-1}}\le B_t^\theta\) proves the claim.
\end{proof}

\begin{proposition}[Regret perturbation identity]
\label{prop:regret_perturbation_identity}
For every pair of discrepancy states \((m,v)\),
\[
\bigl[
\ell_t(m;r_t)-\ell_t(v;r_t)
\bigr]
-
\bigl[
\ell_t(m;r_t^\star)-\ell_t(v;r_t^\star)
\bigr]
=
\left\langle
\varepsilon_t^\theta,
G_t(m-v)
\right\rangle_{R_t^{-1}} .
\]
\end{proposition}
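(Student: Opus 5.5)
Apply Proposition~\ref{prop:exact_contamination_identity} twice, once with $u=m$ and once with $u=v$, and subtract; the quadratic term $\tfrac12\|\varepsilon_t^\theta\|^2_{R_t^{-1}}$ does not depend on $u$, so it cancels.

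The two identities give
\[
\ell_t(m;r_t)-\ell_t(m;r_t^\star)
=
-\langle\varepsilon_t^\theta,r_t^\star-G_tm\rangle_{R_t^{-1}}
+\tfrac12\|\varepsilon_t^\theta\|^2_{R_t^{-1}},
\]
and the same expression with $m$ replaced by $v$. Subtracting the second from the first and regrouping the left-hand side into the two bracketed regret differences, the quadratic terms cancel and the right-hand side is
\[
-\langle\varepsilon_t^\theta,r_t^\star-G_tm\rangle_{R_t^{-1}}
+\langle\varepsilon_t^\theta,r_t^\star-G_tv\rangle_{R_t^{-1}}.
\]
Bilinearity of $\langle\cdot,\cdot\rangle_{R_t^{-1}}$ combines these into $\langle\varepsilon_t^\theta,(r_t^\star-G_tv)-(r_t^\star-G_tm)\rangle_{R_t^{-1}}$. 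The $r_t^\star$ terms cancel, leaving $G_tm-G_tv=G_t(m-v)$, which is the claimed identity.

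As a cross-check, one can expand $\ell_t(u;r)=\tfrac12 r^\top R_t^{-1}r-r^\top R_t^{-1}G_tu+\tfrac12 u^\top G_t^\top R_t^{-1}G_tu$ directly. In $\ell_t(m;r)-\ell_t(v;r)$ the $r^\top R_t^{-1}r$ term cancels, and the quadratic-in-$u$ term is independent of $r$. So the only $r$-dependence is $-r^\top R_t^{-1}G_t(m-v)$, which is linear in $r$. Evaluating this at $r=r_t$ and at $r=r_t^\star$ and using $r_t-r_t^\star=-\varepsilon_t^\theta$ gives $\langle\varepsilon_t^\theta,G_t(m-v)\rangle_{R_t^{-1}}$ again.

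There is no real obstacle here. The only care needed is the sign convention $r_t=r_t^\star-\varepsilon_t^\theta$ and the orientation of the subtraction, so that the final sign comes out positive.
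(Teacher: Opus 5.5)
Your proposal is correct and follows the paper's proof: apply the residual-contamination identity at $u=m$ and $u=v$, subtract so the $\tfrac12\|\varepsilon_t^\theta\|^2_{R_t^{-1}}$ terms cancel, and use bilinearity to reduce the two inner products to $\langle\varepsilon_t^\theta,G_t(m-v)\rangle_{R_t^{-1}}$. Your direct-expansion cross-check is also correct. The paper does not include it, but it confirms the positive sign via $r_t-r_t^\star=-\varepsilon_t^\theta$.
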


\begin{proof}
Apply Proposition~\ref{prop:exact_contamination_identity} first with \(u=m\)
and then with \(u=v\). Subtracting the two identities cancels the quadratic
term \(\frac12\|\varepsilon_t^\theta\|^2_{R_t^{-1}}\) and gives
\[
\begin{aligned}
&
\bigl[
\ell_t(m;r_t)-\ell_t(v;r_t)
\bigr]
-
\bigl[
\ell_t(m;r_t^\star)-\ell_t(v;r_t^\star)
\bigr]
\\
&\qquad =
-
\left\langle
\varepsilon_t^\theta,
r_t^\star-G_tm
\right\rangle_{R_t^{-1}}
+
\left\langle
\varepsilon_t^\theta,
r_t^\star-G_tv
\right\rangle_{R_t^{-1}}
\\
&\qquad =
\left\langle
\varepsilon_t^\theta,
G_t(m-v)
\right\rangle_{R_t^{-1}} .
\end{aligned}
\]
This proves the identity.
\end{proof}

\begin{corollary}[Quantitative contamination penalty]
\label{cor:quantitative_contamination_penalty}
Assume
\[
\|\varepsilon_t^\theta\|_{R_t^{-1}}\le B_t^\theta,
\]
and suppose the residual map is compatible with the \(M_t\)-weighted
discrepancy norm in the sense that
\[
G_t^\top R_t^{-1}G_t\preceq \alpha_t M_t .
\]
Then for every pair \((m,v)\),
\[
\ell_t(m;r_t)-\ell_t(v;r_t)
\le
\ell_t(m;r_t^\star)-\ell_t(v;r_t^\star)
+
\frac{\alpha_t}{2}\|m-v\|_{M_t}^2
+
\frac12(B_t^\theta)^2 .
\]
\end{corollary}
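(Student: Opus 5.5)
The plan is to start from the regret perturbation identity in Proposition~\ref{prop:regret_perturbation_identity}. It already gives, for every pair $(m,v)$,
\[
\ell_t(m;r_t)-\ell_t(v;r_t)
=
\ell_t(m;r_t^\star)-\ell_t(v;r_t^\star)
+
\left\langle \varepsilon_t^\theta, G_t(m-v)\right\rangle_{R_t^{-1}} .
\]
So the claim reduces to a single upper bound on the cross term
$\langle \varepsilon_t^\theta, G_t(m-v)\rangle_{R_t^{-1}}$. No further use of the algorithm or of Theorem~\ref{thm:brpc_tracking} is needed; the statement is purely deterministic and holds for arbitrary $(m,v)$.

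To bound the cross term, I will apply Cauchy--Schwarz in the $R_t^{-1}$ inner product:
\[
\left\langle \varepsilon_t^\theta, G_t(m-v)\right\rangle_{R_t^{-1}}
\le
\|\varepsilon_t^\theta\|_{R_t^{-1}}\,\|G_t(m-v)\|_{R_t^{-1}} .
\]
Then I will use Young's inequality $ab\le \tfrac12 a^2+\tfrac12 b^2$ with unit weights. This gives
\[
\left\langle \varepsilon_t^\theta, G_t(m-v)\right\rangle_{R_t^{-1}}
\le
\tfrac12\|\varepsilon_t^\theta\|^2_{R_t^{-1}}
+
\tfrac12\|G_t(m-v)\|^2_{R_t^{-1}} .
\]
The first term is at most $\tfrac12 (B_t^\theta)^2$ by the assumed anchor-error bound.

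For the second term, I will write
\[
\|G_t(m-v)\|^2_{R_t^{-1}} = (m-v)^\top G_t^\top R_t^{-1} G_t (m-v),
\]
and invoke the compatibility condition $G_t^\top R_t^{-1}G_t\preceq \alpha_t M_t$. This yields the bound $\alpha_t\|m-v\|^2_{M_t}$, so the second term is at most $\tfrac{\alpha_t}{2}\|m-v\|_{M_t}^2$. Substituting both bounds into the identity gives exactly the stated inequality.

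There is no real obstacle here. The only choice to make is the Young's inequality weighting. Unit weights reproduce the stated constants $\tfrac12$ and $\tfrac{\alpha_t}{2}$ exactly. A general weight $\lambda>0$ would give the family
\[
\tfrac{\lambda\alpha_t}{2}\|m-v\|_{M_t}^2+\tfrac{1}{2\lambda}(B_t^\theta)^2 ,
\]
which could be noted as a remark but is not needed for the statement.
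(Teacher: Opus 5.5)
Your proposal is correct and follows essentially the same route as the paper: the regret perturbation identity, Cauchy--Schwarz in the $R_t^{-1}$ inner product, the compatibility condition $G_t^\top R_t^{-1}G_t\preceq \alpha_t M_t$, and Young's inequality with unit weights. The only cosmetic difference is the order of steps: you apply Young's inequality before substituting $B_t^\theta$ and $\alpha_t M_t$, whereas the paper substitutes them first and then applies Young to $B_t^\theta\sqrt{\alpha_t}\,\|m-v\|_{M_t}$.
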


\begin{proof}
By Proposition~\ref{prop:regret_perturbation_identity},
\[
\begin{aligned}
&
\bigl[
\ell_t(m;r_t)-\ell_t(v;r_t)
\bigr]
-
\bigl[
\ell_t(m;r_t^\star)-\ell_t(v;r_t^\star)
\bigr]
\\
&\qquad =
\left\langle
\varepsilon_t^\theta,
G_t(m-v)
\right\rangle_{R_t^{-1}} .
\end{aligned}
\]
Using Cauchy--Schwarz gives
\[
\left\langle
\varepsilon_t^\theta,
G_t(m-v)
\right\rangle_{R_t^{-1}}
\le
\|\varepsilon_t^\theta\|_{R_t^{-1}}
\|G_t(m-v)\|_{R_t^{-1}} .
\]
The anchor-error bound and the compatibility condition imply
\[
\|\varepsilon_t^\theta\|_{R_t^{-1}}
\|G_t(m-v)\|_{R_t^{-1}}
\le
B_t^\theta
\sqrt{\alpha_t}
\|m-v\|_{M_t}.
\]
By Young's inequality,
\[
B_t^\theta
\sqrt{\alpha_t}
\|m-v\|_{M_t}
\le
\frac12(B_t^\theta)^2
+
\frac{\alpha_t}{2}
\|m-v\|_{M_t}^2 .
\]
Combining the preceding inequalities proves the claim.
\end{proof}

\paragraph{Interpretation.}
The projected particle filter affects the discrepancy update through the
residual anchor. If the shared simulator anchor
\(\widehat\mu_t(X_t)\) is close to the oracle projected anchor
\(\mu_t^\star(X_t)\), then \(\varepsilon_t^\theta\) is small and the BRPC
residual loss is close to the oracle residual loss. If the projected parameter
posterior is biased or poorly covered by the particle cloud, then the shared
residual \(r_t\) is contaminated before the discrepancy update is performed.
This is the sense in which projected-parameter tracking error and discrepancy
learning interact in BRPC. The particle-specific discrepancy construction
satisfies the same identities with
\(\widehat\mu_t(X_t)\) replaced by \(y_s(X_t,\theta_t^{(i)})\) for each
particle \(i\).

\section{Benchmark Construction and Experimental Settings}
\label{app:benchmark_details}

This appendix describes the benchmark streams and experimental settings used in
Section~\ref{sec:experiments}. Appendix~\ref{app:synthetic_benchmark} defines
the synthetic benchmark and its drifting, sudden-jump, and mixed nonstationary
trajectories. Appendix~\ref{app:plant_benchmark} describes the plant-simulation
benchmark and the surrogate model used to generate streaming responses.
Appendix~\ref{app:common_experimental_settings} collects the common settings for
particles, discrepancy kernels, restart rules, baselines, event metrics, and
compute resources.

\subsection{Synthetic benchmark}
\label{app:synthetic_benchmark}

The synthetic benchmark adapts the static calibration example of
\citet{gu2018scaled} to an online nonstationary stream. The simulator is
\[
y_s(x,\theta)=\sin(\theta x)+5x,
\]
and the physical response is
\[
\zeta_b(x)=5x\cos(\omega_b x/2)+5x,
\]
observed with independent Gaussian noise,
\[
Y_b(x)=\zeta_b(x)+\epsilon_b(x),
\qquad
\epsilon_b(x)\sim\mathcal N(0,\sigma_y^2).
\]
In the original static example, $\omega_b=15$ is fixed. We turn this example
into an online benchmark by allowing the physical-response parameter to vary
across batches. The batch index is denoted by $b=0,\ldots,B-1$, where
$B=T/K_t$ is the number of batches. We use \(b\) here to emphasize that this is the batch-level index used in
benchmark generation. In the main text, the generic online update index is
denoted by \(t\). Thus \(b\) in this appendix plays the same role as \(t\) in
the online algorithms, but is reserved here for describing the data-generating
trajectory. All observations within batch \(b\) are generated from the same
physical-response parameter \(\omega_b\). Hence each batch is treated as locally
homogeneous, and nonstationarity is modeled as variation of \(\omega_b\) across
batches rather than within a batch.

For each batch \(b\), the projected calibration target is defined by applying
the projection operator in Eq.~\eqref{eq:projection_operator} to the physical
response \(\zeta_b\):
\[
\theta_b^\star
=
\Pi(\zeta_b)
\in
\arg\min_{\theta\in\Theta}
\int_{\Omega_x}
\{\zeta_b(x)-y_s(x,\theta)\}^2\,dF_X(x).
\]
In the synthetic benchmark, this projection is computed numerically on a dense
grid. Specifically, for each candidate physical-response parameter \(\omega_b\),
we evaluate \(\zeta_b(x)=5x\cos(\omega_b x/2)+5x\), compute the empirical
projection loss over a fixed grid of \(x\)-values and \(\theta\)-values, and
take the minimizer as \(\theta_b^\star\). This gives a deterministic numerical
mapping from \(\omega_b\) to the projected calibration target \(\theta_b^\star\). When constructing a stream with a prescribed target trajectory
\(\{\theta_b^\star\}\), we first precompute this grid-based mapping
\(\omega\mapsto \Pi(\zeta_\omega)\), and then choose \(\omega_b\) whose projected
target is closest to \(\theta_b^\star\). The resulting \(\theta_b^\star\) is then
used as the ground-truth projected calibration target for evaluation.
The algorithms observe only the generated data stream and do not use
\(\theta_b^\star\), which is retained only for evaluation of parameter-tracking
error.

We consider three trajectory families for $\theta_b^\star$.

\paragraph{Drifting family.}
In the drifting family, the projected target evolves gradually across batches.
We use slope-style trajectories of the form
\[
\theta_b^\star
=
\theta^\star_{b-1}
+
s
+
\xi_b,
\]
where $s$ controls the drift strength and $\xi_b$ denotes a small smooth
perturbation. The drift slopes used in the experiments are
\[
s\in
\{5{\times}10^{-4},\,10^{-3},\,1.5{\times}10^{-3},\,
2{\times}10^{-3},\,2.5{\times}10^{-3}\}.
\]
This setting tests whether the online updates can track gradual movement of the
projected calibration target without requiring explicit segmentation.

\paragraph{Sudden-jump family.}
In the sudden-jump family, the projected target is piecewise constant. For a
segment length $L$, annotated changepoints occur every $L$ observations,
equivalently every $L/K_t$ batches. At a changepoint batch $c_j$, the projected
target is shifted by a jump increment $J_j$:
\[
\theta_b^\star
=
\theta^\star_0
+
\sum_{j:c_j\le b}J_j .
\]
The jump magnitudes used in the experiments are
\[
|J_j|\in\{0.5,1.0,2.0,3.0\},
\]
with signs chosen so that the resulting trajectory remains in the admissible
calibration range. This setting tests whether restart mechanisms can respond to
abrupt changes without producing excessive false restarts.

\paragraph{Mixed family.}
In the mixed family, gradual drift and abrupt jumps are combined. The projected
target follows
\[
\theta_b^\star
=
\theta^\star_{b-1}
+
d_{s(b)}
+
\xi_b
+
\sum_{j=1}^2 J_j\mathbf 1\{b=c_j\},
\]
where $s(b)\in\{1,2,3\}$ is the segment index, $d_{s(b)}$ is the
segment-specific drift slope, $\xi_b$ is a smooth perturbation, and $J_j$ is the
jump increment at changepoint batch $c_j$. The two changepoints are placed near
one-third and two-thirds of the stream, with small random jitter:
$c_1\approx 0.33B,
c_2\approx 0.70B.$

The perturbation follows an AR(1)-type recursion
\[
\xi_b=\rho\xi_{b-1}+\sigma_\theta z_b,
\qquad
z_b\sim\mathcal N(0,1),
\]
with $\rho=0.65$ and $\sigma_\theta=0.015$ in the main mixed experiments. The
segment slopes are random multiples of the drift-scale parameter, and the jump
increments are random multiples of the jump-scale parameter. In the main experiments, we use $d_{s(b)}=0.009$ and $J_j\in\{0.28,0.38,0.58\}.$

After generation, the trajectory is centered, rescaled, and clipped to the
admissible range $[1.0,2.5]$. A minimum realized jump size is enforced at the
annotated changepoints so that the mixed stream contains identifiable abrupt
changes.

Across the three families, the benchmark produces simulator--reality mismatch
while retaining a known nonstationary structure. Event-level restart metrics such
as Precision@2, Recall@2, and Delay@2 are computed only for settings with
annotated abrupt changes, namely the sudden-jump and mixed families. In purely
drifting settings, these event-level metrics are not reported.

Unless otherwise stated, the number of particles is fixed at $1024$ for
particle-based methods, and the maximum number of BOCPD experts maintained
online is fixed at $5$. The configuration ranges used in the synthetic study are
summarized in Table~\ref{tab:app_synthetic_configs}.

\begin{table}[t]
\centering
\small
\caption{
Synthetic benchmark configurations. Segment length denotes the number of
observations between consecutive annotated changepoints in sudden-jump streams.
The mixed family combines segmentwise drift, smoothed perturbations, and two
annotated abrupt jumps.
}
\label{tab:app_synthetic_configs}
\begin{tabular}{ll}
\toprule
\textbf{Setting} & \textbf{Configuration} \\
\midrule
\multicolumn{2}{l}{\textbf{Sudden-jump family}} \\
\quad Segment length $(L)$ & $\{80,120,200\}$ \\
\quad Jump magnitude & $\{0.5,1.0,2.0,3.0\}$ \\
\quad Batch size $(K_t)$ & $\{10,20,40\}$ \\
\quad Total observations & $4L$ \\
\quad Particles & $1024$ \\
\quad Experts maintained & $5$ \\
\midrule
\multicolumn{2}{l}{\textbf{Drifting family}} \\
\quad Drift slope $(s)$ &
$\{5{\times}10^{-4},\,10^{-3},\,1.5{\times}10^{-3},\,2{\times}10^{-3},\,2.5{\times}10^{-3}\}$ \\
\quad Batch size $(K_t)$ & $\{10,20,40\}$ \\
\quad Total observations & $600$ \\
\quad Particles & $1024$ \\
\quad Experts maintained & $5$ \\
\midrule
\multicolumn{2}{l}{\textbf{Mixed family}} \\
\quad Drift scale ($d_{s(b)}$) & $0.009$  \\
\quad Jump scale ($J_j$) & $\{0.28,0.38\}$  \\
\quad Smooth perturbation s.d. ($\sigma_\theta$) & $0.015$ \\
\quad Perturbation AR coefficient ($\rho$)& $0.65$ \\
\quad Changepoint locations & Near $0.33B$ and $0.70B$, with random jitter \\
\quad Admissible target range & $[1.0,2.5]$ \\
\quad Batch size $(K_t)$ & $20$  \\
\quad Total observations & $600$ \\
\quad Particles & $1024$ \\
\quad Experts maintained & $5$ \\
\bottomrule
\end{tabular}
\end{table}

\subsection{Plant-simulation benchmark}
\label{app:plant_benchmark}

The plant-simulation benchmark complements the stylized synthetic study with a
more realistic digital-twin synchronization task. The physical system is a
bicycle-production line implemented in Plant Simulation, with stochastic
arrivals and processing times, work-in-progress and inventory constraints, and
regime-dependent demand. The latent parameter to be tracked is an arrival-rate
variable $\theta_b$ that determines the effective system load. Figure~\ref{fig:ps} shows
the digital-twin environment used for this case study.

\begin{figure}[htbp]
    \centering
    \includegraphics[width=\textwidth]{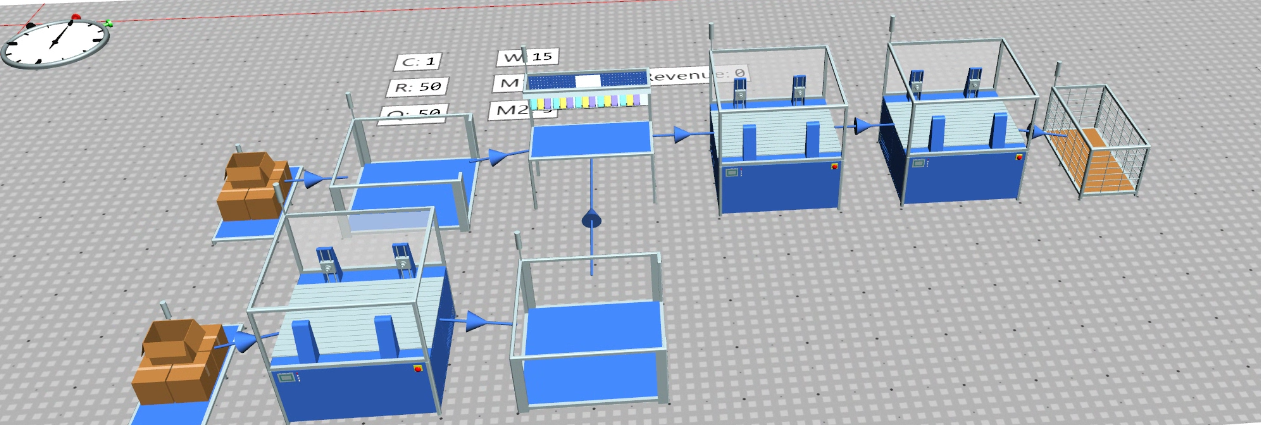}
    \caption{Digital-twin environment for the bicycle-production
    plant-simulation benchmark. The online calibration stream is generated from
    this Plant Simulation model under multiple operating modes.}
    \label{fig:ps}
\end{figure}

The simulated line contains three main stations (Assembly, Cleaning, and
Inspection), two buffers (CustomerBuffer and KitBuffer), and an $(R,Q)$
inventory policy. Each batch also carries an exogenous input vector
\[
X=(M_1,M_2,W,R,Q),
\]
where $M_1$ and $M_2$ are station-level capacity settings, $W$ is a workload
factor, and $(R,Q)$ are inventory-policy factors. The input ranges are
summarized in Table~\ref{tab:app_bicycle_controls}.

\begin{table}[t]
\centering
\small
\caption{Input factors used in the bicycle-plant simulation benchmark.}
\label{tab:app_bicycle_controls}
\begin{tabular}{lcc}
\toprule
Factor & Levels / Range & Description \\
\midrule
$M_1$ & $\{1,2,3\}$ & Number of parallel machines at the cleaning station \\
$M_2$ & $\{1,2,3\}$ & Number of parallel machines at the inspection station \\
$W$   & $[10,15]$   & Workload factor \\
$R$   & $[50,100]$  & Reorder point \\
$Q$   & $[50,100]$  & Order quantity \\
\bottomrule
\end{tabular}
\end{table}

We generate $2{,}000$ Plant Simulation runs by Latin hypercube sampling over the
control space and train a surrogate model for 10-day net revenue, which becomes
the streaming response used by the online calibrator. The surrogate input is
\[
(W,R,M_1,M_2,Q,\theta),
\]
where $\theta$ is the arrival-rate parameter. Preliminary experiments found that
a standard Gaussian-process surrogate underfit this response surface, so the
benchmark uses a multilayer perceptron surrogate instead.

The MLP has three hidden layers with widths $(256,256,128)$ and ReLU activations,
followed by a scalar output layer. Thus the network has four linear layers in
total. Before training, the response is transformed using a signed log-one-plus
transform and both the inputs and transformed responses are standardized using
the training split. The model is trained with a $90/10$ train-validation split,
AdamW optimization, mean-squared-error loss, batch size $128$, learning rate
$10^{-3}$, and weight decay $10^{-6}$. Training is run for at most $400$ epochs
with early stopping based on validation loss.

For the physical data stream, we sequentially sample control inputs and generate
observations under three representative operating modes:
\begin{enumerate}
    \item \textbf{Drifting mode:} the latent arrival-rate parameter varies
    smoothly. In the sinusoidal configuration,
    \[
    \theta_b = 11.5 + 8.5\sin\!\left(\frac{2\pi b}{40}\right).
    \]
    \item \textbf{Sudden-change mode:} the latent arrival-rate parameter is
    piecewise constant with abrupt regime switches.
    \item \textbf{Mixed mode:} the latent arrival-rate parameter combines
    continuous variation with abrupt structural changes.
\end{enumerate}

The goal is to infer the latent arrival-rate parameter online from the revenue
stream while maintaining synchronization between the physical system and the
digital twin. This benchmark is intended to test whether the qualitative
conclusions from the synthetic study persist when parameter mismatch, surrogate
error, and operational stochasticity are entangled in a realistic simulator.
Reported relative $y$-metrics are normalized by the mode-specific response scale
so that predictive performance is comparable across operating modes.

\subsection{Common Experimental Settings}
\label{app:common_experimental_settings}

All methods are evaluated on common generated streams and common random seeds
within each benchmark.  Observations are processed in mini-batches; the main synthetic and
plant-simulation CPD runs use batch size $K_t=20$. Scalar metrics are reported as
mean $\pm$ standard deviation across seeds.

\paragraph{Synthetic data and projected targets.}
Inputs satisfy $x\in[0,1]$ and are sampled within each batch by a randomized
stratified grid. The observation noise in the generated synthetic data is
\[
\epsilon\sim\mathcal N(0,0.2^2).
\]
The projected target \(\theta_t^\dagger\) is approximated offline by minimizing
the empirical \(L_2\) projection loss over a uniform grid of 600 candidate
calibration values on \([0,3]\) and a uniform grid of 400 input locations on
\([0,1]\).
The main synthetic suite uses $N=1024$ particles, batch size $K_t=20$, and
discrepancy KL weight $\eta_\delta=1.0$.

\paragraph{BRPC implementation.}
Particle-based BRPC methods use $N=1024$ particles, systematic resampling, and
ESS threshold ratio $0.5$. The local parameter evolution prior uses a
random-walk move, with default random-walk scale $0.1$ in the main synthetic
runs. The likelihood noise used inside the BRPC predictive model is
$\sigma_{\epsilon}=0.05,$
which is a modeling/score noise parameter and is distinct from the synthetic
data-generation noise $0.2$. Unless otherwise stated, the discrepancy GP uses an
RBF kernel with lengthscale $1.0$ and variance $0.01$.

\paragraph{Restart mechanisms.}
For B-BRPC, the default BOCPD-style hazard is
\[
h(r)=\frac{1}{200+r},
\]
where $r$ is the current run length. The maximum number of active experts is
$5$. The main implementation uses a restart cooldown of $10$ batches and a
restart margin of $1$ for the hard-restart decision. Sensitivity to these
restart-rule settings is reported in
Appendix~\ref{app:hyper_sensitivity}.

For C-BRPC, the default wCUSUM configuration is
\[
W=4,\qquad h_{\mathrm C}=0.25,\qquad \kappa=0.25,\qquad \sigma_{\min}=0.25,
\]
with $3$ warm-up batches and the batch-normalized log-surprise score. Sensitivity
checks over alternative wCUSUM configurations are reported in
Appendix~\ref{app:hyper_sensitivity}.

\paragraph{Baselines.}
BC(80) denotes sliding-window Kennedy--O'Hagan recalibration using the most
recent $80$ observations. 
In the synthetic benchmark, BC(80) searches over a uniform grid of 200
candidate calibration values on \([0,3]\) and uses $\sigma_{\rm obs}=0.2$
and a GP discrepancy with lengthscale $0.3$ and signal variance $1.0$. The DA
baseline uses $N=1024$ particles, $\theta\in[0,3]$, observation noise
$\sigma_{\rm obs}=0.2$, parameter random-walk standard deviation $0.05$, and a
residual-GP window of $80$ observations with the same GP lengthscale and signal
variance as BC(80).

\paragraph{Event metrics.}
Precision@2, Recall@2, F1@2, and Delay@2 are computed only in settings with
annotated abrupt changes. A detected restart is counted as correct if it occurs
within two batches after an annotated changepoint; early detections are not
rewarded. In mixed settings, drift-triggered restarts may therefore reduce
Precision@2 even if they improve local adaptation.

\paragraph{Compute resources.}
Experiments were run on a local workstation with an NVIDIA T1000 GPU with 4GB
memory, driver version 573.44, and CUDA 12.8 available. The online calibration
experiments do not rely on large-scale accelerators; computation is dominated by
particle updates, Gaussian-process discrepancy updates, restart-mechanism
bookkeeping, and repeated benchmark runs across random seeds. The
plant-simulation surrogate was trained once and then reused for the online
calibration experiments.

\section{Additional Results, Diagnostics, and Ablations}
\label{app:additional_results}

This appendix reports additional empirical results supporting the benchmark
conclusions in Section~\ref{sec:experiments}. 
Appendix~\ref{app:transport_replay} checks the propagation metric used in the
tracking analysis. Appendix~\ref{app:joint_enkf_sensitivity} studies a joint
EnKF baseline. Appendix~\ref{app:brpc_state_ablation} compares BRPC-E, BRPC-P,
and BRPC-F. Appendix~\ref{app:scalability} summarizes computational scaling.
Appendix~\ref{app:restart_state_ablation} compares restart rules and shared
versus particle-specific discrepancy states. Appendix~\ref{app:bocpd_da_baseline}
evaluates a moving-particle filtering baseline with BOCPD restart.
Appendix~\ref{app:restart_count_scaling} reports restart-count scaling
diagnostics. Appendix~\ref{app:hyper_sensitivity} reports restart-rule
hyperparameter sensitivity. Appendix~\ref{app:representative_trajectories}
shows representative parameter-tracking trajectories. 
Appendix~\ref{app:highdim_physical_projected} reports a high-dimensional
physical-projected diagnostic.

\subsection{Propagation-Metric Replay Diagnostic}
\label{app:transport_replay}

Assumption~\ref{ass:brpc_transport_contraction} is a sufficient pre-update propagation condition used in the
tracking analysis. To inspect the corresponding stability
geometry in representative streams, we compute two diagnostics.

Here $P_t^{-}$ is the propagated pre-update covariance, $C_t$ is the
post-update covariance, and $M_t^{\mathrm{pre}}=(P_t^{-})^{-1}$ is the
pre-update precision. The diagnostic uses the same discrepancy tempering
parameter $\eta_\delta$ as in Proposition~\ref{prop:recursive_delta_update}.

Let $\lambda_{max}(\cdot)$ denotes the maximum eigenvalue, we compute
\[
\gamma_{\mathrm{prior},t}
=
\lambda_{\max}
\left(
\bigl(M_{t-1}^{\mathrm{pre}}\bigr)^{-1/2}
A_t^\top M_t^{\mathrm{pre}} A_t
\bigl(M_{t-1}^{\mathrm{pre}}\bigr)^{-1/2}
\right),
\]
which is the direct empirical analogue of
Assumption~\ref{ass:brpc_transport_contraction} because it uses the propagated pre-update precision. We also
compute
\[
\gamma_{\mathrm{post},t}
=
\lambda_{\max}
\left(
\bigl(M_{t-1}^{\mathrm{pre}}\bigr)^{-1/2}
A_t^\top C_t^{-1} A_t
\bigl(M_{t-1}^{\mathrm{pre}}\bigr)^{-1/2}
\right).
\]
The second quantity is not an Assumption~\ref{ass:brpc_transport_contraction} diagnostic. It measures posterior
sharpening after the current batch is assimilated. Since
\[
C_t^{-1}
=
(P_t^-)^{-1}
+
\eta_\delta G_t^\top R_t^{-1}G_t,
\]
\(\gamma_{\mathrm{post},t}\) can exceed one even when the pre-update propagation
is stable.

\begin{table}[t]
\centering
\small
\setlength{\tabcolsep}{5pt}
\caption{
Propagation-metric replay diagnostic on representative mixed streams.
\(\gamma_{\mathrm{prior}}\) is the pre-update propagation diagnostic closest to
Assumption~1. \(\gamma_{\mathrm{post}}\) uses the post-update precision and
therefore measures posterior sharpening rather than propagation stability.
}
\label{tab:transport_replay}
\begin{tabular}{llrrrrrr}
\toprule
Dataset & Scenario
& \(n\)
& Median \(\gamma_{\mathrm{prior}}\)
& Max \(\gamma_{\mathrm{prior}}\)
& Frac. \(\gamma_{\mathrm{prior}}\le 1\)
& Median \(\gamma_{\mathrm{post}}\)
& Max \(\gamma_{\mathrm{post}}\) \\
\midrule
Synthetic & Mixed
& 145
& 0.9999
& 1.0000
& 0.9517
& 1.0666
& 1.9844 \\
PlantSim & Mixed
& 1060
& 0.1532
& 0.9877
& 1.0000
& 0.5846
& 2.1394 \\
\bottomrule
\end{tabular}
\end{table}

Table~\ref{tab:transport_replay} reports the diagnostic on the mixed
synthetic and plant-simulation streams, which contain both gradual drift and
abrupt changes. The pre-update diagnostic \(\gamma_{\mathrm{prior}}\) is nearly
nonexpansive on the synthetic stream and strictly contractive on the
plant-simulation stream. Thus, in this representative diagnostic, the
propagation-metric analogue of Assumption~\ref{ass:brpc_transport_contraction} is empirically stable.

The post-update diagnostic behaves differently. In both benchmarks,
\(\gamma_{\mathrm{post}}\) can be larger than one, especially in the upper tail.
This is expected because the current residual batch increases posterior
precision through the Gaussian update. These values should therefore not be
read as failures of Assumption~1. They indicate posterior sharpening after data
assimilation, whereas Assumption~1 concerns the pre-update propagation geometry.

This diagnostic is covariance-level rather than response-value-level. Under the
fixed Gaussian replay used here, the covariance recursion depends on the input
batches, kernel hyperparameters, residual noise, and likelihood tempering, but
not on the realized response values.

\subsection{EnKF Sensitivity Study}
\label{app:joint_enkf_sensitivity}

This section reports a sensitivity study for an ensemble Kalman filtering
baseline. The purpose is not to introduce a new method, but to check whether a
data-assimilation alternative to BRPC can be made competitive by tuning
its process-noise and inflation parameters.

The EnKF baseline maintains an ensemble over a joint calibration--bias
state
\[
z_t
=
\left(
\theta_t,\beta_t^\top
\right)^\top,
\qquad
\beta_t\in\mathbb{R}^q .
\]
Here \(\theta_t\) is the calibration parameter and \(\beta_t\) is a finite
discrepancy-coefficient vector. Given basis functions
\(b(x)\in\mathbb{R}^q\), the observation model used by the filter is
\[
Y_t(x)
=
y_s(x,\theta_t)
+
b(x)^\top \beta_t
+
\epsilon_t,
\qquad
\epsilon_t\sim N(0,\sigma^2).
\]
For the one-dimensional synthetic benchmark, we use an intercept, a linear
term, and radial-basis functions,
\[
b(x)
=
\left(
1,\ x,\ 
\exp\left[-\frac{(x-c_1)^2}{2\ell_b^2}\right],
\ldots,
\exp\left[-\frac{(x-c_m)^2}{2\ell_b^2}\right]
\right)^\top .
\]

The forecast step evolves both components by damped random walk:
\[
\theta_{t}^{(i),-}
=
\Pi_\Theta
\left(
\theta_{t-1}^{(i)}
+
\xi_{\theta,t}^{(i)}
\right),
\qquad
\xi_{\theta,t}^{(i)}
\sim N(0,\sigma_\theta^2),
\]
and
\[
\beta_t^{(i),-}
=
\rho_\beta \beta_{t-1}^{(i)}
+
\xi_{\beta,t}^{(i)},
\qquad
\xi_{\beta,t}^{(i)}
\sim N(0,\sigma_\beta^2 I_q),
\]
where \(\Pi_\Theta\) clips \(\theta\) to the admissible range.

Given a batch \(X_t\), each ensemble member induces a predicted observation
vector
\[
\widehat Y_t^{(i),-}
=
y_s(X_t,\theta_t^{(i),-})
+
B_t\beta_t^{(i),-},
\qquad
B_t =
\begin{bmatrix}
b(x_{t,1})^\top\\
\vdots\\
b(x_{t,K_t})^\top
\end{bmatrix}.
\]
Let \(Z_t^{-}\) be the matrix of forecast joint states and let
\(\widehat Y_t^{-}\) be the matrix of predicted batch observations. The EnKF
uses empirical cross-covariance and observation covariance
\[
C_{zy}
=
\frac{1}{N-1}
\widetilde Z_t^\top \widetilde Y_t,
\]
and
\[
C_{yy}
=
\frac{1}{N-1}
\widetilde Y_t^\top \widetilde Y_t
+
\sigma^2 I,
\]
where \(\widetilde Z_t\) and \(\widetilde Y_t\) are the ensemble anomaly
matrices, with covariance inflation applied to the state anomalies. To avoid confusion with the batch size $K_t$, we denote the EnKF Kalman gain by
$K_t^{\mathrm{EnKF}}$.
The Kalman gain is
\[
K_t^{\mathrm{EnKF}} = C_{zy}C_{yy}^{-1}.
\]
Using perturbed observations, each ensemble member is updated as
\[
z_t^{(i)}
=
z_t^{(i),-}
+
K_t^{\mathrm{EnKF}}
\left(
Y_t+\epsilon_t^{(i)}-\widehat Y_t^{(i),-}
\right),
\qquad
\epsilon_t^{(i)}\sim N(0,\sigma^2 I).
\]
This update directly contrasts with BRPC: in EnKF, \(\theta_t\) and
\(\beta_t\) are corrected through the same innovation, whereas BRPC updates
\(\theta_t\) through a discrepancy-free projected likelihood and learns
discrepancy only conditionally afterward.

\paragraph{Sensitivity grid.}
To check that the EnKF baseline was not disadvantaged by an arbitrary
hyperparameter choice, we conducted a sensitivity study over the main
state-evolution parameters used by the filter. Specifically, we varied the
random-walk standard deviation for the calibration parameter state and the
random-walk standard deviation for the discrepancy-coefficient state:
\[
\sigma_\theta
\in
\{0.015,\,0.035,\,0.07\},
\qquad
\sigma_\beta
\in
\{0.005,\,0.015,\,0.04\}.
\]
For each hyperparameter combination, we ran EnKF on the synthetic
\textsc{Slope}, \textsc{Sudden}, and \textsc{Mixed} scenarios and recorded
\(\theta\)-RMSE and \(y\)-RMSE.

Figure~\ref{fig:jointenkf_theta_heatmap} shows the resulting
\(\theta\)-RMSE sensitivity heatmaps. To keep the figure readable, we do not
print the numeric values inside each cell. Instead, the color indicates the
relative performance of that hyperparameter combination within the same
scenario.

The main conclusion is that EnKF is sensitive to the allocation of process
noise. The sensitivity landscapes are not flat, and no single
hyperparameter choice dominates uniformly across all nonstationary scenarios.
This supports two points. First, the reported EnKF behavior is not the
result of selecting an artificially poor configuration. Second, data
assimilation itself requires substantial tuning to balance fast adaptation
against filter instability and parameter--bias attribution.

We emphasize that this sensitivity analysis is included as a robustness check
for the baseline rather than as a central contribution of the paper. Our main
comparison remains focused on projected online calibration with and without
explicit restart mechanism.

\begin{figure*}[t]
    \centering
    \includegraphics[width=\textwidth]{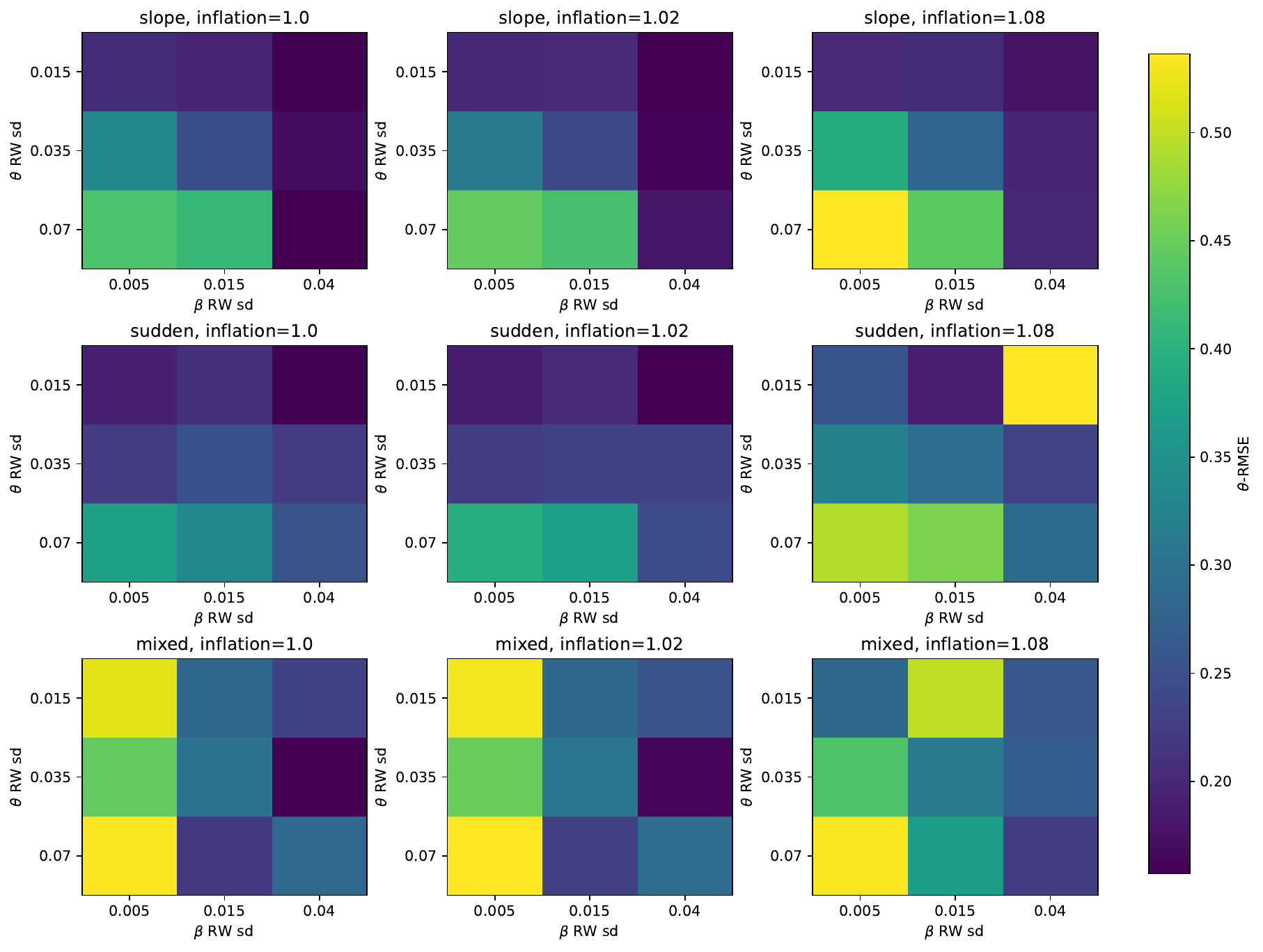}
    \caption{
    EnKF sensitivity heatmaps for \(\theta\)-RMSE on the synthetic
    \textsc{Slope}, \textsc{Sudden}, and \textsc{Mixed} scenarios.
    Columns correspond to covariance inflation settings, rows correspond to
    scenarios, and each heatmap cell corresponds to one
    \((\sigma_\theta,\sigma_\beta)\) combination.
    Darker cells indicate lower \(\theta\)-RMSE within the same scenario.
    The figure shows that EnKF performance depends materially on
    process-noise allocation and inflation.
    }
    \label{fig:jointenkf_theta_heatmap}
\end{figure*}

\subsection{BRPC-P and BRPC-F state-representation ablations}
\label{app:brpc_state_ablation}

In the main text, the methods denoted by \textbf{B-BRPC} and \textbf{C-BRPC}
use the exact recursive discrepancy implementation; these correspond to
\textbf{B-BRPC-E} and \textbf{C-BRPC-E}, respectively, under the notation of
Appendix~\ref{app:brpc_state_maps}. In this section, we report the full
experimental results across state representations and restart variants in
Tables~\ref{tab:synthetic_main}--\ref{tab:plant_ablation_appendix}.

\begin{table*}[ht]
\centering
\small
\setlength{\tabcolsep}{5pt}
\caption{
Synthetic benchmark results across the \textit{drifting}, \textit{sudden}, and \textit{mixed} scenario families.
We report mean performance over repeated runs.
Lower is better for $\theta$-RMSE, $\theta$-CRPS, $y$-RMSE, $y$-CRPS, and runtime.
Restart count is reported as a restart-behavior diagnostic.
}
\label{tab:synthetic_main}
\resizebox{\textwidth}{!}{%
\begin{tabular}{llrrrrrr}
\toprule
Scenario & Method & $\theta$-RMSE $\downarrow$ & $\theta$-CRPS $\downarrow$ & $y$-RMSE $\downarrow$ & $y$-CRPS $\downarrow$ & Restarts & Time (s) $\downarrow$ \\

\midrule

\multirow{11}{*}{Drifting}
& DA(WaldPF)     & 0.090 & 0.064 & 1.909 & 1.090 & -- & 1.279 \\
& DA(EnKF)     & 0.109 & 0.066 & 0.548 & 0.295 & -- & 0.236 \\
& B-BRPC-E   & 0.014 & 0.025 & 0.484 & 0.302 & 11.774 & 39.402 \\
& C-BRPC-E   & 0.015 & 0.024 & 0.580 & 0.375 & 4.774 & 19.238 \\
& BRPC-F     & 0.015 & 0.024 & 1.160 & 0.764 & -- & 15.662 \\
& B-BRPC-F   & 0.015 & 0.025 & 0.484 & 0.302 & 11.758 & 37.490 \\
& C-BRPC-F   & 0.014 & 0.024 & 0.581 & 0.376 & 4.774 & 16.785 \\
& BRPC-P     & 0.015 & 0.024 & 0.728 & 0.418 & -- & 75.991 \\
& B-BRPC-P   & 0.014 & 0.032 & 0.408 & 0.247 & 4.440 & 88.171 \\
& C-BRPC-P   & 0.015 & 0.024 & 0.558 & 0.334 & 1.782 & 40.775 \\
& B-BRPC-RRA   & 0.015 & 0.039 & 0.437 & 0.240 & 8.128 & 70.667 \\
\midrule

\multirow{11}{*}{Sudden(3)}
& DA(WaldPF)     & 0.178 & 0.110 & 1.994 & 1.157 & -- & 1.160 \\
& DA(EnKF)     & 0.160 & 0.090 & 0.982 & 0.600 & -- & 0.189 \\
& B-BRPC-E   & 0.026 & 0.029 & 0.607 & 0.400 & 10.287 & 32.020 \\
& C-BRPC-E   & 0.027 & 0.027 & 0.671 & 0.445 & 3.706 & 16.051 \\
& BRPC-F     & 0.040 & 0.031 & 1.192 & 0.791 & -- & 12.847 \\
& B-BRPC-F   & 0.026 & 0.029 & 0.609 & 0.401 & 10.322 & 30.414 \\
& C-BRPC-F   & 0.027 & 0.027 & 0.673 & 0.446 & 3.746 & 13.858 \\
& BRPC-P     & 0.041 & 0.031 & 0.867 & 0.515 & -- & 59.710 \\
& B-BRPC-P   & 0.028 & 0.034 & 0.590 & 0.375 & 4.054 & 73.600 \\
& C-BRPC-P   & 0.026 & 0.027 & 0.577 & 0.367 & 2.784 & 27.000 \\
& B-BRPC-RRA   & 0.018 & 0.028 & 0.512 & 0.316 & 3.050 & 93.857 \\
\midrule

\multirow{11}{*}{Mixed(3)}
& DA(WaldPF)     & 0.092 & 0.062 & 1.873 & 1.077 & -- & 1.279 \\
& DA(EnKF)     & 0.346 & 0.065 & 0.653 & 0.394 & -- & 0.239 \\
& B-BRPC-E   & 0.021 & 0.027 & 0.464 & 0.292 & 11.865 & 32.857 \\
& C-BRPC-E   & 0.020 & 0.025 & 0.535 & 0.336 & 3.493 & 16.790 \\
& BRPC-F     & 0.030 & 0.027 & 1.032 & 0.665 & -- & 12.513 \\
& B-BRPC-F   & 0.022 & 0.027 & 0.471 & 0.298 & 11.867 & 30.606 \\
& C-BRPC-F   & 0.021 & 0.025 & 0.536 & 0.337 & 3.493 & 13.515 \\
& BRPC-P     & 0.031 & 0.028 & 0.791 & 0.467 & -- & 64.380 \\
& B-BRPC-P   & 0.021 & 0.032 & 0.499 & 0.309 & 4.616 & 73.571 \\
& C-BRPC-P   & 0.016 & 0.025 & 0.494 & 0.307 & 3.041 & 25.215 \\
& B-BRPC-RRA   & 0.021 & 0.036 & 0.505 & 0.299 & 4.813 & 78.127 \\
\bottomrule
\end{tabular}%
}
\end{table*}

On the synthetic benchmark, the restart-augmented fixed-support variants closely
match their expanding-support counterparts. In particular, \textbf{B-BRPC-F}
and \textbf{C-BRPC-F} produce nearly the same restart counts and predictive
errors as \textbf{B-BRPC-E} and \textbf{C-BRPC-E}, respectively. This suggests
that, once the same restart rule is used, the fixed-support representation
preserves the main pre-update predictive behavior relevant to restart decisions,
while changing only the finite-dimensional discrepancy representation.

The proxy-observation representation is more mode-dependent. Without an explicit
restart rule, \textbf{BRPC-P} has weaker predictive accuracy than the restarted
variants across all three scenario families. With restart, \textbf{B-BRPC-P}
and \textbf{C-BRPC-P} can be competitive, and they sometimes reduce the number
of restarts relative to the expanding-support variants. However, this comes with
higher runtime and less uniform gains across scenarios. The plant-simulation ablation in Table~\ref{tab:plant_ablation_appendix}
shows a similar pattern to the synthetic ablation.

Overall, these ablations support the same conclusion as the main
benchmark results. The choice of restart rule has a larger effect on restart
frequency and predictive accuracy than the choice among the three recursive
discrepancy state representations. The state representation still matters for
runtime and approximation quality, but it does not change the qualitative
comparison between BOCPD-style restart and wCUSUM-style restart.

\begin{table*}[t]
\centering
\small
\setlength{\tabcolsep}{5pt}
\caption{
Additional plant-simulation ablation results across the three operating modes.
We report mean performance over repeated runs.
Lower is better for $\theta$-RMSE, $\theta$-CRPS, relative $y$-RMSE, relative $y$-CRPS, and runtime.
Here, relative $y$-RMSE and relative $y$-CRPS denote response errors normalized by the mode-specific response scale, so the reported $y$ metrics are dimensionless and comparable across modes.
Restart count is reported as a restart-behavior diagnostic.
}
\label{tab:plant_ablation_appendix}
\resizebox{\textwidth}{!}{%
\begin{tabular}{llrrrrrr}
\toprule
Mode & Method & $\theta$-RMSE $\downarrow$ & $\theta$-CRPS $\downarrow$ & Rel.\ $y$-RMSE $\downarrow$ & Rel.\ $y$-CRPS $\downarrow$ & Restarts & Time (s) $\downarrow$ \\
\midrule

\multirow{7}{*}{Drifting}
& B-BRPC-E   & 0.800 & 0.454 & 0.988 & 0.274 & 15.700 & 46.714 \\
& C-BRPC-E   & 0.809 & 0.451 & 1.001 & 0.280 & 7.000  & 24.304 \\
& B-BRPC-F   & 0.801 & 0.457 & 0.980 & 0.274 & 15.900 & 35.584 \\
& C-BRPC-F   & 0.807 & 0.451 & 1.020 & 0.292 & 5.900  & 17.438 \\
& B-BRPC-RRA & 0.788 & 0.444 & 1.109 & 0.302 & 17.900 & 93.203 \\
& B-BRPC-P   & 0.802 & 0.454 & 0.960 & 0.256 & 7.100  & 113.778 \\
& C-BRPC-P   & 0.813 & 0.452 & 0.991 & 0.262 & 6.000  & 46.636 \\
\midrule

\multirow{7}{*}{Sudden(5)}
& B-BRPC-E   & 0.957 & 0.509 & 0.164 & 0.079 & 7.400 & 57.009 \\
& C-BRPC-E   & 1.056 & 0.531 & 0.169 & 0.083 & 4.300 & 27.513 \\
& B-BRPC-F   & 1.141 & 0.596 & 0.177 & 0.085 & 7.000 & 37.952 \\
& C-BRPC-F   & 1.020 & 0.522 & 0.179 & 0.085 & 2.900 & 16.444 \\
& B-BRPC-RRA & 1.048 & 0.531 & 0.128 & 0.047 & 5.000 & 149.955 \\
& B-BRPC-P   & 1.309 & 0.683 & 0.163 & 0.087 & 7.400 & 118.882 \\
& C-BRPC-P   & 1.024 & 0.519 & 0.171 & 0.084 & 3.500 & 67.723 \\
\midrule

\multirow{7}{*}{Mixed}
& B-BRPC-E   & 1.580 & 0.673 & 0.554 & 0.174 & 12.400 & 34.237 \\
& C-BRPC-E   & 1.582 & 0.704 & 0.561 & 0.179 & 5.700  & 18.143 \\
& B-BRPC-F   & 1.588 & 0.736 & 0.555 & 0.193 & 12.000 & 27.276 \\
& C-BRPC-F   & 1.527 & 0.785 & 0.593 & 0.182 & 5.600  & 13.499 \\
& B-BRPC-RRA & 1.537 & 0.655 & 0.569 & 0.178 & 13.000 & 71.213 \\
& B-BRPC-P   & 1.545 & 0.719 & 0.557 & 0.186 & 8.100  & 69.558 \\
& C-BRPC-P   & 1.564 & 0.692 & 0.575 & 0.184 & 5.300  & 33.486 \\
\bottomrule
\end{tabular}%
}
\end{table*}

\subsection{Scalability}\label{app:scalability}
The main computational costs separate into the projected parameter update and
the discrepancy update. With $N$ particles and calibration dimension
$d_\theta$, the projected parameter update requires propagating and scoring the
particles, typically $O(N)$ simulator or surrogate evaluations per batch, plus
resampling when the effective sample size is small.

The discrepancy-update cost depends on the chosen state representation. BRPC-E
uses an expanding support and is appropriate for moderate segment lengths.
BRPC-F uses a fixed support of size $M$; its cost is dominated by fixed-rank
linear algebra, such as $O(M^3)$ initialization and $O(K_tM^2)$ batch updates
under the standard fixed-support representation, where $K_t$ is the batch size.
BRPC-P has additional cost from constructing and maintaining the proxy-observation
representation. BRPC-RRA is heavier because it refits the discrepancy model on
the current segment. Let $n_e$ denote the number of residual observations retained in the current
segment for expert $e$. An exact RRA refit costs $O(n_e^3)$, while a fixed-rank
refit costs $O(n_eM^2+M^3)$.

Thus, for longer streams or larger input spaces, the more scalable option is a
fixed-support discrepancy representation combined with the wCUSUM-style restart
rule. BRPC-RRA is better suited to shorter or jump-dominated streams where
segment-local residual refitting provides enough improvement in predictive
evidence to justify the additional computation.

\subsection{Restart-mechanism and state-sharing ablations}
\label{app:restart_state_ablation}

This section studies two auxiliary questions that are not central to the main
headline tables but help clarify the design choices in BRPC. First, we ask
whether an explicit restart rule is necessary, by comparing fixed-support
discrepancy learning with no hard restart, BOCPD-style restart, and
wCUSUM-style restart. Second, we ask whether particle-specific discrepancy
states provide a practical advantage over the shared expert-level discrepancy
state used in the main experiments.

Table~\ref{tab:fixed_particle_controller_ablation} reports the comparison for
the shared fixed-support variant and its particle-specific analogue. The
``None'' rows use the same local fixed-support BRPC state but do not perform
hard restart. The BOCPD and wCUSUM rows use the corresponding restart rules
described in the main text.

\begin{table*}[t]
\centering
\small
\setlength{\tabcolsep}{4.0pt}
\caption{
Restart-mechanism and state-sharing ablation for BRPC-F on the synthetic benchmark. Values are mean $\pm$ standard deviation across runs.
Restart count is a restart-behavior diagnostic rather than a monotone
performance metric.
}
\label{tab:fixed_particle_controller_ablation}
\resizebox{\textwidth}{!}{%
\begin{tabular}{lllrrrr}
\toprule
Scenario & State & Restart
& $\theta$-RMSE $\downarrow$
& $y$-RMSE $\downarrow$
& Restarts
& Time (s) $\downarrow$ \\
\midrule

\multirow{6}{*}{Drifting}
& Shared & None
& $0.0145 \pm 0.0021$ & $1.1603 \pm 0.3362$ & $0.000 \pm 0.000$ & 15.662 \\
& Shared & BOCPD
& $0.0146 \pm 0.0021$ & $0.4839 \pm 0.2251$ & $11.758 \pm 1.755$ & 37.490 \\
& Shared & wCUSUM
& $0.0145 \pm 0.0022$ & $0.5809 \pm 0.2628$ & $4.774 \pm 0.420$ & 16.785 \\
& Particle & None
& $0.0145 \pm 0.0021$ & $1.1881 \pm 0.3370$ & $0.000 \pm 0.000$ & 18.100 \\
& Particle & BOCPD
& $0.0145 \pm 0.0021$ & $0.5141 \pm 0.2451$ & $11.260 \pm 1.639$ & 42.830 \\
& Particle & wCUSUM
& $0.0144 \pm 0.0022$ & $0.6040 \pm 0.2703$ & $4.774 \pm 0.420$ & 19.567 \\

\midrule
\multirow{6}{*}{Sudden}
& Shared & None
& $0.0401 \pm 0.0342$ & $1.1915 \pm 0.3850$ & $0.000 \pm 0.000$ & 12.847 \\
& Shared & BOCPD
& $0.0264 \pm 0.0235$ & $0.6089 \pm 0.2873$ & $10.322 \pm 4.138$ & 30.414 \\
& Shared & wCUSUM
& $0.0269 \pm 0.0245$ & $0.6727 \pm 0.3222$ & $3.746 \pm 1.243$ & 13.858 \\
& Particle & None
& $0.0405 \pm 0.0343$ & $1.2092 \pm 0.3909$ & $0.000 \pm 0.000$ & 15.017 \\
& Particle & BOCPD
& $0.0272 \pm 0.0224$ & $0.6504 \pm 0.2934$ & $9.310 \pm 3.972$ & 36.592 \\
& Particle & wCUSUM
& $0.0276 \pm 0.0247$ & $0.6857 \pm 0.3302$ & $3.782 \pm 1.217$ & 15.969 \\

\midrule
\multirow{6}{*}{Mixed}
& Shared & None
& $0.0303 \pm 0.0220$ & $1.0319 \pm 0.2009$ & $0.000 \pm 0.000$ & 12.513 \\
& Shared & BOCPD
& $0.0215 \pm 0.0170$ & $0.4714 \pm 0.0899$ & $11.867 \pm 2.647$ & 30.606 \\
& Shared & wCUSUM
& $0.0211 \pm 0.0148$ & $0.5357 \pm 0.0979$ & $3.493 \pm 1.005$ & 13.515 \\
& Particle & None
& $0.0307 \pm 0.0222$ & $1.0485 \pm 0.2049$ & $0.000 \pm 0.000$ & 15.146 \\
& Particle & BOCPD
& $0.0219 \pm 0.0152$ & $0.4959 \pm 0.0723$ & $9.893 \pm 2.252$ & 39.409 \\
& Particle & wCUSUM
& $0.0206 \pm 0.0134$ & $0.5584 \pm 0.1159$ & $3.440 \pm 1.056$ & 16.175 \\

\bottomrule
\end{tabular}%
}
\end{table*}

The first pattern is that hard restart substantially improves prediction for
the fixed-support discrepancy representation in streams with abrupt changes.
Across the sudden and mixed families, both BOCPD-style and wCUSUM-style restarts
reduce \(y\)-RMSE relative to the no-restart fixed-support state, for both
shared and particle-specific discrepancy representations. This supports the
main design premise that, after abrupt changes, carrying forward pre-change
parameter and discrepancy information can bias the post-change update.

The drifting family shows a related but different effect. Although there are no
annotated abrupt changes, occasional re-initialization can still improve local
prediction by limiting the accumulation of outdated residual information. This
behavior should be interpreted as adaptive memory control rather than
changepoint detection.

The second pattern is that particle-specific fixed-support discrepancy states
do not provide a systematic advantage over the shared expert-level
fixed-support state. Across restart choices and scenario families, the two
variants achieve similar \(\theta\)-RMSE, while the particle-specific version
is usually slightly worse in \(y\)-RMSE and consistently slower. This supports
the shared discrepancy representation used in the main experiments: it preserves
the projected-residual update while avoiding the additional cost of maintaining
a separate discrepancy state for every particle.

These ablations justify the shared expert-level discrepancy representation used
in the main experiments. Particle-specific discrepancy states are closer to the
fully conditional projected-calibration construction, but in this benchmark they
do not improve tracking or prediction enough to offset their additional
computational cost. The shared representation is therefore the more practical
choice: it preserves the projected-residual update while providing comparable
pre-update predictive behavior for restart decisions.

\subsection{Filtering Baseline with BOCPD Restart}
\label{app:bocpd_da_baseline}

A natural question is whether the projected BRPC construction is needed, or
whether one can instead apply BOCPD restart to a standard online filtering
method. To test this alternative, we evaluate a moving-particle filtering
baseline based on \citet{ward2021continuous}, denoted BOCPD-WardPFMove. This
baseline uses random-walk parameter evolution and joint filtering updates, while
BOCPD is applied to its pre-update predictive likelihoods.

Table~\ref{tab:bocpd_da_diagnostic} compares BOCPD-WardPFMove with B-BRPC-E on
the synthetic benchmark. The key distinction is that BOCPD-WardPFMove updates
the calibration parameter through a joint assimilation step. In this update, the
same prediction residual can be attributed to parameter movement, simulator
misspecification, observation noise, or an abrupt change in the data-generating
system. Under simulator misspecification, the filtering parameter is therefore
not tied specifically to the projected calibration target. Increasing the
random-walk variance makes the filter more responsive, but it can also absorb
part of an abrupt change before BOCPD accumulates strong evidence for restart.
Reducing the random-walk variance has the opposite limitation: the filter becomes
less responsive after changes.

\begin{table}[t]
\centering
\small
\setlength{\tabcolsep}{4.5pt}
\caption{
Diagnostic comparison between B-BRPC-E and a moving-particle filtering baseline
with BOCPD restart on the synthetic benchmark. Values are mean $\pm$ standard
deviation across runs. Restart count is a restart-behavior diagnostic. Here
restart count denotes hard restarts selected by the BOCPD run-length posterior;
BOCPD updates its run-length posterior at every batch even when no hard restart
is selected.
}
\label{tab:bocpd_da_diagnostic}
\resizebox{\linewidth}{!}{%
\begin{tabular}{llrrrr}
\toprule
Scenario & Method
& $\theta$-RMSE $\downarrow$
& $y$-RMSE $\downarrow$
& Restarts
& Time (s) $\downarrow$ \\
\midrule

\multirow{2}{*}{Drifting}
& B-BRPC-E
& $0.0151 \pm 0.0024$
& $0.4919 \pm 0.2468$
& $12.16 \pm 1.86$
& $11.70 \pm 1.57$ \\
& BOCPD-WardPFMove
& $0.0902 \pm 0.0302$
& $1.9090 \pm 0.1164$
& $0.00 \pm 0.00$
& $5.97 \pm 0.05$ \\

\midrule
\multirow{2}{*}{Sudden}
& B-BRPC-E
& $0.0212 \pm 0.0196$
& $0.6123 \pm 0.2934$
& $10.80 \pm 4.44$
& $10.18 \pm 3.80$ \\
& BOCPD-WardPFMove
& $0.2273 \pm 0.2003$
& $1.9929 \pm 0.0720$
& $0.30 \pm 0.50$
& $5.20 \pm 1.94$ \\

\midrule
\multirow{2}{*}{Mixed}
& B-BRPC-E
& $0.0194 \pm 0.0167$
& $0.5768 \pm 0.2845$
& $11.20 \pm 3.90$
& $10.63 \pm 3.37$ \\
& BOCPD-WardPFMove
& $0.1870 \pm 0.1800$
& $1.9682 \pm 0.0948$
& $0.21 \pm 0.44$
& $5.43 \pm 1.66$ \\

\bottomrule
\end{tabular}%
}
\end{table}

The diagnostic shows an under-segmentation pattern for the joint filtering
baseline. In the drifting family, BOCPD-WardPFMove selects no hard restarts on
average, yet its $\theta$-RMSE and $y$-RMSE are substantially worse than those
of B-BRPC-E. In the sudden family, where abrupt changes are annotated, it selects
only $0.30$ hard restarts on average and has much larger tracking and predictive
errors. The mixed family shows the same pattern. Thus the filtering baseline
does move over time, but its joint update can smooth abrupt changes into the
local parameter path without accurately recovering the projected calibration
target.

This behavior is consistent with the identifiability issue discussed in
Appendix~\ref{app:joint_assimilation_identifiability}. When the likelihood does
not separate parameter movement from discrepancy correction, the parameter path
can be pulled toward explaining residual model bias. BOCPD then receives
pre-update predictive likelihoods from a locally adapting filter rather than
from a model whose calibration target and discrepancy correction have been
separated.

BRPC addresses this issue by separating the projected parameter update from the
conditional discrepancy update. The parameter update is discrepancy-free, so the
particle weights are not directly altered by a flexible discrepancy correction.
After the simulator anchor is set, the discrepancy update models the remaining
residual structure. This separation does not remove all interaction between
discrepancy learning and restart decisions, as shown by the larger restart count
of B-BRPC-E. However, it makes the source of the interaction explicit and leads
to more targeted modifications: changing the restart rule, as in C-BRPC-E, or
changing the discrepancy state used in the BOCPD predictive likelihood, as in
B-BRPC-RRA. The BOCPD-WardPFMove diagnostic therefore supports the main point
that restart should be designed together with the calibration and discrepancy
updates, rather than attached as a generic layer on top of a joint filtering
baseline.

\begin{figure}[htbp]
     \centering
     \begin{subfigure}[b]{0.9\textwidth}
         \centering
         \includegraphics[width=\textwidth]{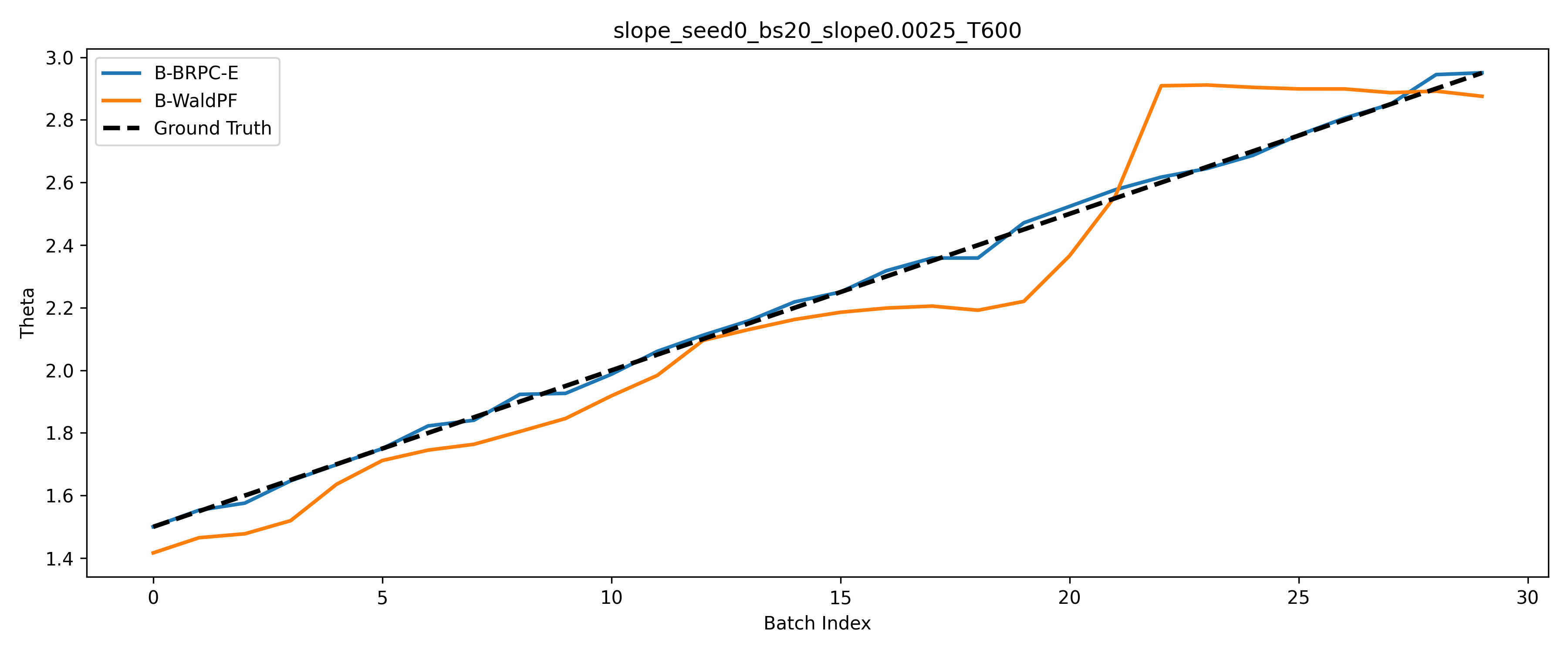}
         \caption{Drifting scenario $\theta$-tracking trajectories}
     \end{subfigure}
     
     \vspace{1em} %
     \begin{subfigure}[b]{0.9\textwidth}
         \centering
         \includegraphics[width=\textwidth]{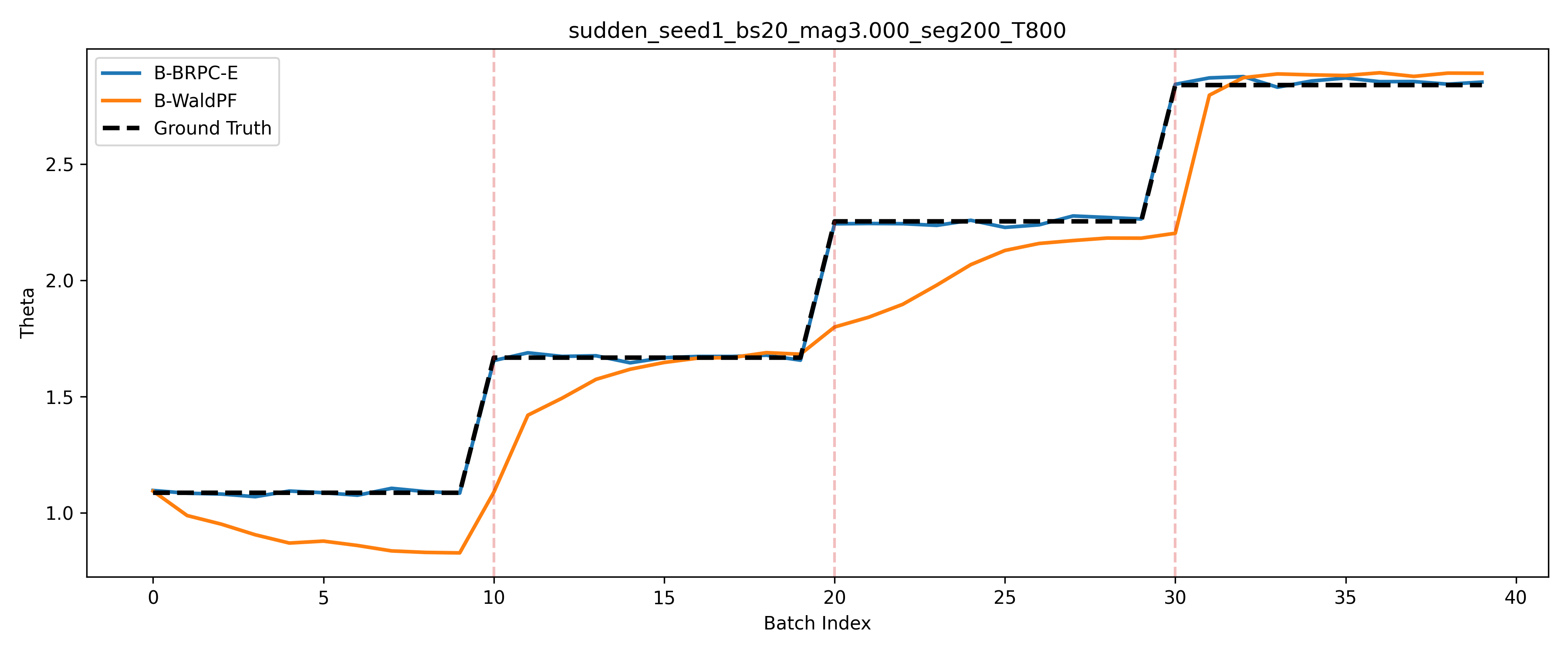}
         \caption{sudden scenario $\theta$-tracking trajectories}
     \end{subfigure}

     \vspace{1em}
     \begin{subfigure}[b]{0.9\textwidth}
         \centering
         \includegraphics[width=\textwidth]{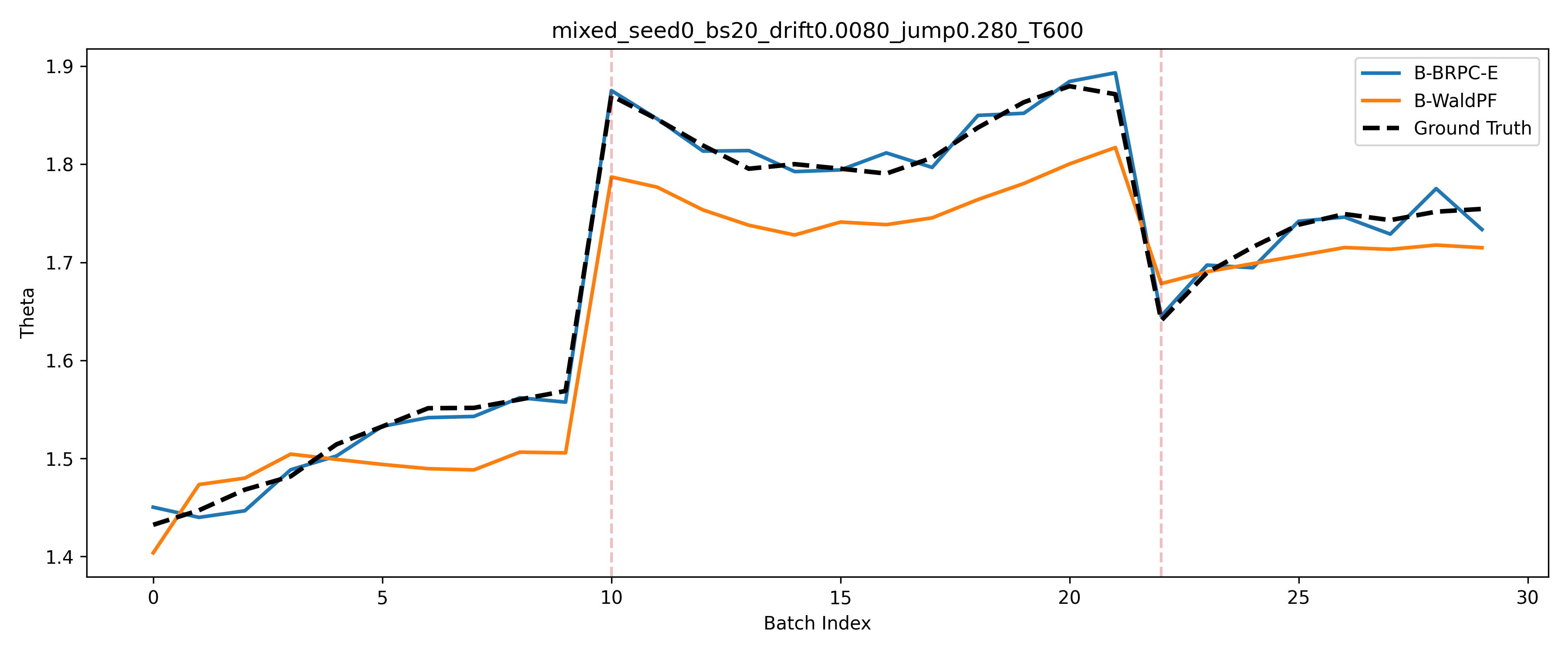}
         \caption{Mixed scenario $\theta$-tracking trajectories}
     \end{subfigure}
     
\caption{Representative $\theta$-tracking trajectories for BOCPD-WardPFMove and
B-BRPC-E. BOCPD-WardPFMove can partially follow temporal variation through its
random-walk parameter evolution, but its joint assimilation update blurs the
attribution between parameter movement and residual model bias. As a result, it
lags or under-reacts after abrupt jumps and provides weak evidence for explicit
restart.}
     \label{fig:bocpd_da_tracking}
\end{figure}

\FloatBarrier
\subsection{Restart-count scaling diagnostics}
\label{app:restart_count_scaling}

\begin{figure}[ht]
     \centering
     \begin{subfigure}[b]{0.8\textwidth}
         \centering
         \includegraphics[width=\textwidth]{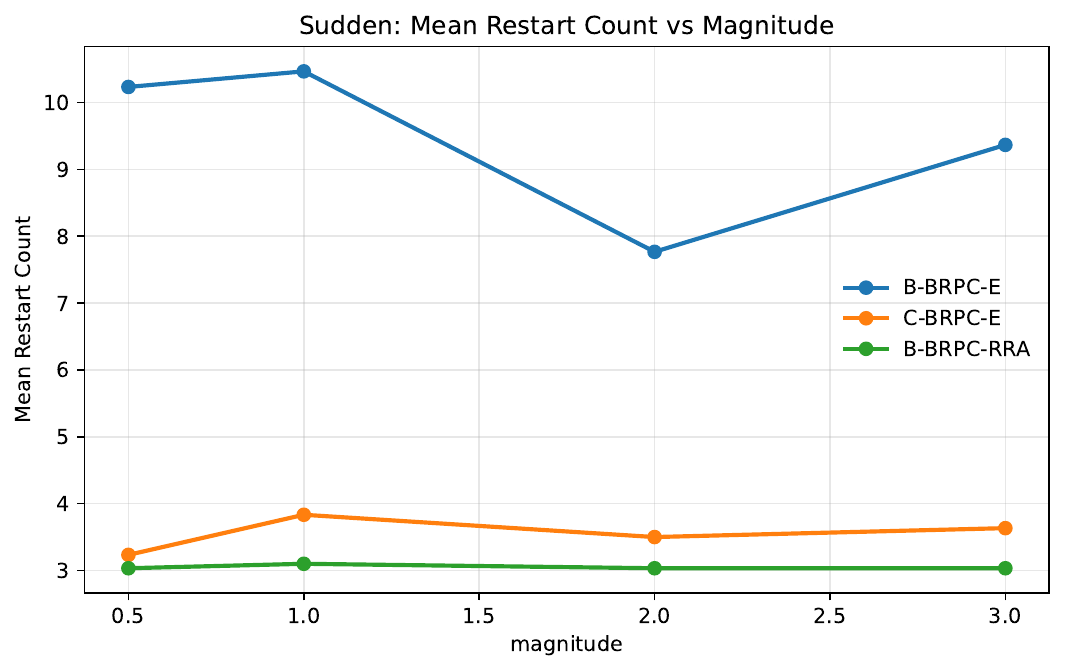}
         \caption{Restart count vs. jump magnitude}
         \label{fig:restart_vs_magnitude}
     \end{subfigure}%
     \hfill
     \begin{subfigure}[b]{0.8\textwidth}
         \centering
         \includegraphics[width=\textwidth]{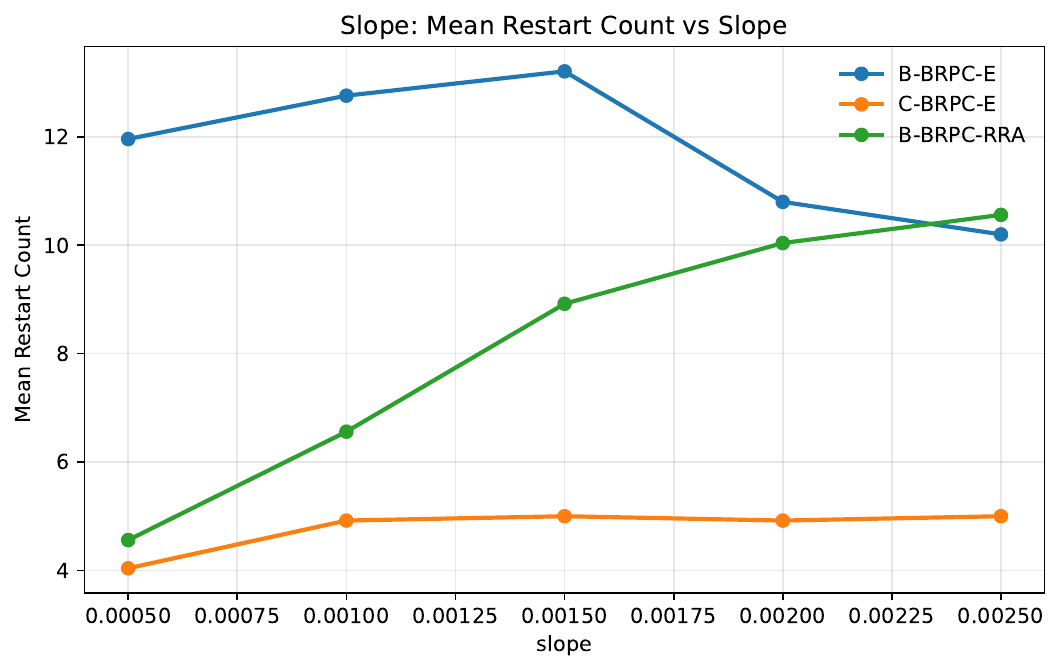}
         \caption{Restart count vs. drifting slope}
         \label{fig:restart_vs_slope}
     \end{subfigure}

     \caption{Restart count as a function of jump magnitude and drifting slope in the synthetic benchmark. Each point represents the mean restart count over repeated runs for a given scenario configuration.}
     \label{fig:three_graphs}
\end{figure}

In this section, we examine how the restart count varies across configurations
in the drifting and sudden-jump settings. Figure~\ref{fig:three_graphs} shows
restart behavior beyond the averaged counts reported in the tables. In the
sudden-change panel, \textbf{B-BRPC-E} consistently over-restarts, whereas
\textbf{C-BRPC-E} and \textbf{B-BRPC-RRA} keep the restart count closer to the
ground-truth number of changepoints. This improvement does not come from
under-reacting to larger jumps: across the tested jump magnitudes, the improved
restart rules remain close to the ground-truth level of three restarts.

The drifting panel should be interpreted differently. \textbf{C-BRPC-E} remains comparatively flat as slope increases, reflecting the fact that its score-based restart rule is designed to monitor instability in a single pre-update score stream rather than to segment the stream aggressively. By contrast, \textbf{B-BRPC-RRA} shows increasing restart frequency as drift becomes faster. In our interpretation, these are not simply false alarms. Because residual re-anchoring recomputes residuals against the current segment anchor and refits the discrepancy model to the accumulated segment data, the outer BOCPD layer also acts as an adaptive memory mechanism: when the underlying drift is faster, shorter segments and more frequent resets become a reasonable response.

\FloatBarrier
\subsection{Restart-Mechanism Hyperparameter Sensitivity}
\label{app:hyper_sensitivity}

This section checks whether the restart-mechanism conclusions in the main text are
artifacts of a particular BOCPD hazard, hard-restart margin, cooldown, or wCUSUM
threshold. We focus on the Exact BRPC state, because B-BRPC-E and C-BRPC-E expose
the restart-facing evidence interaction most directly.

For B-BRPC-E, we evaluate four Restart-BOCPD configurations:
\[
(\lambda_h,\rho_B,c_{\mathrm{cool}})
\in
\{(200,1,10),\ (400,2,20),\ (800,4,20),\ (1600,4,30)\},
\]
where $\lambda_h$ is the hazard scale, $\rho_B$ is the hard-restart margin,
and $c_{\mathrm{cool}}$ is the cooldown length in batches. For C-BRPC-E, we evaluate four
wCUSUM configurations:
\[
(W,h_{\mathrm C},\kappa,\sigma_{\min})
\in
\{(4,0.25,0.25,0.25),\ (4,0.50,0.25,0.25),\
  (8,0.50,0.50,0.50),\ (8,1.00,0.50,0.50)\},
\]
where \(W\) is the window size, \(h_C\) is the restart threshold, \(\kappa\) is
the drift allowance, and \(\sigma_{\min}\) is the score standardization floor.
All other experimental settings are kept fixed. We report ranges across these
hyperparameter grids rather than one row per configuration because the goal is
to test qualitative robustness, not to select a scenario-specific optimum.

Event-level Precision@2, Recall@2, F1@2, and Delay@2 are computed only for
settings with annotated abrupt changes, namely the sudden and mixed scenario
families. A detected restart is counted as correct only if it occurs within two
batches after an annotated changepoint; early detections are not rewarded.

\begin{table}[t]
\centering
\small
\setlength{\tabcolsep}{5pt}
\caption{
Restart-mechanism hyperparameter sensitivity for B-BRPC-E and C-BRPC-E on the
representative synthetic benchmark. Each entry reports the range across the
tested restart-rule hyperparameter grid listed in the text. Event-level metrics use
a two-batch tolerance window and do not reward early detections. Restart count is
a restart-behavior diagnostic.
}
\label{tab:controller_hyper_sensitivity}
\resizebox{\linewidth}{!}{%
\begin{tabular}{llccccc}
\toprule
Scenario & Method
& Restarts
& Precision@2
& Recall@2
& F1@2
& Delay@2 \\
\midrule
\multirow{2}{*}{Sudden}
& B-BRPC-E
& $9.6$--$9.8$
& $0.286$--$0.293$
& $0.933$
& $0.436$--$0.444$
& $0.10$--$0.17$ \\
& C-BRPC-E
& $3.0$--$3.2$
& $0.700$--$0.867$
& $0.733$--$0.867$
& $0.714$--$0.867$
& $0.00$ \\
\midrule
\multirow{2}{*}{Mixed}
& B-BRPC-E
& $12.6$--$13.0$
& $0.154$--$0.159$
& $1.000$
& $0.267$--$0.275$
& $0.20$--$0.30$ \\
& C-BRPC-E
& $3.2$--$3.4$
& $0.533$--$0.600$
& $0.900$--$1.000$
& $0.667$--$0.747$
& $0.10$--$0.30$ \\
\bottomrule
\end{tabular}%
}
\end{table}

Table~\ref{tab:controller_hyper_sensitivity} shows that the main restart-mechanism
conclusion is stable across the tested hyperparameter ranges. B-BRPC-E has high
Recall@2 across both sudden and mixed settings, so its main issue is not missing
true changes. Instead, its Precision@2 remains low and its restart count remains
large across all tested BOCPD configurations. This supports the interpretation
that B-BRPC-E over-restarts because ordinary stable-segment mismatch is often
converted into restart evidence.

C-BRPC-E shows the complementary behavior. Across the tested wCUSUM settings, it
substantially reduces restart count and improves F1@2 in both sudden and mixed
settings. Importantly, this is not merely a lower-restart artifact: Recall@2
remains high in the mixed scenario and remains competitive in the sudden
scenario, while Precision@2 improves markedly. Thus C-BRPC-E improves
event-level restart quality, not only restart quantity.

These sensitivity results also clarify the role of restart-mechanism choice. BOCPD acts
as a multi-expert predictive mixture and hard-restart rule, so reducing its
hazard or increasing its margin does not fully remove the evidence-geometry
mismatch induced by recursive discrepancy learning. wCUSUM instead monitors a
single pre-update score stream and is less dependent on calibrated
expert-to-expert likelihood competition. This is why C-BRPC-E is the more robust
restart-side intervention, even though it does not necessarily minimize
predictive error in every setting.

\FloatBarrier
\subsection{Representative parameter-tracking trajectories}
\label{app:representative_trajectories}

\begin{figure}[htbp]
     \centering
     \begin{subfigure}[b]{\textwidth}
         \centering
         \includegraphics[width=\textwidth]{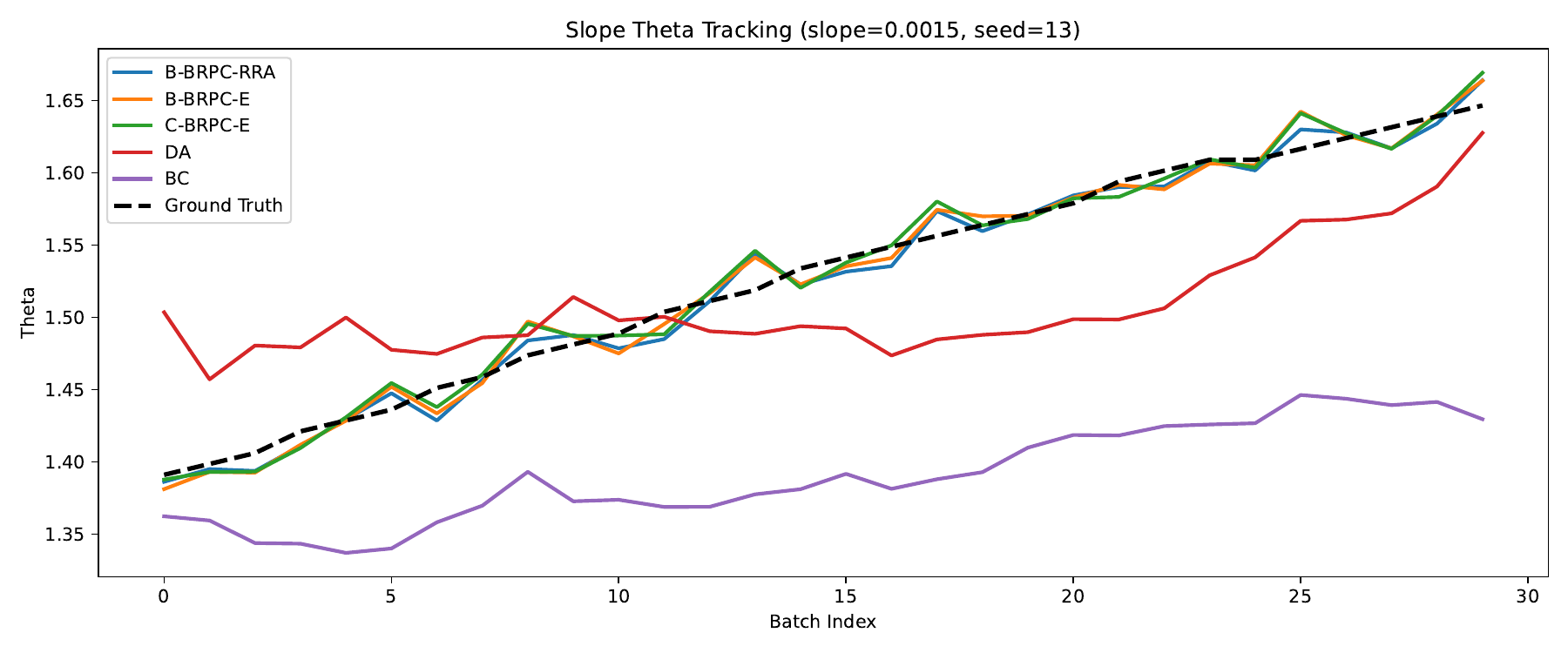}
         \caption{Drifting scenario $\theta$-tracking trajectories}
         \label{fig:sync_drifting_theta_tracking}
     \end{subfigure}
     
     \vspace{1em} %
     \begin{subfigure}[b]{\textwidth}
         \centering
         \includegraphics[width=\textwidth]{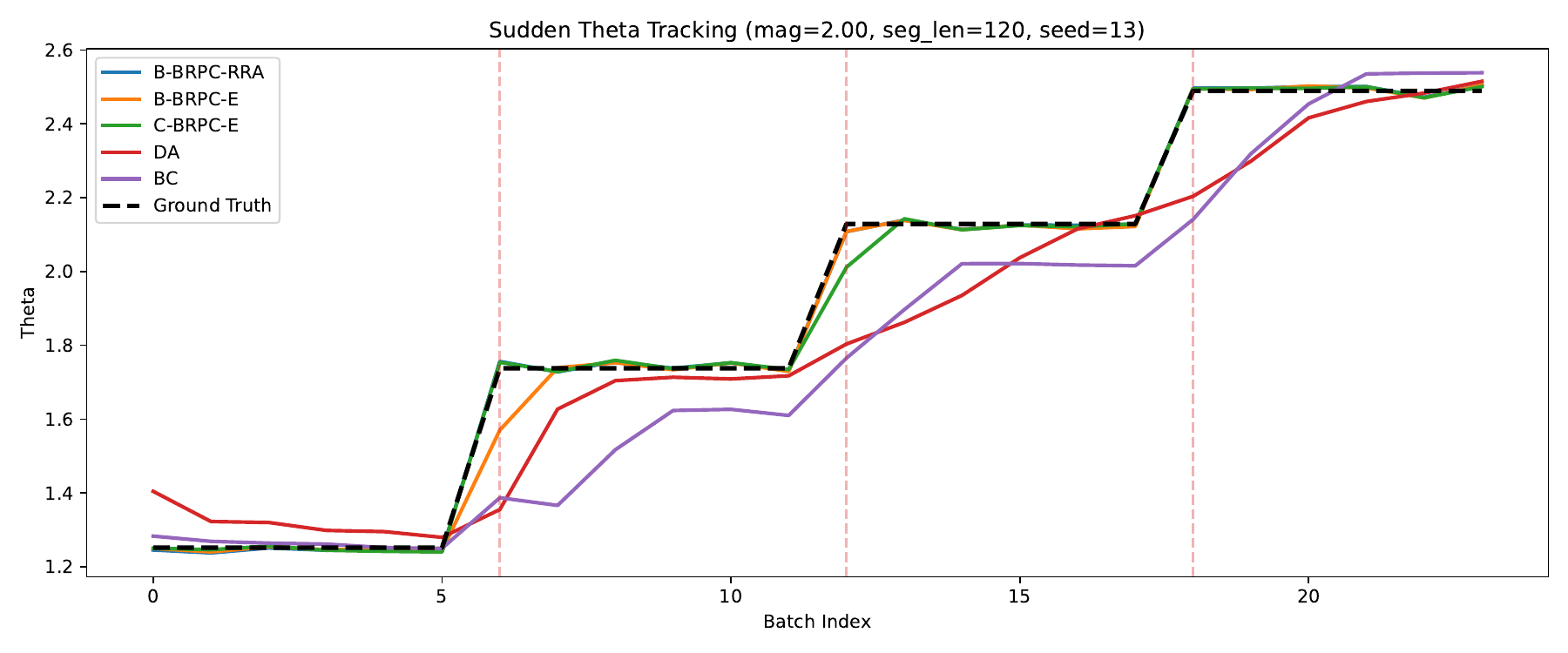}
         \caption{sudden scenario $\theta$-tracking trajectories}
         \label{fig:sync_sudden_theta_tracking}
     \end{subfigure}

     \vspace{1em}
     \begin{subfigure}[b]{\textwidth}
         \centering
         \includegraphics[width=\textwidth]{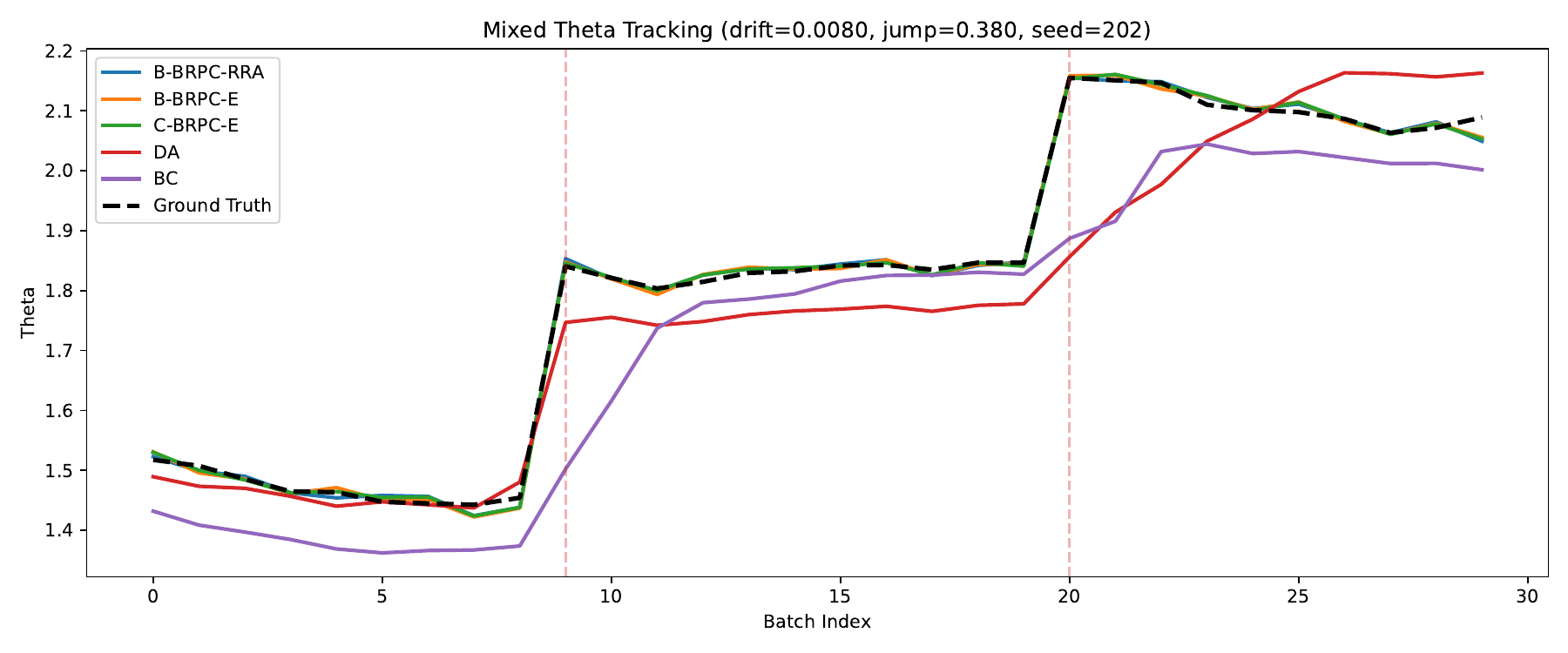}
         \caption{Mixed scenario $\theta$-tracking trajectories}
         \label{fig:sync_mixed_theta_tracking}
     \end{subfigure}
     
     \caption{Synthetic Benchmark $\theta$-tracking trajectories for a representative random seed. Each plot shows the true $\theta$ trajectory (black) and the posterior mean $\theta$ trajectory for each method (colored). The vertical dashed lines indicate the true segment boundaries.}
     \label{fig:sync_theta_tracking}
\end{figure}

Figure~\ref{fig:sync_theta_tracking} gives a more direct view of calibration-parameter quality than the aggregate RMSE tables alone. Across the three synthetic scenarios, the figure helps separate two issues that can be conflated in scalar summaries: local tracking quality and restart quality. The Exact BRPC state remains capable of accurate tracking over stable segments, but the BOCPD-based version tends to introduce more regime fragmentation than necessary. The wCUSUM-based version is visually smoother, while residual re-anchoring is most visibly helpful in the jump-driven case, where fast relocation after changepoints matters most.

\begin{figure}[htbp]
     \centering
     \begin{subfigure}[b]{\textwidth}
         \centering
         \includegraphics[width=\textwidth]{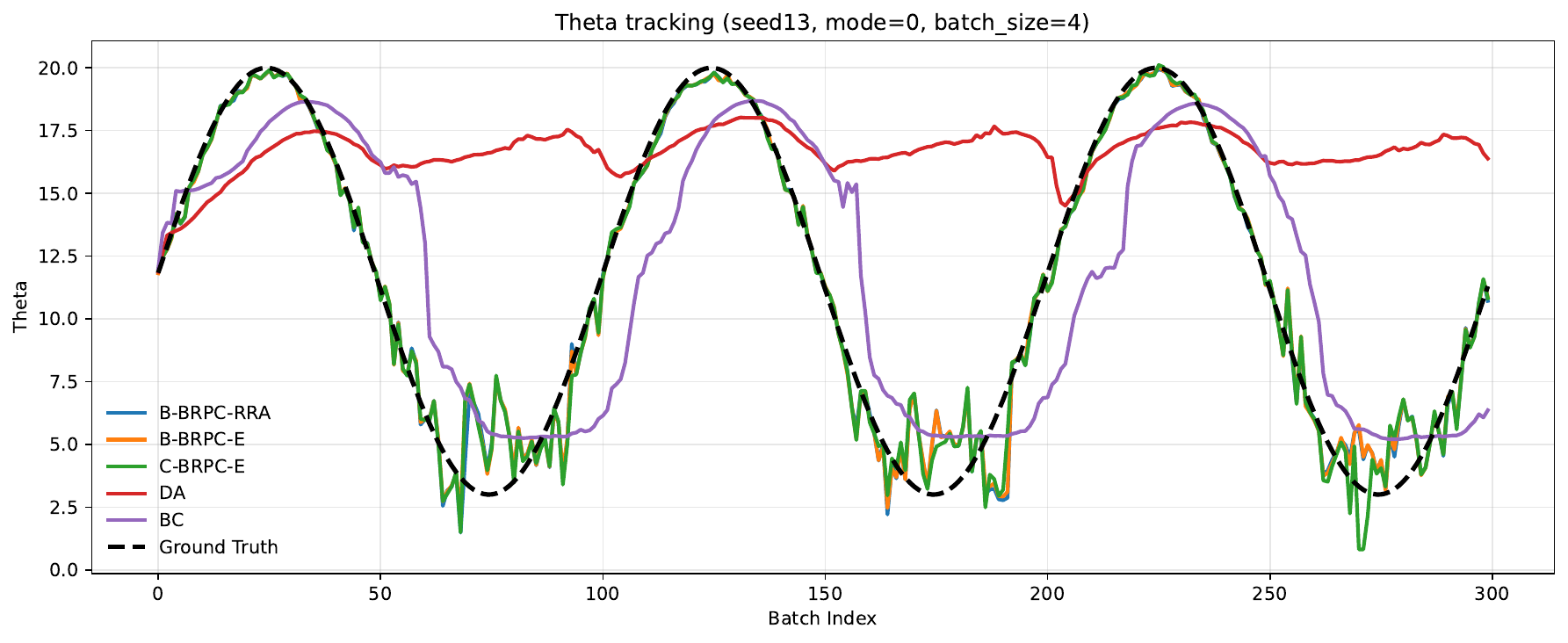}
         \caption{Drifting scenario $\theta$-tracking trajectories}
         \label{fig:ps_drifting_theta_tracking}
     \end{subfigure}
     
     \vspace{1em} %
     \begin{subfigure}[b]{\textwidth}
         \centering
         \includegraphics[width=\textwidth]{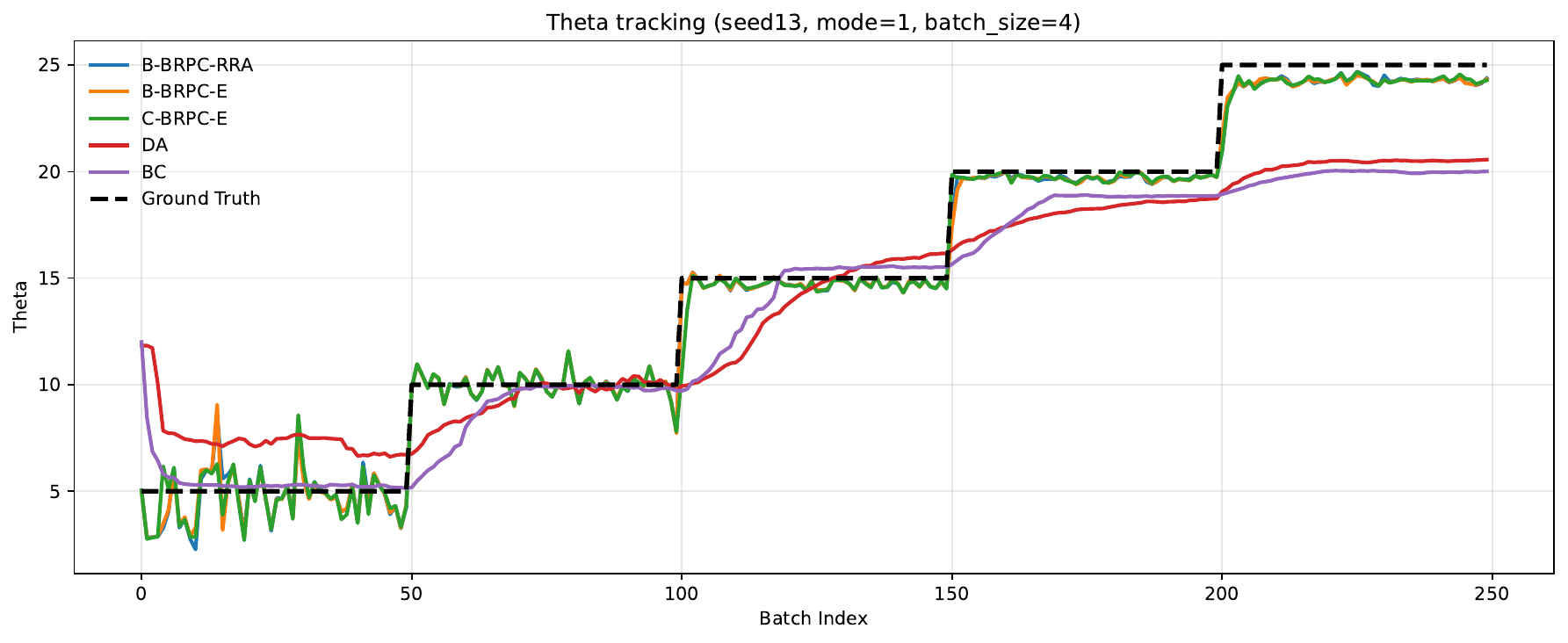}
         \caption{sudden scenario $\theta$-tracking trajectories}
         \label{fig:ps_sudden_theta_tracking}
     \end{subfigure}

     \vspace{1em}
     \begin{subfigure}[b]{\textwidth}
         \centering
         \includegraphics[width=\textwidth]{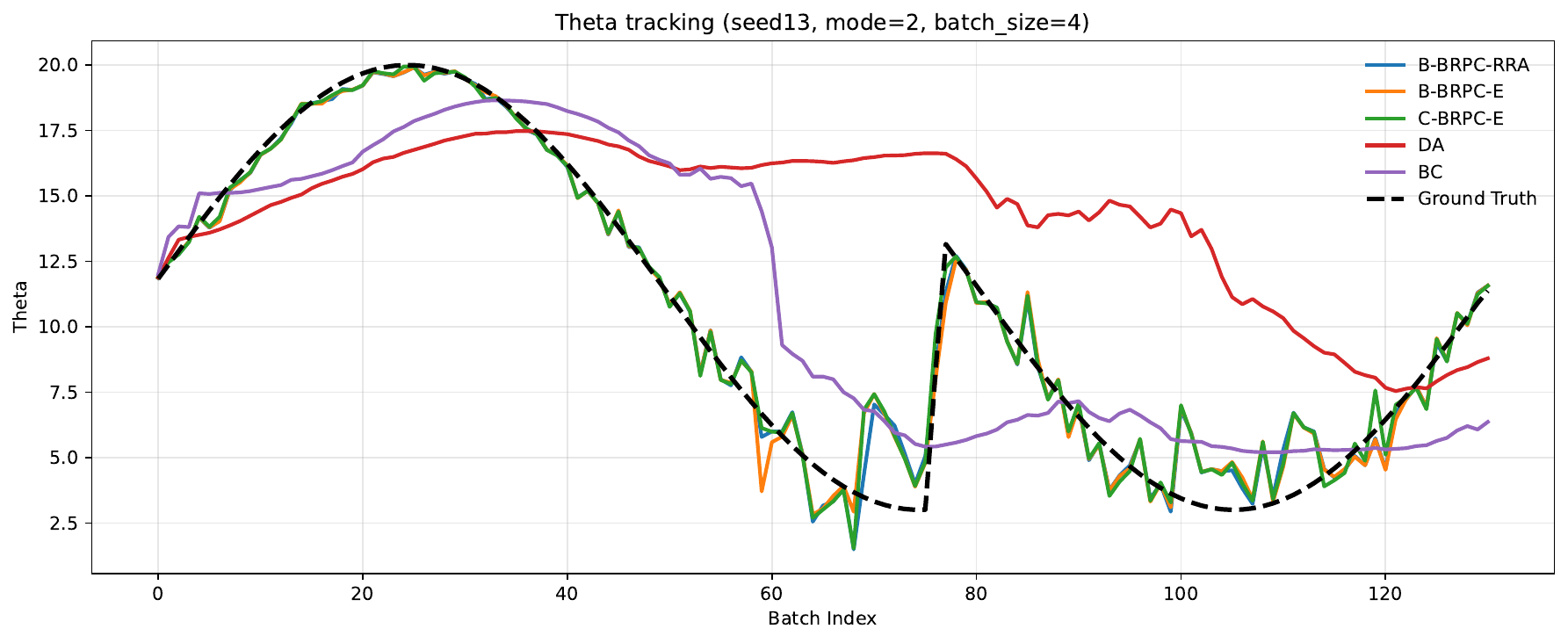}
         \caption{Mixed scenario $\theta$-tracking trajectories}
         \label{fig:ps_mixed_theta_tracking}
     \end{subfigure}
     
     \caption{Bicycle Plant Simulation $\theta$-tracking trajectories for a representative random seed. Each plot shows the true $\theta$ trajectory (black) and the posterior mean $\theta$ trajectory for each method (colored). The vertical dashed lines indicate the true segment boundaries.}
     \label{fig:ps_theta_tracking}
\end{figure}

Figure~\ref{fig:ps_theta_tracking} shows that the same qualitative distinctions persist in the plant-simulation benchmark. These trajectories should be read as representative diagnostics rather than as standalone evidence, but they make the aggregate results easier to interpret: \textbf{C-BRPC-E} typically preserves the tracking quality of the Exact BRPC state while avoiding some of the excess reset activity of \textbf{B-BRPC-E}, whereas \textbf{B-BRPC-RRA} is especially convincing in the sudden mode and more mixed in the drifting and mixed modes. Taken together with the tables, these examples support the paper's broader claim that restart-rule design and discrepancy-state design shape different aspects of online calibration quality.

\FloatBarrier
\subsection{Physical-Projected High-Dimensional Diagnostic}
\label{app:highdim_physical_projected}

This section reports a high-dimensional diagnostic designed to separate three
effects: projected-target tracking, point prediction, and the effect of applying
a hard restart rule. Unlike the main low-dimensional synthetic benchmark, the
physical response here is not generated by adding an orthogonal discrepancy to
the simulator. Instead, the physical response comes from a different nonlinear
function class, and the calibration target is defined by \(L^2\) projection onto
the simulator family. This diagnostic is therefore meant to stress-test the
projected-calibration interpretation rather than to reproduce the exact
low-dimensional benchmark setting.

Let \(x\in[0,1]^{20}\) and let the simulator have \(d_\theta=5\) calibration
parameters. For each time \(t\), the projected calibration target is
\[
\theta_t^\dagger
=
\arg\min_\theta
\mathbb{E}_{X}
\left[
\zeta_t(X)-y_s(X,\theta)
\right]^2,
\]
where \(\zeta_t\) is the physical response and \(y_s(\cdot,\theta)\) is the
misspecified simulator. In the experiment, \(\theta_t^\dagger\) is computed on
a large Sobol reference design, and all reported parameter errors are measured
against \(\theta_t^\dagger\), not against a generative coefficient of the
physical function.

We compare four methods. \textbf{PF-DA} is a particle-filter
generalization with particles over calibration parameters and kernel length
scale~\citep{ward2021continuous}. \textbf{BC} is a short-window sliding Bayesian calibration baseline with
a four-batch window. \textbf{BRPC-F} is the fixed-support BRPC state without
hard restart. \textbf{C-BRPC-F} uses the same fixed-support BRPC state
with a hard wCUSUM restart rule. We use \(K_t=32\), \(N=1024\) particles,
\(M=128\) fixed support points, \(\lambda_\delta=1\), and five random seeds.

\paragraph{Physical-projected data generation.}
The input is \(x\in[0,1]^{20}\), and the simulator has \(d_\theta=5\)
calibration parameters:
\[
y_s(x,\theta)
=
\sum_{j=1}^{5}\theta_j\phi_j(x),
\]
where
\[
\phi_j(x)
=
\sin(2\pi x_j)
+
0.5\cos(2\pi x_{j+5})
+
0.25\sin\{2\pi(x_j+x_{j+10})\}.
\]
The physical response is generated from a different nonlinear function family:
\[
\zeta_t(x)
=
\sum_{j=1}^{5} a_{t,j} g_j(x)
+
0.4 h(x),
\]
where
\[
g_j(x)
=
\sin(2\pi x_j+0.4x_{j+5})
+
0.4\cos(2\pi x_{j+10})
+
0.3x_{j+15}x_j .
\]

The term \(h(x)\) creates locally heterogeneous residual structure:
\[
h(x)
=
\left[
1+
1.5\exp\left(
-\frac{\|x_{1:3}-c\|^2}{2s^2}
\right)
\right]
\sum_{\ell=1}^{L}
\alpha_\ell \sqrt{2}\cos(\omega_\ell^\top x+b_\ell).
\]
Here $a_t\in\mathbb R^5$ is the time-varying physical coefficient vector,
$v_1,v_2,u_1,u_2\in\mathbb R^5$ are fixed direction vectors used to generate
nonstationarity, and $\tau_r$ denotes the $r$th jump time. The vectors
$\omega_\ell\in\mathbb R^{20}$, phases $b_\ell\in\mathbb R$, and coefficients
$\alpha_\ell\in\mathbb R$ are fixed random Fourier-feature parameters. The
constant $\lambda$ in the ridge projection is unrelated to the restart-hazard
scale $\lambda_h$ used elsewhere. Observations are generated as
\[
Y_{t,k}
=
\zeta_t(x_{t,k})
+
\epsilon_{t,k},
\qquad
\epsilon_{t,k}\sim N(0,\sigma^2).
\]

The latent coefficient \(a_t\) controls the nonstationarity of the physical
response, but it is not used as the calibration ground truth. 
Because \(y_s\) is linear in \(\theta\), this target can be computed on a large
reference design. Let \(X_{\rm ref}=\{x_m\}_{m=1}^{M_{\rm ref}}\) be a Sobol
reference design with \(M_{\rm ref}=50{,}000\), and define
\[
\Phi_{mj}=\phi_j(x_m),
\]
and
\[
\zeta_t^{\rm ref}
=
\left(
\zeta_t(x_1),\ldots,\zeta_t(x_{M_{\rm ref}})
\right)^\top .
\]
We compute
\[
\widehat\theta_t^\dagger
=
(\Phi^\top\Phi+\lambda I)^{-1}
\Phi^\top \zeta_t^{\rm ref},
\]
with a small ridge \(\lambda=10^{-6}\). All reported
\(\theta\)-RMSE values in this diagnostic are computed relative to
\(\widehat\theta_t^\dagger\), not relative to \(a_t\).

\paragraph{Nonstationary physical coefficients.}
The time variation is applied to the physical coefficient \(a_t\). In the
drifting scenario,
\[
a_t
=
a_0
+
0.8\frac{t}{T}v_1
+
0.15\sin\left(\frac{2\pi t}{T}\right)v_2 .
\]

In the sudden scenario, \(a_t\) is piecewise constant with changepoints at
\(0.25T\), \(0.50T\), and \(0.75T\), and jump magnitude \(1.2\). In the mixed
scenario,
\[
a_t
=
a_0
+
0.4\frac{t}{T}v_1
+
0.10\sin\left(\frac{2\pi t}{T}\right)v_2
+
\sum_{r=1}^{2}
u_r\mathbf{1}\{t\ge \tau_r\},
\]
with jump magnitude \(1.0\). The projected target
\(\widehat\theta_t^\dagger\) is recomputed for each batch from the corresponding
physical response \(\zeta_t\).

\begin{table}[t]
\centering
\small
\setlength{\tabcolsep}{4.5pt}
\caption{
Physical-projected high-dimensional diagnostic with \(d_x=20\) and
\(d_\theta=5\). Parameter error is measured against the L2 projected target
\(\theta_t^\dagger\). Values are mean \(\pm\) standard deviation across five
seeds. F1@2 is reported only for scenarios with annotated discrete changes.
}
\label{tab:highdim_physical_projected}
\resizebox{\linewidth}{!}{%
\begin{tabular}{llccccc}
\toprule
Scenario & Method
& \(\theta^\dagger\)-RMSE \(\downarrow\)
& \(y\)-RMSE \(\downarrow\)
& \(y\)-CRPS \(\downarrow\)
& Restarts
& F1@2 \(\uparrow\) \\
\midrule

\multirow{4}{*}{Drifting}
& PF-DA
& \(1.0137 \pm 0.2174\)
& \(1.5027 \pm 0.0228\)
& \(0.8834\)
& --
& -- \\
& BC(4)
& \(0.1845 \pm 0.0168\)
& \(\mathbf{1.0828 \pm 0.0396}\)
& \(0.8066\)
& --
& -- \\
& BRPC-F
& \(\mathbf{0.1329 \pm 0.0060}\)
& \(1.1213 \pm 0.1033\)
& \(\mathbf{0.7402}\)
& --
& -- \\
& C-BRPC-F
& \(0.2038 \pm 0.0193\)
& \(1.1830 \pm 0.0264\)
& \(0.7604\)
& \(3.8\)
& -- \\

\midrule
\multirow{4}{*}{Sudden}
& PF-DA
& \(1.0506 \pm 0.1742\)
& \(1.5483 \pm 0.1604\)
& \(0.9231\)
& --
& -- \\
& BC(4)
& \(0.2073 \pm 0.0297\)
& \(\mathbf{1.1223 \pm 0.1157}\)
& \(0.8420\)
& --
& -- \\
& BRPC-F
& \(\mathbf{0.1631 \pm 0.0107}\)
& \(1.2173 \pm 0.1599\)
& \(0.7932\)
& --
& -- \\
& C-BRPC-F
& \(0.2030 \pm 0.0233\)
& \(1.1790 \pm 0.1512\)
& \(\mathbf{0.7426}\)
& \(3.4\)
& \(0.303\) \\

\midrule
\multirow{4}{*}{Mixed}
& PF-DA
& \(0.9893 \pm 0.1943\)
& \(1.4887 \pm 0.0668\)
& \(0.8781\)
& --
& -- \\
& BC(4)
& \(0.1890 \pm 0.0126\)
& \(\mathbf{1.0738 \pm 0.0439}\)
& \(0.8006\)
& --
& -- \\
& BRPC-F
& \(\mathbf{0.1555 \pm 0.0254}\)
& \(1.2031 \pm 0.0378\)
& \(0.7746\)
& --
& -- \\
& C-BRPC-F
& \(0.1951 \pm 0.0395\)
& \(1.1782 \pm 0.0454\)
& \(\mathbf{0.7551}\)
& \(3.8\)
& \(0.297\) \\

\bottomrule
\end{tabular}%
}
\end{table}

Table~\ref{tab:highdim_physical_projected} shows that the fixed-support BRPC
state remains stable in the high-dimensional projected setting. Without hard
restart, \textbf{BRPC-F} achieves the best projected-target tracking in all
three scenario families. This supports the main methodological claim that the
projected BRPC update can track the \(L^2\)-projected calibration target even
when the physical response is generated from a different nonlinear function
class. 

The table also separates projected calibration from point prediction. The
short-window \textbf{BC(4)} baseline achieves the lowest point \(y\)-RMSE,
which is expected because it repeatedly refits on a short local window and can
quickly forget older batches. However, \textbf{BC(4)} is not the best
projected-target tracker and has worse predictive CRPS than \textbf{BRPC-F} or
\textbf{C-BRPC-F}. Thus this diagnostic distinguishes three objectives that can
otherwise be conflated: point prediction, probabilistic prediction, and tracking
of the projected calibration target.

The strict Ward-style particle-filter generalization performs poorly in this
setting. This suggests that directly extending continuous-calibration particle
filtering to a high-dimensional physical-projected stream is not straightforward.
In contrast, \textbf{BRPC-F} retains accurate projected-target tracking because
the parameter update is discrepancy-free and the discrepancy state is estimated
conditionally after the simulator anchor is fixed.

The comparison between \textbf{BRPC-F} and \textbf{C-BRPC-F} should be read as
a diagnostic of restart behavior rather than as a contradiction of the main
benchmark results. The hard wCUSUM restart rule produces nonzero event-level
F1 in the sudden and mixed settings, but it also introduces restarts in the
drifting setting and increases \(\theta^\dagger\)-RMSE across all three
scenarios. This reflects the difficulty of changepoint detection in
high-dimensional input spaces. Batch-to-batch changes in covariate coverage,
local residual difficulty, and finite-sample score variability can produce
pre-update score fluctuations that resemble regime changes. At the same time,
the batch size in this diagnostic is relatively large, so each batch already
contains enough information for the local BRPC update to adapt without a hard
reset.

Thus the high-dimensional diagnostic refines, rather than overturns, the main
conclusion. Restart rules are useful when old parameter and discrepancy
information biases the post-change update, but hard restart is not universally
beneficial. In high-dimensional settings with large batches and strong local
adaptation, a stable projected BRPC state without hard restart can yield better
projected-target tracking, while a restart rule trades additional variance for
some event-level changepoint-detection ability.

\section{Relation to Online and Continual Learning}
\label{app:online_continual_learning}

BRPC is closely related to several themes in online learning, continual
learning, and nonstationary learning, but its objective and statistical
structure are different from the standard formulations in these areas. We
discuss these connections to clarify the scope of the proposed method.

\paragraph{Relation to online learning and dynamic regret.}
Classical online learning studies sequential decision making, where a learner
updates a predictor after observing losses and is evaluated through cumulative
regret \citep{zinkevich2003online,shalev2011online}. In nonstationary
environments, static regret is often too restrictive, and tracking or dynamic
regret compares the learner with a time-varying comparator sequence
\citep{hall2013dynamical,hall2015online}. The tracking result for BRPC has a
similar form: Theorem~\ref{thm:brpc_tracking} compares the recursive
discrepancy mean with an arbitrary reference path and penalizes the propagated
variation of that path. This places the discrepancy update in the same broad
family as online algorithms for dynamic environments.

The key difference is that BRPC is not a generic online prediction algorithm.
The online update is derived from a Bayesian calibration model with a simulator,
a projected calibration parameter, and a conditional discrepancy process. The
parameter update uses a discrepancy-free projected likelihood, while the
discrepancy update is performed only after the simulator anchor has been fixed.
This separation is central to projected calibration and is designed to mitigate
parameter--discrepancy confounding. Standard online learning formulations do not
usually contain this simulator--discrepancy decomposition or the associated
identifiability requirement.

\paragraph{KL-regularized updates and mirror-descent structure.}
The BRPC parameter and discrepancy updates are written as KL-regularized
Bayesian updates. This is related to mirror-descent and proximal online
optimization, where the next iterate balances the current loss with a divergence
from the previous iterate \citep{beck2003mirror,duchi2010composite}. It is also
related to streaming Bayesian inference, where posterior approximations are
updated sequentially as new data arrive \citep{broderick2013streaming}. In
BRPC, however, the KL regularization has a calibration-specific role. It
stabilizes sequential adaptation while preserving the projected-calibration
ordering: the calibration parameter is updated before the discrepancy is learned.
Thus the KL term is not only an optimization device, but also part of the
statistical mechanism that controls how past calibration information is carried
forward.

\paragraph{Relation to concept drift and change-point detection.}
BRPC also connects to concept-drift and change-point detection methods. Concept
drift concerns changes in the data-generating distribution over time, including
gradual drift and abrupt shifts \citep{gama2014survey,webb2016characterizing,
lu2018learning}. Change-point methods instead focus on detecting structural
breaks in a stream, with classical procedures such as CUSUM and Bayesian online
change-point detection providing likelihood-based or score-based evidence for
change \citep{page1954continuous,adams2007bayesian,fearnhead2007line,
saatcci2010gaussian,xie2023window}. The restart components of B-BRPC and
C-BRPC use these ideas: B-BRPC compares restart hypotheses through
BOCPD-style predictive evidence, while C-BRPC monitors a standardized
prequential score using a window-limited CUSUM statistic.

The main difference is that, in BRPC, the evidence used for restart is generated
by an adaptive calibration model rather than by a fixed predictive model. The
predictive likelihood depends on both the projected parameter posterior and the
learned discrepancy posterior. As a result, discrepancy adaptation and restart
decisions interact: an overly concentrated discrepancy posterior can lead to
false restarts, while an overly adaptive discrepancy update can absorb a true
regime shift and delay detection. This interaction is one of the reasons we
analyze restart behavior together with the recursive calibration update, rather
than treating change-point detection as a separate preprocessing step.

\paragraph{Relation to continual learning.}
Continual learning studies sequential learning across tasks or distributions,
with a central focus on retaining useful past information while avoiding
catastrophic forgetting \citep{kirkpatrick2017overcoming,zenke2017continual,
nguyen2018variational,schwarz2018progress,lopez2017gradient,rolnick2019experience,
aljundi2019mir,buzzega2020dark,de2021continual}. BRPC has a related
stability--adaptivity tradeoff. Under gradual drift, past batches remain useful
and the recursive Bayesian update should retain information for stable
calibration. Under an abrupt regime change, however, pre-change parameter and
discrepancy information can become biased for the current regime, and deliberate
forgetting through restart can improve calibration and prediction.

This differs from much of continual learning, where forgetting is usually viewed
as a failure mode and the goal is to preserve performance on previous tasks.
For online calibration under regime changes, forgetting can be beneficial:
discarding pre-change discrepancy information may be necessary to recover the
current projected calibration target. Thus BRPC does not aim to solve continual
learning in full generality. Instead, it addresses a specific continual
adaptation problem in simulator-based Bayesian calibration, where the learner
must decide when to retain past calibration evidence and when to reset it.

\paragraph{Summary.}
BRPC can therefore be viewed as an online Bayesian calibration framework with
continual-adaptation behavior. Its theoretical and algorithmic components are
connected to online learning, mirror descent, streaming Bayesian inference,
concept-drift adaptation, and change-point detection. Its main distinction is
the calibration-specific structure: BRPC separates projected parameter tracking
from conditional discrepancy learning, and couples this identifiable recursive
update with restart mechanisms for mixed gradual and abrupt nonstationarity.

\section{Limitations and Scope}\label{sec:limit}

BRPC is designed for online calibration problems in which simulator mismatch and temporal nonstationarity coexist. Several limitations remain. First, the tracking guarantee for the discrepancy update is conditional on the residual sequence supplied by the projected parameter update, and therefore depends on sufficient particle coverage and an appropriate evolution prior for the calibration parameter. Poorly specified transition noise or severe particle degeneracy may lead to biased residuals and slower discrepancy adaptation.

Second, the restart analyses rely on idealized predictive-score assumptions, such as Gaussian or sub-Gaussian behavior of prequential evidence. In practical streams, covariate shift, finite-batch variability, and non-Gaussian simulator errors may produce score fluctuations that resemble abrupt regime changes, leading to false restarts. This is especially relevant in higher-dimensional settings, where the appendix diagnostics show that hard restart can improve event-level detection while increasing projected-parameter error.

Third, the computational cost depends on the discrepancy representation and restart mechanism. Fixed-support BRPC is the most scalable among the studied variants, whereas residual re-anchoring requires repeated segment-local discrepancy refitting and is better suited to shorter or jump-dominated streams. Fourth, the empirical study uses synthetic streams and a plant-simulation digital-twin benchmark rather than deployed physical systems. Although these benchmarks capture simulator mismatch, gradual drift, and abrupt operating changes, deployment in real digital twins would require additional validation of priors, noise models, restart thresholds, and safety constraints.

These limitations suggest that BRPC should be viewed as a framework for identifiability-preserving online calibration rather than a universal change-point detector. Future work should study adaptive restart thresholds, scalable sparse-GP or neural discrepancy models, and validation on larger real-world digital-twin systems.

\end{document}